\documentclass[11pt]{article}

\usepackage{fullpage,times,url,natbib}

\usepackage{amsthm,amsfonts,amsmath,amssymb,epsfig,color,float,graphicx,verbatim}
\usepackage{algorithm,algorithmic}

\usepackage{hyperref}
\hypersetup{
	colorlinks   = true, 
	urlcolor     = blue, 
	linkcolor    = blue, 
	citecolor   = blue 
}

\newtheorem{theorem}{Theorem}

\newtheorem{lemma}{Lemma}
\newtheorem{corollary}{Corollary}

\newtheorem{definition}{Definition}
\newtheorem{remark}{Remark}

\newcommand{\reals}{\mathbb{R}}
\newcommand{\E}{\mathbb{E}}

\newcommand{\be}{\mathbf{e}}
\newcommand{\bx}{\mathbf{x}}
\newcommand{\bw}{\mathbf{w}}

\newcommand{\bu}{\mathbf{u}}
\newcommand{\bv}{\mathbf{v}}
\newcommand{\bz}{\mathbf{z}}

\newcommand{\by}{\mathbf{y}}

\newcommand{\bepsilon}{\boldsymbol{\epsilon}}

\newcommand{\Ocal}{\mathcal{O}}

\newcommand{\Xcal}{\mathcal{X}}

\newcommand{\Fcal}{\mathcal{F}}
\newcommand{\Hcal}{\mathcal{H}}
\newcommand{\Rcal}{\mathcal{R}}
\newcommand{\Ncal}{\mathcal{N}}

\newcommand{\Tcal}{\mathcal{T}}

\newcommand{\norm}[1]{\|#1\|}
\newcommand{\inner}[1]{\langle#1\rangle}

\newtheorem{example}{Example}

\newcommand{\secref}[1]{Sec.~\ref{#1}}
\newcommand{\subsecref}[1]{Subsection~\ref{#1}}

\renewcommand{\eqref}[1]{Eq.~(\ref{#1})}
\newcommand{\lemref}[1]{Lemma~\ref{#1}}

\newcommand{\thmref}[1]{Thm.~\ref{#1}}



\title{The Sample Complexity of One-Hidden-Layer Neural Networks}
\date{}
\author{
	Gal Vardi\thanks{Toyota Technological Institute at Chicago and the Hebrew University of Jerusalem, \texttt{galvardi@ttic.edu}. Work done while the author was at the Weizmann Institute of Science}
	\and
	Ohad Shamir\thanks{Weizmann Institute of Science, Israel, \texttt{ohad.shamir@weizmann.ac.il}}
	\and
	Nathan Srebro\thanks{Toyota Technological Institute at Chicago, \texttt{nati@ttic.edu}}	
}

\begin{document}

\maketitle

\begin{abstract}
We study norm-based uniform convergence bounds for neural networks, aiming at a tight understanding of how these are affected by the architecture and type of norm constraint, for the simple class of scalar-valued one-hidden-layer networks, and inputs bounded in Euclidean norm. We begin by proving that in general, controlling the spectral norm of the hidden layer weight matrix is insufficient to get uniform convergence guarantees (independent of the network width), while a stronger Frobenius norm control is sufficient, extending and improving on previous work. Motivated by the proof constructions, we identify and analyze two important settings where (perhaps surprisingly) a mere spectral norm control turns out to be sufficient: First, when the network's activation functions are sufficiently smooth (with the result extending to deeper networks); and second, for certain types of convolutional networks. In the latter setting, we study how the sample complexity is additionally affected by parameters such as the amount of overlap between patches and the overall number of patches. 
\end{abstract}

\section{Introduction}

Understanding why large neural networks are able to generalize is one of the most important puzzles in the theory of deep learning. Since sufficiently large neural networks can approximate any function, their success must be due to a strong inductive bias in the learned network weights, which is still not fully understood. 

A useful approach to understand such biases is studying what types of constraints on the network weights can lead to uniform convergence bounds, which ensure that empirical risk minimization will not lead to overfitting. Notwithstanding the ongoing debate on whether uniform convergence can fully explain the learning performance of neural networks \citep{nagarajan2019uniform,negrea2020defense,koehler2021uniform}, these bounds provide us with important insights on what norm-based biases can potentially aid in generalization. For example, for linear predictors, it is well-understood that constraints on the Euclidean norm of the weights imply uniform convergence guarantees independent of the number of parameters. This indicates that minimizing the Euclidean norm (without worrying about the number of parameters) is often a useful inductive bias, whether used explicitly or implicitly, or whether uniform convergence formally holds or not for some specific setup. However, neural networks have a more complicated structure than linear predictors, and we still lack a good understanding of what norm-based constraints imply a good inductive bias. 

In this paper, we study this question in the simple case of scalar-valued one-hidden-layer neural networks, which generally compute functions from $\reals^d$ to $\reals$ of the form
\begin{equation}\label{eq:net}
\bx~\mapsto~ \bu^\top \sigma(W\bx)~,
\end{equation}
with weight matrix $W\in \reals^{n \times d}$, weight vector $\bu$, and a fixed (generally non-linear) activation function $\sigma$. We focus on an Euclidean setting, where the inputs $\bx$ and output weight vector $\bv$ are assumed to have bounded Euclidean norm. Our goal is to understand what kind of norm control on the matrix $W$ is required to achieve uniform convergence guarantees, independent of the underlying distribution and the network width $n$ (i.e., the number of neurons). Previous work clearly indicates that a bound on the spectral norm is generally necessary, but (as we discuss below) does not conclusively imply whether it is also sufficient.

Our first contribution (in \subsecref{subsec:dimfree}) is formally establishing that spectral norm control is generally insufficient to get width-independent sample complexity bounds in high dimensions, by directly lower bounding the fat-shattering number of the predictor class. On the flip side, if we assume that the \emph{Frobenius} norm of $W$ is bounded, then we can prove uniform convergence guarantees, independent of the network width or input dimension. The latter result is based on Rademacher complexity, and extends previous results (e.g., \citep{neyshabur2015norm,golowich2018size}, which crucially required homogeneous activations) to general Lipschitz activations. In \subsecref{subsec:fixdim}, we also prove a variant of our lower bound in the case where the input dimension is fixed, pointing at a possibly interesting regime for which good upper bounds are currently lacking. 

The constructions used in our lower bounds crucially require activation functions which are non-smooth around $0$. Moreover, they employ networks where the matrix $W$ can be arbitrary (as long as its norm is bounded). Motivated by this, we identify and analyze two important settings where these lower bounds can be circumvented, and where a mere spectral norm control \emph{is} sufficient to obtain width-independent guarantees:
\begin{itemize}
    \item The first case (studied in \secref{sec:smooth}) is for networks where the activation function $\sigma$ is sufficiently smooth: Specifically, when it is analytic and the coefficients of its Taylor expansion decay sufficiently rapidly. Some examples include polynomial activations, sigmoidal functions such as the error function, and appropriate smoothings of the ReLU function. Perhaps surprisingly, the smoothness of the activation allows us to prove uniform convergence guarantees that depend only on the spectral norm of $W$ and the structure of the activation function, independent of the network width. Moreover, we can extend our results for deeper networks when the activations is a power function (e.g., quadratic activations). 
    \item A second important case (studied in \secref{sec:conv}) is when the network employs weight-sharing on $W$, as in convolutional networks. Specifically, we consider two variants of one-hidden-layer convolutional networks, one with a linear output layer, and another employing max-pooling. In both cases, we present bounds on the sample complexity that depend only on the spectral norm, and study how they depend on the convolutional architecture of the network (such as the number of patches or their amount of overlap). 
\end{itemize}

Our work leaves open quite a few questions and possible avenues for future research, which we discuss in \secref{sec:conclusions}. All proofs of our results appear in Appendix \ref{app:proofs}.

\subsection*{Related Work}


The literature on the sample complexity of neural networks has rapidly expanded in recent years, and cannot be reasonably surveyed here. In what follows, we discuss only works which deal specifically with the issues we focus on in this paper. 

\textbf{Frobenius vs. spectral norm Control, lower bounds.} Fat-shattering lower bounds for neural networks were developed in \citet{anthony1999neural}, but involve size or dimension dependencies rather than norm control. \citet{bartlett2017spectrally} proved a lower bound on the Rademacher complexity of neural networks, implying that a dependence on the spectral norm is generally necessary. \citet{golowich2018size} extended this to show that a dependence on the network width is also necessary, if only the spectral norm is controlled. However, their construction requires a vector-valued (rather than scalar-valued) output. More importantly, the lower bound is on the Rademacher complexity of the predictor class rather than the fat-shattering dimension, and thus (as we further discuss below) does not necessarily imply that the actual sample complexity with some bounded loss function indeed suffers from such a width dependence. \cite{daniely2019generalization} do provide a fat-shattering lower bound, which implies that neural networks on $\reals^d$ with bounded spectral norm and width at most $d$ can shatter $\tilde{\Omega}(d^2)$ points with constant margin, assuming that the inputs have norm at most $\sqrt{d}$. However, this lower bound does not separate between the input norm bound and the width of the hidden layer (which both scale with $d$), and thus does not clarify the contribution of the network width to the bound. Moreover, their proof technique appears to crucially rely on the input's norm scaling with the dimension, rather than being an independent parameter. 

\textbf{Frobenius vs. spectral norm control, upper bounds.} A width-independent uniform convergence guarantee, depending on the Frobenius norm, has been established in \citet{neyshabur2015norm} for constant-depth networks, and in \citet{golowich2018size} for arbitrary-depth networks. However, these bounds are specific to homogeneous activation functions, whereas we tackle general Lipschitz activations (at least for one-hidden layer networks). Bounds based on other norms include \citet{anthony1999neural,bartlett2017spectrally,liang2016cs229t}, but are potentially more restrictive than the Frobenius norm, or do not lead to width-independence. Also, we note that the bound of \citet{bartlett2017spectrally} has the nice property of depending on the distance to some fixed reference matrix, rather than the norm itself. However, we do not pursue this generalization here as it is not the focus of our work.

\textbf{Sample complexity with smooth activations.} The Rademacher complexity for networks with quadratic activations has been studied in \citet{du2018power}, but assuming Frobenius norm constraints, whereas we show that mere spectral norm constraint is sufficient to bound the Rademacher complexity independent of the network width. The strong influence of the activation function on the sample complexity has been observed in the context of  VC-dimension bounds for binary classification (see \citet[Section 7.2]{anthony1999neural}). However, we are not aware of previous results showing how the smoothness of the activation functions provably affects scale-sensitive bounds such as the Rademacher complexity in our setting.

\textbf{Sample complexity of convolutional networks.}
Norm-based uniform convergence bounds for convolutional networks (including more general ones than the one we study) have been provided in \citet{du2018many,long2019generalization}. However, these bounds either depend on the overall number of parameters, or apply only to average-pooling. For convolutional networks with max-pooling,  \citet{ledent2021norm} provide a norm-based analysis which we build on (see \secref{sec:conv} for details).  
%
\citet{cao2019tight} showed an algorithm-dependent sample complexity of learning one-hidden-layer convolutional networks with non-overlapping filters and general activation functions.
Additional works studying the generalization performance of convolutional networks in settings different than ours include \citet{li2018tighter,arora2018stronger,wei2019improved,hsu2020generalization,pmlr-v161-brutzkus21a}.

\section{Preliminaries}\label{sec:preliminaries}

\textbf{Notation.} 
We use bold-face letters to denote vectors, and let $[m]$ be shorthand for $\{1,\ldots,m\}$. Given a matrix $M$, $M_{i,j}$ is the entry in row $i$ and column $j$. Given a function $\sigma(\cdot)$ on $\reals$, we somewhat abuse notation and let $\sigma(\bx)$ (for a vector $\bx$) or $\sigma(M)$ (for a matrix $M$) denote  applying $\sigma$ element-wise. A special case is when $\sigma(\cdot)=[\cdot]_+=\max\{\cdot,0\}$ is the ReLU function. We use standard big-Oh notation, with $\Omega(\cdot),\Theta(\cdot),\Ocal(\cdot)$ hiding constants and $\tilde{\Omega}(\cdot),\tilde{\Theta}(\cdot),\tilde{\Ocal}(\cdot)$ hiding constants and factors polylogarithmic in the problem parameters.

\textbf{Norms.}
$\norm{\cdot}$ denotes the operator norm: For vectors, it is the Euclidean norm, and for matrices, the spectral norm (i.e., $\norm{M}=\sup_{\bx:\norm{\bx}=1}\norm{M\bx}$).  $\norm{\cdot}_{F}$ denotes the  Frobenius norm (i.e., $\norm{M}_F=\sqrt{\sum_{i,j} M_{i,j}^2}$~). It is well-known that for any matrix $M$, $\norm{M}\leq \norm{M}_F$, so the class of matrices whose Frobenius norm is bounded by some $B$ is a subset of the class of matrices whose spectral norm is bounded by the same $B$. Moreover, if $M$ is an $n\times d$ matrix, then $\norm{M}_F\leq \norm{M}\cdot \sqrt{\min\{n,d\}}$.

\textbf{Network Architecture.} Most of our results pertain to scalar-valued one-hidden-layer networks, of the form $\bx\mapsto \bu^\top \sigma(W\bx)$, where $\bx\in \reals^d$, $W\in \reals^{n\times d}$, $\bu$ is a vector and $\sigma(\cdot)$ is some fixed non-linear function. The \emph{width} of the network is $n$, the number of rows of $W$ (or equivalently, the number of neurons in the hidden layer of the network).

\textbf{Fat-Shattering and Rademacher Complexity.}
When studying lower bounds on the sample complexity of a given function class, we use the following version of its \emph{fat-shattering} dimension:
\begin{definition}
	A class of functions $\Fcal$ on an input domain $\Xcal$ \emph{shatters $m$ points $\{\bx_i\}_{i=1}^{m}\subseteq \Xcal$ with margin $\epsilon$}, if there exist a number $s$, such that for all $\by\in \{0,1\}^m$ we can find some $f\in \Fcal$ such that 
	\[
	\forall i\in [m],~~ f(\bx_i)\leq s-\epsilon ~~~\text{if}~~~ y_i=0~~~\text{and}~~~ f(\bx_i)\geq s+\epsilon~~~\text{if}~~~ y_i=1~.
	\]
	The \emph{fat-shattering dimension} of $\Fcal$ (at scale $\epsilon$) is the cardinality $m$ of the largest set of points in $\Xcal$ for which the above holds. 
\end{definition}
It is well-known that the fat-shattering dimension lower bounds the number of samples needed to learn in a distribution-free learning setting, up to accuracy $\epsilon$ (see for example \citet[Part III]{anthony1999neural}). Thus, by proving the existence of a large set of points shattered by the function class, we get lower bounds on the fat-shattering dimension, which translate to lower bounds on the sample complexity. 

As to upper bounds on the sample complexity, our results utilize the \emph{Rademacher complexity} of a function class $\Fcal$, which for our purposes can be defined as
\[
\Rcal_m(\Fcal)~=~\sup_{\{\bx_i\}_{i=1}^{m}\subseteq \Xcal} \E_{\bepsilon}\left[\sup_{f\in \Fcal}~ \frac{1}{m}\sum_{i=1}^{m}\epsilon_i f_i(\bx_i)\right]~,
\]
where $\bepsilon=(\epsilon_1,\ldots,\epsilon_m)$ is a vector of $m$ independent random variables $\epsilon_i$ uniformly distributed on $\{-1,+1\}$. Upper bounds on the Rademacher complexity directly translate to upper bounds on the sample complexity required for learning $\Fcal$: Specifically, the number of inputs $m$ required to make $\Rcal_m(\Fcal)$ smaller than some $\epsilon$ is generally an upper bound on the number of samples required to learn $\Fcal$ up to accuracy $\epsilon$, using any Lipschitz loss (see \citet{bartlett2002rademacher,shalev2014understanding,mohri2018foundations}). We note that Rademacher complexity bounds can also be easily converted to \emph{margin-based} bounds (where the $0-1$ classification risk is upper-bounded by the proportion of margin violations on the training data) by considering a composition of the hypothesis class with an appropriate ramp loss (which upper bounds the 0-1 loss and lower bounds the margin loss, as was done for example in \cite{bartlett2002rademacher,bartlett2017spectrally}).

We note that although the fat-shattering dimension and Rademacher complexity of the predictor class are closely related, they do no always behave the same: For example, the class of norm-bounded linear predictors $\{\bx\mapsto \inner{\bw,\bx}:\bw\in\reals^d,\norm{\bw}\leq B\}$ has Rademacher complexity $\Theta(B/\sqrt{m})$, implying $\Theta((B/\epsilon)^2)$ samples to make it less than $\epsilon$. In contrast, the fat-shattering dimension of the class is smaller, $\Theta(\min\{d,(B/\epsilon)^2\})$ \citep{anthony1999neural,bartlett2002rademacher}. The reason for this discrepancy is that the Rademacher complexity of the predictor class necessarily scales with the range of the function outputs, which is not necessarily relevant if we use bounded losses (that is, if we are actually interested in the function class of linear predictors composed with a bounded loss). Such bounded losses are common, for example, when we are interested in bounding the expected misclassification error  (see for example \citet{bartlett2002rademacher,bartlett2017spectrally}). For this reason, when considering the predictor class itself, we focus on fat-shattering dimension in our lower bounds, and Rademacher complexity in our upper bounds.

\section{Frobenius Norm Control is Necessary for General Networks}\label{sec:frobenius}

We begin by considering one-hidden-layer networks $\bx\mapsto \bu^\top \sigma(W\bx)$, where $\sigma$ is a function on $\reals$ applied element-wise (such as the ReLU activation function). In Subsection \ref{subsec:dimfree}, we consider the dimension-free case (where we are interested in upper and lower bounds that do not depend on the input dimension $d$). In Subsection \ref{subsec:fixdim}, we consider the case where the dimension $d$ is a fixed parameter.

\subsection{Dimension-Free Bounds}\label{subsec:dimfree}

We focus on the following hypothesis class of scalar-valued, one-hidden-layer neural networks of width $n$ on inputs in $\reals^d$, where $\sigma$ is a function on $\reals$ applied element-wise, and where we only bound the operator norms:
\[
\Hcal_{b,B,n,d}^{\sigma}~:=~ \left\{\bx\mapsto \bu^\top \sigma(W\bx)~:~ \bu\in \reals^n~,~W\in \reals^{n\times d}~,~\norm{\bu}\leq b~,~\norm{W}\leq B\right\}~.
\]

The following theorem shows that if the input dimension is large enough, then under a mild condition on the non-smoothness of $\sigma$ around $0$, the fat-shattering dimension of this class necessarily scales with the network width $n$:

\begin{theorem}\label{thm:lowerbound}
	Suppose that the activation function $\sigma$ (as a function on $\reals$) is $1$-Lipschitz on $[-1,+1]$, and satisfies $\sigma(0)=0$ as well as 
	\begin{equation}\label{eq:sigmaasumption}
		\inf_{\delta\in (0,1)} \left|\frac{\sigma(\delta)+\sigma(-\delta)}{\delta}\right|\geq \alpha
	\end{equation}
	for some $\alpha> 0$.
	
	Then there exist universal constants $c,c'>0$ such that the following hold: For any $b,B,b_x,n,\epsilon>0$, there is some $d_0=\text{poly}(b,B,b_x,n,1/\epsilon)$ such that for any input dimension $d\geq d_0$, $\Hcal^{\sigma}_{b,B,n,d}$ can shatter
	\[
	c \alpha^2 \cdot \frac{(bB b_x)^2 n}{\epsilon^2}
	\]
	points from $\{\bx\in \reals^d:\norm{\bx}\leq b_x\}$ with margin $\epsilon$, provided the expression above is larger than $c'(\frac{1}{\alpha^2}+B^2+n)$.
\end{theorem}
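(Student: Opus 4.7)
The plan is to exhibit, for each labeling of $M = \Theta(\alpha^2(bBb_x)^2 n/\epsilon^2)$ specific points, a network in $\Hcal^\sigma_{b,B,n,d}$ realizing that labeling with margin $\epsilon$. The key device is to use the $n$ hidden units in $n/2$ pairs of rows with opposite signs, so that after combining with $\bu$ the network reduces to a sum involving only the \emph{even part} $\sigma_e(x):=\tfrac{1}{2}(\sigma(x)+\sigma(-x))$. Assumption \eqref{eq:sigmaasumption}, together with $\sigma(0)=0$, is exactly the statement that $\sigma_e(0)=0$ and $|\sigma_e(\delta)|\geq \alpha\delta/2$ for every $\delta\in(0,1)$, so $\sigma_e$ gives a reliable nonlinearity at $0$ even when the odd part of $\sigma$ is purely linear (as for ReLU), which is precisely what forces a dependence on the non-smoothness of $\sigma$ in any such lower bound.

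Concretely, set $m := \lceil c_0 \alpha^2(bBb_x)^2/(n\epsilon^2)\rceil$ for a small universal $c_0$, take $d\geq nm/2$, and place the points on an orthogonal grid $\bx_{i,j} := b_x\,\be_{(i-1)m+j}$ for $i\in[n/2]$, $j\in[m]$. These have norm exactly $b_x$ and live in $n/2$ mutually orthogonal coordinate blocks. For a labeling $\by=(y_{i,j})$, let $\bw_i$ be supported in block $i$ with entries $(B/\sqrt{2m})\cdot \mathbf{1}[y_{i,j}=1]$, let the rows of $W$ be $(\bw_1,-\bw_1,\bw_2,-\bw_2,\ldots,\bw_{n/2},-\bw_{n/2})$, and set $u_k = b/\sqrt n$ for every $k\in[n]$. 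Because the $\bw_i$ live in pairwise orthogonal subspaces and $\|\bw_i\|\leq B/\sqrt 2$, the Gram matrix $WW^\top$ is block-diagonal with $2\times 2$ blocks of spectral radius $2\|\bw_i\|^2\leq B^2$, so $\|W\|\leq B$; the constraint $\|\bu\|=b$ is immediate.

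A direct computation exploiting block orthogonality and $\sigma(0)=0$ gives, for $\delta^* := b_x B/\sqrt{2m}$,
\[
\bu^\top \sigma(W\bx_{i',j'}) \;=\; \tfrac{2b}{\sqrt n}\,\sigma_e(b_x\,w_{i',j'}) \;\in\; \bigl\{\,0,\ \tfrac{2b}{\sqrt n}\sigma_e(\delta^*)\,\bigr\}.
\]
Provided $\delta^*\in(0,1)$, assumption \eqref{eq:sigmaasumption} yields $|\sigma_e(\delta^*)|\geq \alpha\delta^*/2$, and taking the threshold $s$ to be the midpoint of the two possible outputs gives a shattering margin $\epsilon \;=\; (b/\sqrt n)\,|\sigma_e(\delta^*)| \;\geq\; \alpha bBb_x/\sqrt{8nm}$. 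Substituting the choice of $m$ recovers $M = nm/2 = c\,\alpha^2(bBb_x)^2 n/\epsilon^2$ for a universal $c>0$.

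Two minor subtleties are the main places that need care. First, \eqref{eq:sigmaasumption} does not fix the sign of $\sigma_e(\delta^*)$; if it happens to be negative, one realizes a labeling $\by$ by placing the nonzero weights at coordinates where $y_{i,j}=0$ instead of $y_{i,j}=1$, which swaps the two possible output values while keeping the same threshold $s$ valid. Second, one must check that the admissibility conditions $m\geq 1$, $\delta^*\leq 1$, and $d\geq nm/2$ are all implied by the hypothesis $M\geq c'(1/\alpha^2+B^2+n)$ together with $d\geq d_0=\mathrm{poly}(b,B,b_x,n,1/\epsilon)$; in particular $M\gtrsim nB^2$ forces $m\gtrsim b_x^2 B^2$ and hence $\delta^*\leq 1$, while the other conditions fall out similarly by choosing $c'$ and $d_0$ large enough. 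Neither issue is a real obstacle --- the core of the argument is the pairing construction and the resulting collapse of the network to a function of the even part of $\sigma$.
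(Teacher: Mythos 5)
There is a genuine gap: your construction shatters $\Theta(\alpha^2(bBb_x)^2/\epsilon^2)$ points, \emph{not} the claimed $\Theta(\alpha^2(bBb_x)^2 n/\epsilon^2)$. You write ``Substituting the choice of $m$ recovers $M=nm/2 = c\,\alpha^2(bBb_x)^2 n/\epsilon^2$,'' but with $m=\lceil c_0\alpha^2(bBb_x)^2/(n\epsilon^2)\rceil$ one has $M=nm/2\approx c_0\alpha^2(bBb_x)^2/(2\epsilon^2)$; the $n$ cancels. Tracing it back: your margin is $(b/\sqrt n)|\sigma_e(\delta^*)|\geq \alpha bBb_x/\sqrt{8nm}$, so the admissibility condition is $nm\lesssim \alpha^2(bBb_x)^2/\epsilon^2$, which forces $M=nm/2\lesssim \alpha^2(bBb_x)^2/\epsilon^2$ — a factor $n$ short of the theorem. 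The theorem's whole point is precisely this $n$ dependence (it shows spectral-norm control alone cannot be width-independent), so losing it is not a constant-factor slip but the heart of the result.

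The structural reason you lose the $n$ is that in your block construction each input $\bx_{i,j}$ activates only the single pair of rows $(\bw_i,-\bw_i)$, so the output picks up only two $\sigma$-terms and the $\bu=\tfrac{b}{\sqrt n}\mathbf{1}$ prefactor incurs an uncompensated $1/\sqrt n$ penalty. The paper's construction avoids this: it sets $W=\tfrac{\delta}{b_x^2}V\mathrm{diag}(\by)X^\top$ with $V$ a (nearly-balanced) random $\pm 1$ matrix, so that for a positively-labeled $\bx_i$ \emph{all} $n$ hidden units fire with argument $\pm\delta$, yielding output $\approx b\sqrt n\cdot\tfrac{\sigma(\delta)+\sigma(-\delta)}{2}$ — a $\sqrt n$ boost rather than a $1/\sqrt n$ penalty. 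The price is that the spectral norm is no longer obviously small, and this is where the key ingredient you are missing comes in: random $\pm1$ matrices of size $n\times m$ have $\|V\|\lesssim\sqrt n+\sqrt m$ (not $\sqrt{nm}$), which lets $\delta$ be taken as large as $\Theta(Bb_x/(\sqrt n+\sqrt m))$ while keeping $\|W\|\leq B$. Your ``even part of $\sigma$'' observation is exactly the right lens, and your clean block-diagonal spectral-norm bound is valid, but without the random-matrix concentration trick the construction simply cannot reach the advertised $n$-dependence. To salvage the approach you would need each input to be seen by $\Theta(n)$ hidden units with mixed signs while still keeping $\|W\|\leq B$; that is essentially the paper's construction.
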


To understand the condition in \eqref{eq:sigmaasumption}, suppose that $\sigma$ has a left-hand derivative $\sigma'_-(0)$ and right-hand derivative $\sigma'_+(0)$ at $0$. Recalling that $\sigma(0)=0$, the condition stated in the theorem implies that
\[
\left|\frac{\sigma(\delta)-\sigma(0)}{\delta}-\frac{\sigma(0)-\sigma(-\delta)}{\delta}\right| ~\geq~ \alpha
\]
for all $\delta>0$. In particular, as $\delta\rightarrow 0$, we get $|\sigma'_+(0)-\sigma'_-(0)| > \alpha$. Thus, $\sigma$ is necessarily non-differentiable at $0$. For example, the ReLU activation function satisfies the assumption in the theorem with $\alpha=1$, and the leaky ReLU function $\sigma(z)=\beta z+(1-\beta)[z]_+$ (with parameter $\beta$) satisfies the assumption with $\alpha=1-\beta$.

\begin{remark}\label{remark:bias}
	The assumption $\sigma(0)= 0$ is without much loss of generality: If $\sigma(0)\neq 0$, then let $\hat{\sigma}(z):=\sigma(z)-\sigma(0)$ be a centering of $\sigma$, and note that our predictors can be rewritten in the form $\bx\mapsto \bu^\top \hat{\sigma}(W\bx)+\sigma(0)\cdot\bu^\top \mathbf{1}$. Thus, our hypothesis class is contained in the hypothesis class of predictors of the form $\bx\mapsto \bu^\top \hat{\sigma}(W\bx)+r$ for some  bounded bias parameter $r\in \reals$. This bias term does not change the fat-shattering dimension, and thus is not of much interest. 
\end{remark}

The theorem implies that with only spectral norm control (i.e. where $\norm{\bu}, \norm{W}$ is bounded), it is impossible to get  bounds independent of the width of the network $n$. Initially, the lower bound might appear surprising, since if the activation function $\sigma$ is the identity, $\Hcal^{\sigma}_{b,B,n,d}$ simply contains linear predictors of norm $\leq bB$, for which the sample complexity / fat-shattering dimension is well known to be $\Ocal(bB/\epsilon^2)$ in high input dimensions, completely independent of $n$ (see discussion in the previous section). Intuitively, the extra $n$ term in the lower bound comes from the fact that for random matrices $M$, $\norm{\sigma(M)}$ can typically be much larger than $\norm{M}$, even when $\sigma$ is a Lipschitz function satisfying $\sigma(0)=0$. To give a concrete example, if $M$ is an $n\times n$ matrix with i.i.d. entries uniform on $\{\pm \frac{1}{\sqrt{n}}\}$, then standard concentration results imply that $\E[\norm{M}]$ is upper-bounded by a universal constant independent of $n$, yet the matrix $\sigma(M)$ (where $\sigma$ is entry-wise absolute value) satisfies $\norm{\sigma(M)}=\sqrt{n}$ (since $\sigma(M)$ is just the constant matrix with value $\frac{1}{\sqrt{n}}$ at every entry). The formal proof (in the appendix) relies on constructing a network involving random weights, so that the spectral norm is small yet the network can return sufficiently large values due to the non-linearity.

\begin{remark}
\thmref{thm:lowerbound} has an interesting connection to the recent work of \citet{bubeck2021law}, which implies that in order to fit $m$ points with bounded norm using a width-$n$ one-hidden-layer neural network $\bx\mapsto \bv^\top \sigma(W\bx)$, the Lipschitz constant of the network (and hence $\norm{\bv}\cdot \norm{W}$) must be generally at least $\Omega(\sqrt{m/n})$. The lower bound in \thmref{thm:lowerbound} implies a related statement in the opposite direction: If we allow $\norm{\bv}\cdot \norm{W}$ to be sufficiently larger than $\sqrt{m/n}$, then there exist $m$ points that can be shattered with constant margin. Thus, we seem to get a good characterization of the expressiveness of one-hidden layer neural networks on finite datasets, as a function of their width and the magnitude of the weights. 
\end{remark}

Considering the lower bound, and noting that $B^2 n$ is an upper bound on $\norm{W}_F^2$ which is tight in the worst-case, the bound suggests that a control over the \emph{Frobenius norm} $\norm{W}_F$ would be sufficient to get width-independent bounds. Indeed, such results were previously known when $\sigma$ is the ReLU function, or more generally, a positive-homogeneous function of degree $1$ \citep{neyshabur2015norm,golowich2018size}, with the proofs crucially relying on that property. In what follows, we will prove such a result for general Lipschitz functions (at least for one-hidden layer networks).

Specifically, consider the following hypothesis class, where the previous spectral norm constraint on $W$ is replaced by a Frobenius norm constraint:
\[
\Fcal_{b,B,n,d}^{\sigma}~:=~ \left\{\bx\mapsto \bu^\top \sigma(W\bx)~:~ \bu\in \reals^n~,~W\in \reals^{n\times d}~,~\norm{\bu}\leq b~,~\norm{W}_F\leq B\right\}~.
\]

\begin{theorem}\label{thm:frobupbound}
	Suppose $\sigma(\cdot)$ (as a function on $\reals$) is $L$-Lipschitz and $\sigma(0)=0$. Then for any $b,B,b_x,n,d,\epsilon>0$, the Rademacher complexity of $\Fcal_{b,B,n,d}^{\sigma}$ on $m$ inputs from $\{\bx\in \reals^d:\norm{\bx}\leq b_x\}$ is at most $\epsilon$, if 
	\[
	m~\geq~c\cdot \frac{(bBb_xL)^2(1+\log^3(m))}{\epsilon^2}
	\]
	for some universal constant $c>0$. Thus, it suffices to have $m=\tilde{\Ocal}\left(\left(\frac{bBb_x L}{\epsilon}\right)^2\right)$.
\end{theorem}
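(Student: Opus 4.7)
The plan is to reduce the Rademacher complexity to a scalar sub-Gaussian process indexed by a scale parameter, and then chain dyadically in that scale.

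First, dualizing the output layer: since $\sup_{\|\bu\|\leq b}\bu^\top\bv = b\|\bv\|$, we obtain
\[
m\,\Rcal_m(\Fcal_{b,B,n,d}^{\sigma}) \;=\; b\cdot\E_\bepsilon\sup_{\|W\|_F\leq B}\Big\|\sum_{i=1}^m \epsilon_i\,\sigma(W\bx_i)\Big\|.
\]
Writing $S(\bw):=\sum_i\epsilon_i\sigma(\bw^\top\bx_i)$, the vector inside the norm has $j$-th entry $S(W_j)$, depending only on the $j$-th row of $W$. Using the Frobenius constraint $\sum_j\|W_j\|^2\leq B^2$, the inner supremum decouples across rows:
\[
\sup_{\|W\|_F\leq B}\Big\|\sum_i\epsilon_i\sigma(W\bx_i)\Big\|^2 \;=\; \sup_{\sum_j\beta_j^2\leq B^2}\sum_j M(\beta_j)^2,\qquad M(\beta):=\sup_{\|\bw\|\leq\beta}|S(\bw)|.
\]

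Second, control $M(\beta)$ uniformly in $\beta$. For each fixed $\beta$: (i) by the Ledoux--Talagrand contraction principle applied to the $L$-Lipschitz $\sigma$ and the linear class $\{\bw^\top\cdot:\|\bw\|\leq\beta\}$, together with $\E\|\sum_i\epsilon_i\bx_i\|\leq b_x\sqrt m$, we get $\E M(\beta)\leq cL\beta b_x\sqrt m$; (ii) changing one sign $\epsilon_i$ alters $M(\beta)$ by at most $2L\beta b_x$, so by McDiarmid's inequality $M(\beta)$ is sub-Gaussian around its mean with parameter $O(L\beta b_x\sqrt m)$. A dyadic union bound over $\beta_k = B\cdot 2^{-k}$ with polynomially-decaying failure probabilities $\delta_k = \delta/(k+1)^2$, combined with the monotonicity of $M$ between adjacent scales, yields
\[
M(\beta) \;\leq\; cL\beta b_x\sqrt{m\bigl(1+\log\log(B/\beta)+\log(1/\delta)\bigr)}\qquad\text{simultaneously for all }\beta\in(0,B],
\]
with probability at least $1-\delta$.

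Substituting into $\sum_j M(\beta_j)^2$ and optimizing over $\{\beta_j\}$ with $\sum_j\beta_j^2\leq B^2$, then integrating the sub-Gaussian tail via $\E Z^2=\int_0^\infty\Pr[Z^2>t]\,dt$, yields $\E\sup_W\|\sum_i\epsilon_i\sigma(W\bx_i)\|\leq cLBb_x\sqrt m\,\log^{3/2}m$. Therefore $\Rcal_m(\Fcal)\leq c\,bLBb_x\log^{3/2}(m)/\sqrt m$, and setting this $\leq\epsilon$ and solving for $m$ gives the claimed bound $m=\tilde{\Ocal}\bigl((bBb_xL/\epsilon)^2\bigr)$.

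\emph{Main obstacle:} The hard part is ensuring Step~2 is genuinely width-independent. A naive union bound can introduce a $\log\log n$ overhead when many small-$\|W_j\|$ rows accumulate, since the number of ``active'' rows is bounded only by $n$. Overcoming this requires either generic chaining with a dimension-free covering of the image set $\{W\mathbf{X}^\top:\|W\|_F\leq B\}$ (an ellipsoid of rank $\leq\min(d,m)$), or an integral representation $\sigma(z) = \sigma'(0)z + \int w(t)[z-t]_+\,dt$ with the signed measure $w$ of total variation $O(L)$; applying the $1$-homogeneous Rademacher bound of \citet{golowich2018size} to each shifted-ReLU integrand and invoking Jensen's inequality gives a width-independent outer bound, after which the three logarithmic factors arise from (i) the dyadic union in scale, (ii) the worst-case allocation $\sum_j\beta_j^2\log\log(B/\beta_j)=O(B^2\log m)$, and (iii) converting the sub-Gaussian tail to an expectation.
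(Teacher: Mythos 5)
Your proof shares the paper's opening moves (dualize the output layer, then decouple the row-norm budget $\sum_j\beta_j^2\leq B^2$ and study $M(\beta)=\sup_{\norm{\bw}\leq\beta}|\sum_i\epsilon_i\sigma(\bw^\top\bx_i)|$), and you have correctly put your finger on the central difficulty — getting a bound that does not degrade with the number of rows $n$. But the proof as written does not clear that difficulty, and the suggested fixes do not repair it.

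The concrete gap is in the allocation step. After chaining you have, with high probability, $M(\beta)\leq cL\beta b_x\sqrt{m(1+\log\log(eB/\beta)+\log(1/\delta))}$ for all $\beta$, and you then assert that $\sup\{\sum_j\beta_j^2\log\log(eB/\beta_j):\sum_j\beta_j^2\leq B^2\}=O(B^2\log m)$. This is false: taking $n$ equal rows $\beta_j=B/\sqrt{n}$ gives $B^2\log\log(e\sqrt{n})$, which is unbounded as $n\to\infty$. Thus the resulting Rademacher bound picks up a $\sqrt{\log\log n}$ factor and is not width-independent, which is exactly what the theorem needs to rule out. Your proposed escape via the integral representation $\sigma(z)=\sigma'(0)z+\int w(t)[z-t]_+\,dt$ with $\norm{w}_{TV}=O(L)$ is also incorrect for general Lipschitz $\sigma$: the total variation of $\sigma''$ is \emph{not} controlled by the Lipschitz constant of $\sigma$ (e.g., $\sigma(z)=\sin(Kz)/K$ is $1$-Lipschitz but has $\int|\sigma''|\to\infty$ as $K\to\infty$), so this route fails without additional smoothness assumptions that the theorem does not make.

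The paper avoids the issue with a re-normalization rather than chaining in scale. Writing each row as $\bw_j=v_j\tilde\bw_j$ with $\norm{\tilde\bw_j}\leq1$ and factoring $\sigma(v_j\tilde\bw_j^\top\bx_i)=v_j\,\psi_{v_j}(\tilde\bw_j^\top\bx_i)$ where $\psi_v(z):=\sigma(vz)/v$, one observes that $\psi_v$ is $L$-Lipschitz with $\psi_v(0)=0$ \emph{uniformly in} $v$. In your notation, this says $M(\beta)/\beta\leq\sup_{\psi\in\Psi_L,\norm{\bw}\leq1}|\sum_i\epsilon_i\psi(\bw^\top\bx_i)|$ for every $\beta$, with no union bound or chaining over scales; consequently $\sum_j M(\beta_j)^2=\sum_j\beta_j^2\bigl(M(\beta_j)/\beta_j\bigr)^2\leq B^2\sup_\beta\bigl(M(\beta)/\beta\bigr)^2$. (The paper phrases this as an auxiliary-vector $\bv'$ relaxation whose supremum collapses to a single row, but it is the same inequality.) This reduces everything to the Rademacher complexity of the class of compositions of univariate $L$-Lipschitz functions with unit-norm linear functionals, and the $\log^{3/2}(m)$ then comes from invoking Theorem~4 of \citet{golowich2017size} rather than from a dyadic argument. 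So the missing ingredient in your proof is precisely the normalization $\psi_v(z)=\sigma(vz)/v$, which turns the scale parameter into a Lipschitz-function parameter and eliminates the width dependence at the source.
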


The bound is indeed independent of the network width $n$. Also, the result (as an upper bound on the Rademacher complexity) is clearly tight up to log-factors, since in the special case where $\sigma(z)=L\cdot z$ and we fix $\bu=b\cdot \be_1$, then $\Fcal_{b,B,n,d}^{\sigma}$ reduces to the class of linear predictors with Euclidean norm at most $bBL$ (on data of norm at most $b_x$), whose Rademacher complexity matches the bound above up to log-factors.

\begin{remark}[Connection to Implicit Regularization]
It was recently proved that training neural networks employing homogeneous activations on losses such as the logistic loss, without any explicit regularization, gradient methods are implicitly biased towards models which minimize the squared Euclidean norm of their parameters \citep{lyu2019gradient,ji2020directional}. In our setting of one-hidden-layer networks $\bx\mapsto \bu^\top \sigma(W\bx)$, this reduces to $\norm{\bu}^2+\norm{W}_F^2$. For homogeneous activations, multiplying $\bu$ by some scalar $\alpha$ and dividing  $W$ by the same scalar leaves the network unchanged. Based on this observation, and the fact that $\min_{\alpha \in \reals} \norm{\alpha \bu}^2+\norm{\frac{1}{\alpha}W}_F^2 = 2\norm{\bu}\cdot\norm{W}_F$, it follows that minimizing $\norm{\bu}^2+\norm{W}_F^2$ (under any constraints on the network's outputs) is equivalent to minimizing $\norm{\bu}\cdot\norm{W}_F$ (under the same constraints). Thus, gradient methods are biased towards models which minimize our bound from \thmref{thm:frobupbound} in terms of the norms of $\bu,W$.
\end{remark}

\subsection{Dimension-Dependent Lower Bound}\label{subsec:fixdim}

The bounds presented above are dimension-free, in the sense that the upper bound holds for any input dimension $d$, and the lower bound applies once $d$ is sufficiently large. However, for neural networks the case of $d$ being a fixed parameter is also of interest, since we often wish to apply large neural networks on inputs whose dimensionality is reasonably bounded (e.g., the number of pixels in an image). 

For fixed $d$, and for the predictor class itself (without an additional loss composed), it is well-known that there can be a discrepancy between the fat-shattering dimension and the Rademacher complexity, even for linear predictors (see discussion in \secref{sec:preliminaries}). Thus, although \thmref{thm:frobupbound} is tight as a bound on the Rademacher complexity, one may conjecture that the fat-shattering dimension (and true sample complexity for bounded losses) is actually smaller for fixed $d$. 

In what follows, we focus on the case of the Frobenius norm, and provide a dimension-dependent lower bound on the fat-shattering dimension. We first state the result for a ReLU activation with a bias term (\thmref{thm:lowerbound_refined_frobenius}), and then extend it to the standard ReLU activation under a slightly more stringent condition (Corollary \ref{cor:lowerbound_refined_frobenius}). 

\begin{theorem}\label{thm:lowerbound_refined_frobenius}
	For any $b,B,b_x,n,\epsilon$, and any $d$ larger than some universal constant, there exists a choice of $\beta\in [0,\tilde{\Ocal}(\frac{Bb_x}{\sqrt{dn}})]$ such that the following hold: If $\sigma(z)=[z-\beta]_+$, then $\Fcal^{\sigma}_{b,B,n,d}$ can shatter
	\begin{equation}\label{eq:thmlowerboundrefined}
	\tilde{\Omega}\left(\min\left\{nd, \frac{bB b_x}{\epsilon}\sqrt{d}\right\}\right)
	\end{equation}
	points from $\{\bx\in \reals^d:\norm{\bx}\leq b_x\}$ with margin $\epsilon$, assuming the expression above is larger than $cd$ for some universal constant $c>0$, and where $\tilde{\Omega}$ hides factors polylogarithmic in $d,n,b,B,b_x,\frac{1}{\epsilon}$. 
\end{theorem}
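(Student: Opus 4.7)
The plan is a probabilistic construction: sample $m=\tilde{\Theta}(\min\{nd,\sqrt{d}\,bBb_x/\epsilon\})$ points i.i.d.\ uniformly on the sphere of radius $b_x$, fix a single bias $\beta$ up front (depending only on the class parameters), and then for each labeling exhibit a deterministic network in $\Fcal_{b,B,n,d}^{\sigma}$ that realizes it with margin $\epsilon$.

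First I would isolate the only randomness-dependent statement needed. By sub-Gaussian tails for inner products of random unit vectors together with a union bound, with probability at least $1/2$ the sampled points $\bx_1,\ldots,\bx_m$ simultaneously satisfy: (a) $|\bx_i\cdot\bx_j|\le\tilde{\Ocal}(b_x^2/\sqrt{d})$ for all $i\ne j$; and (b) for every $T\subseteq[m]$ with $|T|\le k:=d/\mathrm{polylog}(m)$, $\|\sum_{i\in T}\bx_i\|^2=(1\pm o(1))|T|b_x^2$ and $|\langle\sum_{i\in T}\bx_i,\bx_j\rangle|\le\tilde{\Ocal}(b_x^2\sqrt{|T|/d})$ for every $j\notin T$. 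This is the only random-matrix ingredient used in the rest of the argument.

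Next, define $n^\star:=\min\{n,\tilde{\Theta}(bBb_x/(\epsilon\sqrt{d}))\}$ and fix the bias $\beta\asymp Bb_x/\sqrt{n^\star d}$, so that $\beta\le\tilde{\Ocal}(Bb_x/\sqrt{nd})$ as required. Given a labeling $\by$, partition the positive set $S=\{i:y_i=1\}$ into $n^\star$ approximately balanced subgroups $S_1,\ldots,S_{n^\star}$ of size at most $k$ (possible because $m\le n^\star k$ by construction), and set $\bw_i:=\alpha\sum_{j\in S_i}\bx_j$ with $\alpha=\Theta(B/(b_x\sqrt{n^\star k}))$ for $i\le n^\star$ and $\bw_i=0$ otherwise, together with $u_i:=b/\sqrt{n^\star}$ for $i\le n^\star$ and $0$ otherwise. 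Using (a)--(b), $\|W\|_F\le B$ and $\|\bu\|\le b$; the pre-activation $\bw_i\cdot\bx_j\approx\alpha b_x^2$ for $j\in S_i$ cleanly exceeds $\beta$, while for $j\notin S_i$ it is of magnitude $\tilde{\Ocal}(\alpha b_x^2\sqrt{k/d})<\beta$, so only the neuron ``owning'' $j$ fires; the output is $u_i(\alpha b_x^2-\beta)=\Theta(bBb_x/(n^\star\sqrt{d}))\ge 2\epsilon$ on $S$ and $0$ on $S^c$, yielding shattering with threshold $s=\epsilon$. The two cases in the $\min$ correspond to which of the two constraints defining $n^\star$ is binding.

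The main obstacle is that $\beta$ must be chosen without knowledge of $\by$, while the signal/noise trade-off $\alpha b_x^2$ vs.\ $\alpha b_x^2\sqrt{k/d}$ a priori depends on the group size. This is resolved by pinning $k$ and $n^\star$ to the class parameters up front; the partition of $S$ adapts to $\by$, but each individual group has size at most $k$ so all signal and noise estimates are uniform in the labeling. A secondary technical point is the polylogarithmic bookkeeping: the $\tilde{\Ocal}$ factors in (a)--(b) are what force $k\le d/\mathrm{polylog}(m)$ and thus reduce the raw parameter count $nd$ to $\tilde\Omega(nd)$ inside the $\min$; balancing these factors is the only step that requires care.
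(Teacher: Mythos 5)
Your proposed concentration property (b) does not hold and this is not merely a matter of polylogarithmic bookkeeping. Write $c_{j,i} := \bx_i^\top \bx_j / b_x^2$, which (given the sphere-sampling) are roughly centered with typical magnitude $\Theta(\sqrt{\log m / d})$ after a union bound over pairs. Your claim is that $\bigl|\sum_{i\in T} c_{j,i}\bigr| \le \tilde{\Ocal}(\sqrt{|T|/d})$ simultaneously for \emph{every} $T$ with $|T|\le k$ and every $j\notin T$. But the adversary who chooses the labeling $\by$ (and thus, downstream, the group $S_i$) is free to pick $T$ to consist of exactly the indices $i$ for which $c_{j,i}$ has large positive sign, so
\[
\sup_{|T|\le k}\Bigl|\sum_{i\in T}c_{j,i}\Bigr| \;\asymp\; k\cdot\max_i|c_{j,i}| \;\asymp\; k\sqrt{\tfrac{\log m}{d}},
\]
which is larger than $\sqrt{k/d}$ by a factor of $\sqrt{k}$. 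Since the group size $k$ in your construction is $d/\mathrm{polylog}$, this gap is dimension-sized: the noise term $\tilde{\Ocal}(\alpha b_x^2\sqrt{k/d})$ you want to dominate with the bias $\beta$ is in fact $\tilde{\Theta}(\alpha b_x^2 k/\sqrt{d})$, which already exceeds the signal $\alpha b_x^2$ for $k\gg\sqrt{d}$. Pushing your construction through with the corrected noise forces $k\lesssim\sqrt{d}$ and yields only $m = \tilde{\Ocal}\bigl(\min\{n\sqrt{d},\, d^{1/4}bBb_x/\epsilon\}\bigr)$, which is a $\sqrt{d}$ (resp.\ $d^{1/4}$) factor short of the theorem.

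The missing ingredient is fresh randomness \emph{in the weight matrix itself}, re-sampled per labeling. The paper's proof builds $W = \tfrac{1}{b_x^2}V\,\mathrm{diag}(\by)X^\top$ where $V\in\reals^{n\times m}$ has i.i.d.\ entries taking values $0$ (w.p.\ $1-p$) and $\pm a$ (each w.p.\ $p/2$). Because the signs and sparsity pattern of $V$ are independent of the data and of the labeling, for each fixed $\by$ the cross-terms $\sum_{i'\ne i}\tfrac{1}{b_x^2}V_{k,i'}y_{i'}\bx_{i'}^\top\bx_i$ are sums of \emph{independent, mean-zero} random variables, so Bernstein's inequality gives concentration at the $\sqrt{pm}\cdot a\sqrt{\log d/d}$ scale; the union bound is only over the $m\cdot n$ pairs (neuron, data point), contributing only $\log(mn)$, not over $\binom{m}{\le k}$ subsets. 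The shattering matrix $V$ is then \emph{chosen} (via the probabilistic method) to depend on $\by$, so no uniformity over labelings is ever required. Your deterministic $\bw_i = \alpha\sum_{j\in S_i}\bx_j$ has no analogue of this: once the data are fixed, the sums are fixed and the adversarial choice of $S_i$ can align them; a random (re-sampled per $\by$) balanced partition also fails here because the cross-terms $\bx_l^\top\bx_j$ are not themselves centered. You would need to introduce independent random $\pm 1$ coefficients in front of the $\bx_j$'s, but then the signal $\alpha\xi_j b_x^2$ becomes sign-indefinite under a ReLU with a single shared output weight $u_i$, and handling that cleanly essentially recovers the paper's construction.

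A secondary issue: your fixed bias $\beta\asymp Bb_x/\sqrt{n^\star d}$ can exceed the theorem's stated constraint $\beta=\tilde{\Ocal}(Bb_x/\sqrt{dn})$ whenever $n^\star<n$, i.e.\ in the regime where the norm bound rather than the parameter count is the binding constraint. This matters because \corollaryref{cor:lowerbound_refined_frobenius} uses the smallness of $\beta$ to absorb the bias into an extra input coordinate while keeping $\norm{W}_F\le 2B$; a $\beta$ as large as yours would blow up the augmented Frobenius norm.
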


\begin{corollary}\label{cor:lowerbound_refined_frobenius}
The lower bound of \thmref{thm:lowerbound_refined_frobenius} also holds for the standard ReLU activation $\sigma(z)=[z]_+$, if $\beta\leq \frac{Bb_x}{\sqrt{n}}$ (which happens if the input dimension $d$ is larger than a factor polylogarithmic in the problem parameters).
\end{corollary}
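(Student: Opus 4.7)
The plan is to reduce the standard-ReLU case to the biased-ReLU case treated in \thmref{thm:lowerbound_refined_frobenius} via an input-augmentation trick. The core identity is that, for any $c > 0$,
\[
\bigl[\inner{\bw,\bx} - \beta\bigr]_+ ~=~ \bigl[\inner{(\bw,\,-\beta/c),\ (\bx,\,c)}\bigr]_+,
\]
so a biased-ReLU neuron acting on $\bx\in\reals^{d-1}$ can be realized exactly by a standard-ReLU neuron acting on the augmented input $(\bx,c)\in\reals^d$, by inserting a single extra weight coordinate equal to $-\beta/c$ per neuron.

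Concretely, I would first invoke \thmref{thm:lowerbound_refined_frobenius} in ambient dimension $d-1$ with the rescaled parameters $(b,\ B/\sqrt{2},\ b_x/\sqrt{2},\ n,\ \epsilon)$, producing some $\beta = \tilde{\Ocal}(Bb_x/\sqrt{dn})$ together with a biased-ReLU witness $(\bu,W)$ satisfying $\norm{\bu}\leq b$ and $\norm{W}_F \leq B/\sqrt{2}$, that shatters a set $\{\bx_i\}\subseteq\reals^{d-1}$ of points of norm $\leq b_x/\sqrt{2}$ whose size matches \eqref{eq:thmlowerboundrefined} up to constants (absorbed by the $\tilde{\Omega}$, using $d-1 = \Theta(d)$ for $d$ larger than a polylogarithmic factor). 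I would then set $c := \beta\sqrt{2n}/B$ (the case $\beta = 0$ is trivial since the two activations then coincide), form $\tilde{\bx}_i := (\bx_i, c)\in\reals^d$ and $\tilde{W} := [\,W\ \ -(\beta/c)\mathbf{1}\,]\in\reals^{n\times d}$, and observe from the identity above that $\bx\mapsto \bu^\top[\tilde{W}\bx]_+$ is a standard-ReLU network shattering $\{\tilde{\bx}_i\}$ with the same margin $\epsilon$. The norm bookkeeping gives $\norm{\tilde{\bx}_i}^2 \leq b_x^2/2 + c^2 \leq b_x^2$ and $\norm{\tilde{W}}_F^2 \leq B^2/2 + n\cdot B^2/(2n) = B^2$, so the construction lies in $\Fcal^{[\cdot]_+}_{b,B,n,d}$ acting on inputs of norm $\leq b_x$, as required.

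The main obstacle, and the source of the precise threshold $\beta \leq Bb_x/\sqrt{n}$ in the hypothesis of the corollary, is the simultaneous budget on the input norm and on the Frobenius norm of the augmented network: the extra column contributes $n(\beta/c)^2$ to $\norm{\tilde{W}}_F^2$, forcing $c = \Omega(\beta\sqrt{n}/B)$, while preserving the input norm up to a constant demands $c = \Ocal(b_x)$. These two requirements are compatible exactly when $\beta = \Ocal(Bb_x/\sqrt{n})$. Checking that the $\beta$ delivered by \thmref{thm:lowerbound_refined_frobenius} obeys this bound reduces, via its stated range $\beta = \tilde{\Ocal}(Bb_x/\sqrt{dn})$, to requiring $\sqrt{d}$ to exceed a polylogarithmic factor in the problem parameters, which is the parenthetical condition in the statement.
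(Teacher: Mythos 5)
Your proof takes essentially the same input-augmentation route as the paper: both append a constant coordinate to each input and a corresponding $-\beta/c$ column to $W$ so that the biased ReLU becomes an unbiased one, the only difference being that you tune the augmentation constant $c$ and pre-rescale $(B,b_x)$ by $\sqrt{2}$ so the augmented network lands exactly in $\Fcal^{[\cdot]_+}_{b,B,n,d}$, whereas the paper simply uses $c=b_x$, obtains a witness in $\Fcal^{[\cdot]_+}_{b,2B,n,d+1}$ on inputs of norm $\leq\sqrt{2}b_x$, and rescales afterward. The argument is correct (your choice of $c$ incurs the mildly stronger requirement $\beta\leq Bb_x/(2\sqrt{n})$, but this constant-factor slack is absorbed by the ``$d$ polylogarithmically large'' hypothesis exactly as you note).
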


The lower bound is the minimum of two terms: The first is $nd$, which is the order of the number of parameters in the network. This term is to be expected, since the fat-shattering dimension of $\Fcal$ is at most the pseudodimension of $\Fcal$, which indeed scales with the number of parameters $nd$ (see \cite{anthony1999neural,bartlett2019nearly}). Hence, we cannot expect to be able to shatter many more than $nd$ points. The second term is norm- and dimension-dependent, and dominates the overall lower bound if the network width $n$ is large enough.  Comparing the theorem with the $\tilde{\Ocal}((bBb_x/\epsilon)^2)$ upper bound from \thmref{thm:frobupbound}, it seems to suggest that having a bounded dimension $d$ may improve the sample complexity compared to the dimension-free case, with a smaller dependence on the norm bounds. However, at the moment we do not have upper bounds which match this lower bound, or even establish that bounds better than \thmref{thm:frobupbound} are possible when the dimension $d$ is small. We leave the question of understanding the sample complexity in the fixed-dimension regime as an interesting problem for future research.

\begin{remark}[No contradiction to upper bound in \thmref{thm:frobupbound}, due to implicit bound on $d$]
\thmref{thm:lowerbound_refined_frobenius} requires that \eqref{eq:thmlowerboundrefined} is at least order of $d$ for the lower bound to be valid. This in turn requires that $\frac{bBb_x}{\epsilon}\sqrt{d}\gg d$, or equivalently $d\ll \left(\frac{bBb_x}{\epsilon}\right)^2$. Thus, the theorem only applies when the dimension $d$ is not too large with respect to the other parameters. We note that this is to be expected: If we allow $d\gg \left(\frac{bBb_x}{\epsilon}\right)^2$ (and $n$ sufficiently large), then the lower bound in \eqref{eq:thmlowerboundrefined} will be larger than $\left(\frac{bBb_x}{\epsilon}\right)^2$, and this would violate the $\tilde{\Ocal}\left((bBb_x/\epsilon)^2\right)$ upper bound implied by \thmref{thm:frobupbound}.
\end{remark}

\section{Spectral Norm Control Suffices for Sufficiently Smooth Activations}\label{sec:smooth}

The lower bounds in the previous section crucially rely on the non-smoothness of the activation functions. Thus, one may wonder whether smoothness can lead to better upper bounds. In this section, we show that perhaps surprisingly, this is indeed the case: For sufficiently smooth activations (e.g., polynomials), one can provide width-independent Rademacher complexity bounds, using only the spectral norm. Formally, we return to the class of one-hidden-layer neural networks with spectral norm constraints,
\[
\Hcal^{\sigma}_{b,B,n,d}~=~ \left\{\bx\mapsto \bu^\top \sigma(W\bx)~:~ \bu\in \reals^n~,~W\in \reals^{n\times d}~,~\norm{\bu}\leq b~,~\norm{W}\leq B\right\}~,
\]
and state the following theorem:
\begin{theorem}\label{thm:smooth_upper}
	Fix some $b,B,b_x,n,d,\epsilon >0$. Suppose $\sigma(z)=\sum_{j=1}^{\infty}a_j z^j$ for some $a_1,a_2,\ldots\in\reals$, simultaneously for all $z:|z|\leq Bb_x$. Then the Rademacher complexity of $\Hcal^{\sigma}_{b,B,n,d}$ on $m$ inputs from $\{\bx\in\reals^d:\norm{\bx}\leq b_x\}$ is at most $\epsilon$, if
	\[
	m~\geq~\left(\frac{b\cdot \tilde{\sigma}(B b_x)}{\epsilon}\right)^2~,~~\text{where}~~ \tilde{\sigma}(z):=\sum_{j=1}^{k}|a_j|z^j
	\]
	(assuming the sum converges). 
\end{theorem}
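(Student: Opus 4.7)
The core idea is to expand $\sigma$ into its power series, bound the Rademacher complexity of each homogeneous polynomial piece separately, and then sum. Write any network $\bx \mapsto \bu^\top \sigma(W\bx)$ in the class as
\[
\bu^\top \sigma(W\bx) \;=\; \sum_{j=1}^{\infty} a_j\, \bu^\top (W\bx)^{\odot j} \;=\; \sum_{j=1}^{\infty} a_j \sum_{i=1}^{n} u_i \langle \bw_i, \bx\rangle^{j},
\]
where $\bw_i$ is the $i$-th row of $W$ and the expansion is valid because $|\langle \bw_i,\bx\rangle| \le Bb_x$. Define the class $\Fcal_j := \{\bx\mapsto \sum_i u_i \langle \bw_i,\bx\rangle^j : \|\bu\|\le b,\ \|W\|\le B\}$. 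Since the hypothesis class is contained in $\sum_j a_j \Fcal_j$, by standard properties of Rademacher complexity (subadditivity on Minkowski sums, positive homogeneity) we get
\[
\Rcal_m(\Hcal^{\sigma}_{b,B,n,d}) \;\le\; \sum_{j=1}^{\infty} |a_j|\,\Rcal_m(\Fcal_j).
\]

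Next I would express the degree-$j$ piece as a linear functional in the tensor feature $\bx^{\otimes j}$: namely, $\sum_i u_i \langle \bw_i,\bx\rangle^j = \langle T_j, \bx^{\otimes j}\rangle$ with $T_j := \sum_i u_i\, \bw_i^{\otimes j}$. Then $\Fcal_j$ is a subclass of the linear predictors $\bx \mapsto \langle T, \bx^{\otimes j}\rangle$ with $\|T\|_F$ bounded by whatever bound on $\|T_j\|_F$ we can obtain. Since $\|\bx^{\otimes j}\|_F = \|\bx\|^j \le b_x^j$, the standard Rademacher bound for Euclidean-norm-bounded linear predictors yields
\[
\Rcal_m(\Fcal_j) \;\le\; \frac{\sup \|T_j\|_F \cdot b_x^j}{\sqrt{m}}.
\]

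The main obstacle (and the only nontrivial step) is to bound $\|T_j\|_F$ by $b\, B^j$ without any width dependence. Direct Cauchy--Schwarz gives an extra $\sqrt{n}$ factor, so the trick is to compute
\[
\|T_j\|_F^2 \;=\; \sum_{i,i'} u_i u_{i'} \langle \bw_i,\bw_{i'}\rangle^{j} \;=\; \bu^\top G^{\odot j}\, \bu, \qquad G := WW^\top,
\]
and then bound the spectral norm of the $j$-th Hadamard power of the Gram matrix via the Schur product theorem. Specifically, $G$ is PSD with $G_{ii} = \|\bw_i\|^2 \le \|W\|^2 \le B^2$ and $\|G\| \le B^2$. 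Iterating the Schur-type inequality $\|A\odot B\| \le (\max_i B_{ii})\|A\|$ for PSD $A,B$, one obtains $\|G^{\odot j}\| \le (\max_i G_{ii})^{j-1}\|G\| \le B^{2j}$, hence $\|T_j\|_F^2 \le \|\bu\|^2 \|G^{\odot j}\| \le b^2 B^{2j}$.

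Plugging this into the per-degree Rademacher bound gives $\Rcal_m(\Fcal_j) \le b(Bb_x)^j/\sqrt{m}$, and summing over $j$ yields
\[
\Rcal_m(\Hcal^{\sigma}_{b,B,n,d}) \;\le\; \frac{b}{\sqrt{m}}\sum_{j=1}^{\infty} |a_j|(Bb_x)^j \;=\; \frac{b\cdot \tilde\sigma(Bb_x)}{\sqrt{m}}.
\]
Requiring this to be at most $\epsilon$ gives the sample complexity $m \ge (b\tilde\sigma(Bb_x)/\epsilon)^2$ claimed in the theorem. The truncation at $k$ in the statement of $\tilde\sigma$ is irrelevant to the argument since each degree contributes independently; the bound in fact holds with the full (possibly infinite) series whenever it converges.
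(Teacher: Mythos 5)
Your proof is correct, and it takes a genuinely different route from the paper's. The paper works directly with the Rademacher complexity expression: it uses Cauchy--Schwarz and Jensen to reduce to bounding $\E_{\bepsilon}\sup_W\|\sum_i \epsilon_i(W\bx_i)^j\|^2$, and then relies on an algebraic ``unrolling'' lemma (\lemref{lem:layerredux}) which iteratively peels off factors of $(\bw_\ell^\top\bu_i)$ using Cauchy--Schwarz, ending with $\sum_{\ell_1,\dots,\ell_j}(\sum_i\epsilon_i x_{i,\ell_1}\cdots x_{i,\ell_j})^2$, whose expectation over $\bepsilon$ telescopes to $\sum_i\|\bx_i\|^{2j}$. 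You instead linearize: you recognize the degree-$j$ piece as a linear functional $\bx\mapsto\langle T_j,\bx^{\otimes j}\rangle$ with $T_j=\sum_i u_i\bw_i^{\otimes j}$, and the crux is the width-free bound $\|T_j\|_F^2 = \bu^\top(WW^\top)^{\odot j}\bu\le b^2 B^{2j}$ obtained from the Schur-product spectral inequality $\|A\odot B\|\le(\max_i B_{ii})\|A\|$ for PSD matrices, iterated $j-1$ times with $\max_i G_{ii}=\max_i\|\bw_i\|^2\le\|W\|^2$. Both arguments place the spectral-norm control at exactly the right point; yours is arguably more conceptual (it reduces to the classical Rademacher bound for Euclidean-ball linear predictors over a fixed feature map, so the role of the spectral norm is fully localized in the Hadamard-power bound), while the paper's \lemref{lem:layerredux} is formulated at a level of generality that lets it iterate across layers and thereby yield the depth-$L$ extension in \thmref{thm:smooth_upper_deep} with no extra machinery. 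Your Gram-matrix/Schur argument does not obviously compose across layers, so it buys clarity for the one-hidden-layer case at the cost of not immediately giving the deeper-network result. One minor point worth making explicit if you were to write this up: the Minkowski-sum subadditivity step requires each $\Fcal_j$ to be symmetric (closed under negation) so that $\Rcal_m(a_j\Fcal_j)=|a_j|\Rcal_m(\Fcal_j)$ for possibly negative $a_j$; this holds here since flipping $\bu\mapsto-\bu$ preserves the constraint set.
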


We note that the conditions imply $\sigma(0)=0$, which is assumed for simplicity (see Remark \ref{remark:bias}). We emphasize that this bound depends only on spectral norms of the network and properties of the activation $\sigma$. In particular, it is independent of the network width $n$ as well as the Frobenius norm of $W$. We also note that the bound is clearly tight in some cases: For example, if $\sigma(\cdot)$ is just the identity function, then $\Hcal^{\sigma}_{b,B,n,d}$ reduces to the class of linear predictors of Euclidean norm at most $bB$, whose Rademacher complexity on inputs of norm at most $b_x$ is well-known to equal $\Theta((bBb_x/\epsilon)^2)$. This also demonstrates that the dependence on the spectral norm $B$ is necessary, even with smooth activations.

The proof of the theorem (in the appendix) depends on algebraic manipulations, which involve `unrolling' the Rademacher complexity as a polynomial function of the network inputs, and employing a certain technical trick to simplify the resulting expression, given a bound on the spectral norm of the weight matrix. 

We now turn to provide some specific examples of $\sigma(\cdot)$ and the resulting expression $\tilde{\sigma}(Bb_x)$ in the theorem (see also Figure \ref{fig:my_label}):

\begin{example}\label{example:poly}
	If $\sigma(z)$ is a polynomial of degree $k$, then $\tilde{\sigma}(Bb_x)=\Ocal((Bb_x)^k)$ for large enough $Bb_x$. 
\end{example}
In the example above, the output values of predictors in the class are at most $\Ocal((Bb_x)^k)$, so it is not surprising that the resulting Rademacher complexity scales in the same manner. 

The theorem also extends to non-polynomial activations, as long as they are sufficiently smooth (although the dependence on $Bb_x$ in $\tilde{\sigma}(Bb_x)$ generally becomes exponential). The following is an example for a sigmoidal activation based on the error function:
\begin{example}\label{example:erf}
	If $\sigma(z)=\text{erf}(rz)$ (where $\text{erf}$ is the error function, and $r>0$ is a scaling parameter), then $\tilde{\sigma}(Bb_x)\leq \frac{2rBb_x}{\sqrt\pi}\exp((rBb_x)^2)$.
\end{example}
\begin{proof}
	We have that $\sigma(z)$ equals
	\[
	\text{erf}(rz) ~=~ \frac{2}{\sqrt{\pi}}\int_{t=0}^{rz}\exp(-t^2)dt ~=~ \frac{2}{\sqrt{\pi}}\int_{t=0}^{rz}\sum_{j=0}^{\infty}\frac{(-t^2)^j}{j!}dt
	~=~ 
	\frac{2}{\sqrt{\pi}}\sum_{j=0}^{\infty}\frac{(-1)^j(rz)^{2j+1}}{j!(2j+1)}
	\]
	and therefore
	\[
	\tilde{\sigma}(z)~=~ \frac{2}{\sqrt{\pi}}\sum_{j=0}^{\infty}\frac{(rz)^{2j+1}}{j!(2j+1)}
	~\leq~ \frac{2rz}{\sqrt{\pi}}\sum_{j=0}^{\infty}\frac{\left((rz)^2\right)^j}{j!}
	~=~ \frac{2rz}{\sqrt{\pi}}\exp\left((rz)^2\right)~.
	\]
\end{proof}

\begin{figure}
    \centering
    \includegraphics[scale=0.6]{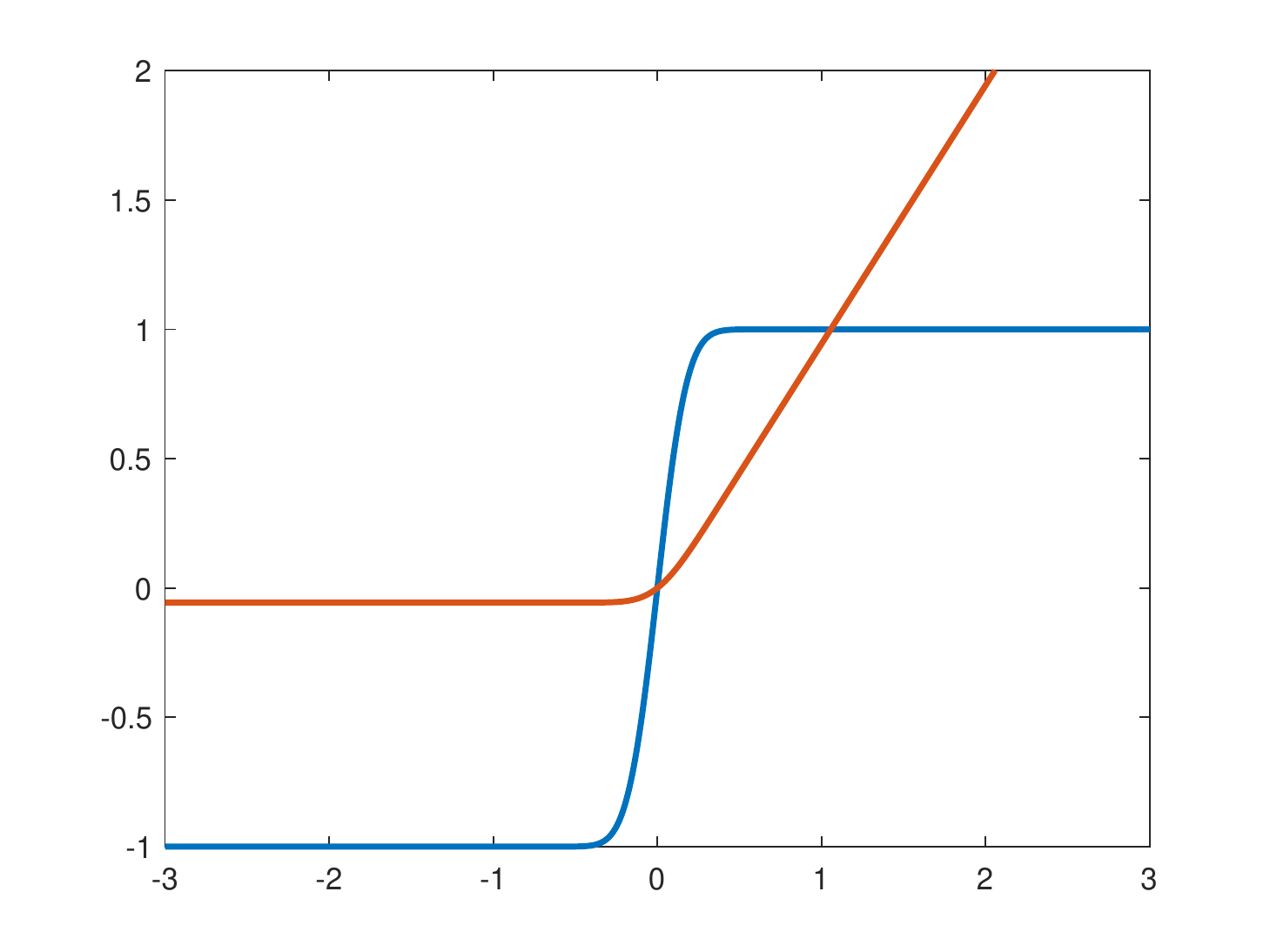}
    \caption{Plots of error function activation from Example \ref{example:erf} (blue) and smoothed ReLU activation from Example \ref{example:relu} (red). Best viewed in color.}
    \label{fig:my_label}
\end{figure}

A sigmoidal activation also allows us to define a smooth approximation of the ReLU function, to which the theorem can be applied: 
\begin{example}\label{example:relu}
	If $\sigma(y) = \frac{1}{2}\left(y+\int_{z=0}^{y}\text{erf}(rz)dz\right)$, then $\tilde{\sigma}(Bb_x)\leq \frac{Bb_x}{2}+\frac{r(Bb_x)^2}{\sqrt{\pi}}\exp((rBb_x)^2)$.
\end{example}
We note that as $r\rightarrow \infty$, $\sigma(y)$ converges uniformly to the ReLU function. 
\begin{proof}
	Using a computation similar to the previous example, $\sigma(y)$ equals
	\[
	\frac{1}{2}y+\frac{1}{\sqrt{\pi}}\sum_{j=0}^{\infty}\frac{(-1)^j(r^{2j+1}y^{2j+2})}{j!(2j+1)(2j+2)},
	\]
	and therefore
	\[
	\tilde{\sigma}(z)~=~ \frac{z}{2}+\frac{1}{\sqrt{\pi}}\sum_{j=0}^{\infty}\frac{r^{2j+1}z^{2j+2}}{j!(2j+1)(2j+2)}~\leq~
	\frac{z}{2}+\frac{rz^2}{\sqrt{\pi}}\sum_{j=0}^{\infty}\frac{\left((rz)^2\right)^j}{j!}~=~
	\frac{z}{2}+\frac{rz^2}{\sqrt{\pi}}\exp((rz)^2)~.
	\]
\end{proof}

Although the last two examples imply an exponential dependence on the spectral norm bound $B$ in the theorem, they still imply that for any fixed $B$, we can get a finite size-independent sample complexity (regardless of the network's width or input dimension) while controlling only the spectral norm of the weight matrices. 

\subsection{Extension to Higher Depths for Power Activations}

When the activation functions are powers of the form $\sigma(z)=z^k$ for some $k$, then the previous theorem can be extended to deeper networks. To formalize this, fix integers $k\geq 1$ and $L\geq 1$, and consider a depth-$(L+1)$ network $f_{L+1}(\bx)$ (parameterized by weight matrices $W^1,W^2,\ldots,W^{L}$ of some arbitrary fixed dimensions, and a weight vector $\bu$) defined recursively as follows:
\[
f_{0}(\bx) = \bx~~,~~ \forall j\in \{0,\ldots,L-1\},~ f_{j+1}(\bx)~=~ (W^{j+1} f_j(\bx))^{\circ k}~~,~~ f_{L+1}(\bx) = \bu^\top f_{L}(\bx)~.
\]
where $(\bv)^{\circ k}$ denotes applying the $k$-th power element-wise on a vector $\bv$. 

\begin{theorem}\label{thm:smooth_upper_deep}
	For any integers $k,L\geq 1$ and choice of matrix dimensions at each layer, consider the class of neural networks $f_{L+1}$ as above, over all weight matrices $W^1\ldots W^{L}$ such that $\norm{W^j}\leq B$ for all $j$, and all $\bu$ such that $\norm{\bu}\leq b$. Then the Rademacher complexity of this class on $m$ inputs from $\{\bx:\norm{\bx}\leq b_x\}$ is at most $\epsilon$, if
	\[
	m~\geq~\left(\frac{b\cdot B^{k+k^2+\ldots k^{L}}\cdot  b_x^{k^{L}}}{\epsilon}\right)^2~.
	\]
\end{theorem}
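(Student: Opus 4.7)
The plan is to exploit the polynomial structure of the network when $\sigma(z)=z^k$: the function $f_{L+1}(\bx)$ is a polynomial of total degree $k^L$ in $\bx$, so it lifts to a linear functional on the tensor power $\bx^{\otimes k^L}$. The whole hypothesis class then embeds into a linear class in this lifted feature, whose Rademacher complexity is controlled via the standard $\text{norm}\cdot\norm{\text{input}}/\sqrt{m}$ estimate. Since $\norm{\bx^{\otimes k^L}}=\norm{\bx}^{k^L}\le b_x^{k^L}$, the only thing left is to bound the norm of the implicit tensor induced by the network, and this is where the spectral constraints $\norm{W^j}\le B$ must be pushed through the nonlinearity.

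First I would argue by induction on $j\in\{0,\ldots,L\}$ that we can write $f_j(\bx)=G_j\,\bx^{\otimes k^j}$ for some matrix $G_j$ (with $\bx^{\otimes k^j}$ viewed as a vector in $\reals^{d^{k^j}}$), satisfying $\norm{G_j}\le B^{k+k^2+\cdots+k^j}$. The base case $j=0$ is trivial ($G_0=I$, empty sum $=0$). For the inductive step, set $H:=W^{j+1}G_j$, so $\norm{H}\le B\cdot\norm{G_j}\le B^{1+k+\cdots+k^j}$. The $i$-th entry of $f_{j+1}(\bx)=(H\bx^{\otimes k^j})^{\circ k}$ is $(H_i\cdot\bx^{\otimes k^j})^k=\inner{H_i^{\otimes k},\bx^{\otimes k^{j+1}}}$, so we may take $G_{j+1}$ to be the matrix whose $i$-th row is the vectorization of $H_i^{\otimes k}$. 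A direct computation gives the Gram identity $G_{j+1}G_{j+1}^\top=(HH^\top)^{\circ k}$.

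The crux of the induction is the spectral bound $\norm{M^{\circ k}}\le \norm{M}^k$ for positive-semidefinite $M$. This is a standard consequence of Schur's product theorem, which yields $\norm{A\circ B}\le\norm{A}\,\max_i B_{ii}$ for PSD $A,B$; iterating $k-1$ times and using $\max_i M_{ii}\le\norm{M}$ for PSD $M$ gives the desired inequality. Applying it to $M=HH^\top$ yields
\[
\norm{G_{j+1}}^2=\norm{(HH^\top)^{\circ k}}\le\norm{HH^\top}^k=\norm{H}^{2k}\le B^{2(k+k^2+\cdots+k^{j+1})},
\]
closing the induction. I expect this Schur-type step to be the main (and essentially the only nontrivial) obstacle; it is also the only place where the specific power-activation structure is used.

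Putting everything together, $f_{L+1}(\bx)=\bu^\top G_L\bx^{\otimes k^L}=\inner{G_L^\top\bu,\bx^{\otimes k^L}}$ with $\norm{G_L^\top\bu}\le\norm{G_L}\cdot\norm{\bu}\le b\,B^{k+k^2+\cdots+k^L}$. Hence the predictor class is a subset of the linear class $\{\bx\mapsto\inner{T,\bx^{\otimes k^L}}:\norm{T}\le b\,B^{k+k^2+\cdots+k^L}\}$, and the standard Rademacher bound for linear classes (the same one invoked at the end of the proof of \thmref{thm:smooth_upper}) gives $\Rcal_m\le b\,B^{k+k^2+\cdots+k^L}\,b_x^{k^L}/\sqrt{m}$, which is at most $\epsilon$ exactly when $m\ge(b\,B^{k+k^2+\cdots+k^L}\,b_x^{k^L}/\epsilon)^2$, as claimed.
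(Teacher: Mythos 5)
Your proof is correct, and it takes a genuinely different route from the paper's. The paper bounds the Rademacher complexity by directly ``unrolling'' the sum $\sum_\ell\bigl(\sum_i\epsilon_i(f_L(\bx_i))_\ell\bigr)^2$ via \lemref{lem:layerredux} (itself built on iterated Cauchy--Schwarz applications in \lemref{lem:reduction}), peeling off one layer at a time while tracking the growing multi-index sums, and only afterwards taking the expectation over $\bepsilon$. You instead lift the entire hypothesis class into the tensor feature space $\bx\mapsto\bx^{\otimes k^L}$: each network becomes the linear predictor $\inner{G_L^\top\bu,\bx^{\otimes k^L}}$ with $\norm{G_L^\top\bu}\le b\,B^{k+\cdots+k^L}$, and the standard Rademacher bound for norm-bounded linear classes finishes the job. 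Your inductive norm control on $G_j$ rests on the Gram identity $G_{j+1}G_{j+1}^\top=(HH^\top)^{\circ k}$ together with the Schur-product estimate $\norm{M^{\circ k}}\le\norm{M}^k$ for PSD $M$ (via $\norm{A\circ B}\le\norm{A}\max_i B_{ii}$ for PSD $A,B$, iterated $k-1$ times) --- a nontrivial matrix-analytic input that the paper sidesteps by doing the algebra by hand. Conceptually your version is cleaner and more transparent about \emph{why} the bound is width- and dimension-independent (the class literally embeds in a norm-bounded linear class on tensors), at the price of invoking Schur's product theorem; the paper's version is self-contained and more elementary at the price of heavier index bookkeeping. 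Both routes produce exactly the same bound.
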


For constant $k$ and constant-depth networks, the sample complexity bound in the theorem is of the form $b\cdot \text{poly}(B b_x)/\sqrt{m}$, where $B$ bounds merely the (relatively weak) spectral norm. We also note that the exponential/doubly-exponential dependence on $k,L$ is to be expected: Even if we consider networks where each matrix is a scalar $B$, and the input is exactly $b_k$, then multiplying by $B$ and taking the $k$-th power  $L-1$ times over leads to the exact same $B^{k+k^2+\ldots k^{L}}\cdot b_x^{k^{L}}$ factor. Since the Rademacher complexity depends on the scale of the outputs, such a factor is generally unavoidable. The proof of the theorem (in the appendix) builds on the proof ideas of  \thmref{thm:smooth_upper}, which can be extended to deeper networks at least when the activations are  power functions.

\section{Convolutional Networks}\label{sec:conv}

In this section, we study another important example of neural networks which circumvent our lower bounds from \secref{sec:frobenius}, this time by adding additional constraints on the weight matrix. Specifically, we consider one-hidden-layer \emph{convolutional} neural networks. These networks are defined via a set of patches  $\Phi=\{\phi_j\}_{j=1}^{n}$, where for each $j$, the patch $\phi_j:\reals^d\mapsto \reals^{n'}$ projects the input vector $\bx\in \reals^d$ into some subset of its coordinates, namely $\phi_j(\bx)=(x_{i^j_1},\ldots,x_{i^j_{n'}})$ for some $\{i^j_1,\ldots,i^j_{n'}\}\subseteq \{1,\ldots,d\}$. The hidden layer is parameterized by a convolutional filter vector $\bw\in \reals^{n'}$, and given an input $\bx$, outputs the vector $(\sigma(\bw^\top\phi_1(\bx)),\ldots,\sigma(\bw^\top\phi_n(\bx)))\in \reals^n$, where $\sigma$ is some activation function (e.g., ReLU). Note that this can be equivalently written as $\sigma(W\bx)$, where each row  $j$ of $W$ embeds the $\bw$ vector in the coordinates corresponding to $\phi_j(\cdot)$. In what follows, we say that a matrix $W$ \emph{conforms} to a set of patches $\Phi=\{\phi_j\}_{j=1}^{n}$, if there exists a vector $\bw$ such that $(W\bx)_j = \bw^\top \phi_j(\bx)$ for all $\bx$. Thus, our convolutional hidden layer corresponds to a standard hidden layer (same as in previous sections), but with the additional constraint on $W$ that it must conform to a certain set of patches.

In the first subsection below, we study networks where the convolutional hidden layer is combined with a linear output layer. In the following section, we study the case where the hidden layer is combined with a fixed pooling operation. In both cases, we will get bounds that depend on the spectral norm of $W$ and the architecture of the patches. 

\subsection{Convolutional Hidden Layer + Linear Output Layer}

We begin by considering convolutional networks consisting of a convolutional hidden layer (with spectral norm control and with respect to some set of patches), followed by a linear output layer:
\[
\Hcal^{\sigma,\Phi}_{b,B,n,d} = \{\bx\mapsto \bu^\top \sigma(W\bx)~:~\bu\in \reals^n, W\in \reals^{n\times d}, \norm{\bu}\leq b~,~\norm{W}\leq B~,~W~\text{conforms to}~ \Phi\}
\]

The following theorem shows that we can indeed obtain a Rademacher complexity bound depending only on the spectral norm of $W$, and independent of the network width $n$, under a mild assumption about the architecture of the patches:

\begin{theorem}\label{thm:convupbound}
    Suppose $\sigma(\cdot)$ is $L$-Lipschitz and $\sigma(0)=0$. Fix some set of patches $\Phi$, and let $O_{\Phi}$ be the maximal number of patches that any single input coordinate (in $\{1,\ldots,d\}$) appears in. Then for any $b,B,b_x,n,d,\epsilon>0$, the  Rademacher complexity of $\Hcal^{\sigma,\Phi}_{b,B,n,d}$ on $m$ inputs from $\{\bx\in \reals^d:\norm{\bx}\leq b_x\}$ is at most $\epsilon$, if
    \[
    m~\geq~ 2\cdot O_{\Phi}\cdot \left(\frac{bBb_x L}{\epsilon}\right)^2~.
    \]
\end{theorem}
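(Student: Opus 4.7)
The plan is to follow a standard ``peel-then-contract-then-bound'' template, with the convolutional structure entering only in the final step. First, I would take the supremum over $\bu$ with $\norm{\bu}\leq b$ via Cauchy--Schwarz, reducing the problem to bounding
\[
\Rcal_m(\Hcal^{\sigma,\Phi}_{b,B,n,d}) \;=\; \frac{b}{m}\,\E \sup_W \norm{\sum_i \epsilon_i \sigma(W\bx_i)},
\]
where the supremum is over matrices $W$ conforming to $\Phi$ with $\norm{W}\leq B$.

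A small but key lemma is that this spectral-norm constraint actually forces $\norm{\bw}\leq B$ on the underlying filter. To see this, pick $\bx$ supported on the coordinates of a single patch $j$ with values equal to $\bw$, so that $\phi_j(\bx) = \bw$ and $\norm{\bx} = \norm{\bw}$; then $\norm{\bw}^2 = (W\bx)_j \leq \norm{W\bx} \leq B\,\norm{\bw}$. Writing the $\ell_2$ norm in dual form $\norm{\cdot} = \sup_{\norm{\bv}\leq 1}\bv^\top \cdot$ and applying Maurer's vector contraction inequality (with the dualizing unit vector $\bv$ treated as varying alongside $W$) strips off the $L$-Lipschitz elementwise activation $\sigma$, giving
\[
\E \sup_W \norm{\sum_i \epsilon_i \sigma(W\bx_i)} \;\leq\; \sqrt{2}\,L\,\E \sup_W \sum_{i,j} \epsilon'_{ij}\,(W\bx_i)_j,
\]
for a doubly-indexed Rademacher sequence $\{\epsilon'_{ij}\}$. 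Using conformance $(W\bx_i)_j = \bw^\top \phi_j(\bx_i)$ together with the filter-norm bound, the right-hand side is at most $\sqrt{2}\,L\,B\,\E\norm{\sum_{i,j}\epsilon'_{ij}\phi_j(\bx_i)}$.

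Jensen's inequality and independence bound this last expectation by $\sqrt{\sum_{i,j}\norm{\phi_j(\bx_i)}^2}$, and the overlap identity
\[
\sum_j \norm{\phi_j(\bx)}^2 \;=\; \sum_k x_k^2 \cdot \big|\{j : k \in \text{patch } j\}\big| \;\leq\; O_\Phi\,\norm{\bx}^2
\]
yields the final estimate $b_x\sqrt{O_\Phi m}$. Chaining the factors gives $\Rcal_m(\Hcal^{\sigma,\Phi}_{b,B,n,d}) \leq bBb_xL\sqrt{2O_\Phi/m}$; setting this at most $\epsilon$ yields the stated sample complexity.

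The main obstacle is the vector-contraction step. In its standard form, Maurer's inequality fixes the outer Lipschitz functional, whereas here the dualizing unit vector $\bv$ varies jointly with $W$. This can be handled either by a direct extension of Maurer's argument that allows a uniformly-Lipschitz class of outer functionals, or by passing to the Gaussian complexity, applying a Slepian/Sudakov-Fernique comparison to the joint $(W,\bv)$ process, and converting back to Rademacher at the end.
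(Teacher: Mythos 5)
The filter-norm lemma, the overlap identity $\sum_j\norm{\phi_j(\bx)}^2\leq O_\Phi\norm{\bx}^2$, and the final Jensen step all match the paper and are correct. However, the vector-contraction step is a genuine gap, and it is the crux of the argument. You dualize the Euclidean norm as $\sup_{\norm{\bv}\leq 1}\bv^\top(\cdot)$ and then invoke Maurer's inequality ``with the dualizing unit vector $\bv$ treated as varying alongside $W$.'' Maurer's vector contraction holds for a \emph{fixed} family of $L$-Lipschitz outer functionals $h_i$; it bounds $\E\sup_{f\in F}\sum_i\epsilon_i h_i(f(\bx_i))$, not $\E\sup_{f\in F,\,h\in\Hcal}\sum_i\epsilon_i h(f(\bx_i))$. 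Taking $h_\bv(\bz)=\bv^\top\sigma(\bz)$ gives, for each fixed $\bv$, a bound whose right-hand side is independent of $\bv$, but that only yields $\sup_\bv\E\sup_W(\cdots)\leq\sqrt{2}L\,\E\sup_W(\cdots)$, which is the wrong order of $\sup$ and $\E$. Your proposed fix via a Sudakov--Fernique comparison on the joint $(W,\bv)$ process does not close the gap either: the increment $\E(X_{(W,\bv)}-X_{(W',\bv')})^2=\sum_i(\bv^\top\sigma(W\bx_i)-\bv'^\top\sigma(W'\bx_i))^2$ has a cross term in $\bv-\bv'$ that cannot be dominated by a $\bv$-free Gaussian process; any dominating process carrying a $\bv$-component of the right scale (order $LBb_x\sqrt{m}$ per coordinate) would contribute an extra $\sqrt{n}$, destroying width-independence.

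The paper avoids this entirely by a different decomposition. After peeling off $\bu$, it applies Jensen to pass to $\sqrt{\E\sup_W\norm{\sum_i\epsilon_i\sigma(W\bx_i)}^2}$, expands the squared Euclidean norm coordinate-wise so that
\[
\E\sup_W\norm{\sum_i\epsilon_i\sigma(W\bx_i)}^2\;\leq\;\sum_{j=1}^n\E\sup_{\bw:\norm{\bw}\leq B}\Bigl(\sum_i\epsilon_i\,\sigma(\bw^\top\phi_j(\bx_i))\Bigr)^2,
\]
and then applies a \emph{scalar-valued} contraction (the Ledoux--Talagrand inequality, Eq.~(4.20), with the convex increasing function $g(z)=[z]_+^2$) separately to each fixed $j$. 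This removes the need for any vector contraction or dualization, and the $\sqrt{2}$ comes from that lemma. You would either need to replace your contraction step with this per-coordinate argument, or supply a proof of the joint-$\sup$ version of Maurer's inequality, which I do not believe holds in the generality you invoke.
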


The proof of the theorem (in the appendix) is based on an algebraic analysis of the Rademacher complexity, and the observation that the spectral norm of $W$ necessarily upper bounds the Euclidean norm of the convolutional filter vector $\bw$.

Other than the usual parameters, the bound in the theorem also depends on the architectural parameter $O_{\Phi}\in \{1,\ldots,n\}$, which quantifies the amount of ``overlap'' between the patches. Although it can be as large as $n$ in the worst case (when some single coordinate appears in all patches), for standard convolutional architectures it is usually quite small, and does not scale with the input dimension or the total number of patches. For example, it equals $1$ if the patches are disjoint, and more generally it equals the patch size divided by the stride. Nevertheless, an interesting open question is whether the $O_{\Phi}$ factor in the bound can be reduced or avoided altogether. 


\subsection{Convolutional Hidden Layer + Pooling Layer}

We now turn to consider a slightly different one-hidden-layer convolutional networks, where the linear output layer is replaced by a fixed pooling layer. Specifically, 
we consider networks of the form 
\[
	\bx~\mapsto~ \rho \circ \sigma(W\bx)~=~ \rho \left(\sigma(\bw^\top \phi_1(\bx)),\ldots,\sigma(\bw^\top \phi_n(\bx))\right),
\]
where $\sigma:\reals \to \reals$ is an activation function as before, and $\rho:\reals^n \to \reals$ is $1$-Lipschitz with respect to the \emph{$\ell_\infty$ norm}. For example, $\rho(\cdot)$ may correspond to a max-pooling layer $\bz \mapsto \max_{j \in [n]} z_j$, or to an average-pooling layer $\bz \mapsto \frac{1}{n} \sum_{j \in [n]} z_j$.
We
define the following class of networks:
\begin{align*}
	\Hcal_{B,n,d}^{\sigma, \rho, \Phi}~:=~ 
	\left\{ 
		\bx \mapsto \rho \circ \sigma(W\bx)~:~W \in \reals^{n \times d}~,~\norm{W} \leq B~,~W \text{ conforms to }\Phi
	\right\}~.
\end{align*}

This class is very closely related to a class of convolutional networks recently studied in \citet{ledent2021norm} using an elegant covering number argument. Using their proof technique, we first provide a Rademacher complexity upper bound (\thmref{thm:CNN_upper_bound} below), which depends merely on the spectral norm of $W$, as well as a \emph{logarithmic} dependence on the network width $n$. Although a logarithmic dependence is relatively mild, one may wonder  if we can remove it and get a fully width-independent bound, same as our previous results. Our main novel contribution in this section (\thmref{thm:lower bound cnn}) is to show that this is \emph{not} the case: The fat-shattering dimension of the class necessarily has a $\log(n)$ factor, so the upper bound is tight up to factors polylogarithmic in the sample size $m$.

\begin{theorem}\label{thm:CNN_upper_bound}
	Suppose that $\sigma:\reals \to \reals$ is $L$-Lipschitz and $\sigma(0)=0$, and that $\rho:\reals^n \to \reals$ is $1$-Lipschitz w.r.t. $\ell_\infty$ and satisfies $\rho(\mathbf{0})=0$. Fix some set of patches $\Phi=\{\phi_j\}_{j=1}^n$.
	Then, for any $B,n,d,b_x,\epsilon>0$, the Rademacher complexity of $\Hcal_{B,n,d}^{\sigma, \rho, \Phi}$ on $m$ inputs from $\left\{\bx \in \reals^d~:~\norm{\phi_j(\bx)} \leq b_x \text{ for all } j \in [n]\right\}$ is at most $\epsilon$ if
	\[
		m 
		\geq c \cdot \left( \frac{L B b_x}{\epsilon} \right)^2 \cdot \log^2(m)\log(mn)
	\]
	for some universal constant $c>0$. Thus, it suffices to have $m=\tilde{\Ocal}\left( \left( \frac{L B b_x}{\epsilon} \right)^2 \right)$.
\end{theorem}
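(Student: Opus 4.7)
The plan is to bound the empirical $L_\infty$ covering number of $\Hcal_{B,n,d}^{\sigma,\rho,\Phi}$ on a fixed sample $S=(\bx_1,\ldots,\bx_m)$, following the covering-based approach of \citet{ledent2021norm}, and then convert this into a Rademacher complexity estimate via Dudley's entropy integral. Since any $W$ conforming to $\Phi$ is determined by a single convolutional filter $\bw\in\reals^{n'}$, and since every row of $W$ has Euclidean norm exactly $\norm{\bw}$, we have $\norm{\bw}\leq\norm{W}\leq B$, so it suffices to cover the ball $\{\bw:\norm{\bw}\leq B\}$ in a suitable pseudonorm. Writing $h_\bw$ for the predictor with filter $\bw$, the $1$-Lipschitzness of $\rho$ in $\ell_\infty$ combined with the $L$-Lipschitzness of $\sigma$ and $\rho(\mathbf{0})=\sigma(0)=0$ gives
\[
|h_\bw(\bx_i)-h_{\bw'}(\bx_i)|~\leq~L\cdot\max_{j\in[n]}|(\bw-\bw')^\top\phi_j(\bx_i)|
\]
for every $i\in[m]$, and hence the empirical $L_\infty$ distance between $h_\bw$ and $h_{\bw'}$ is at most $L\norm{\bw-\bw'}_\ast$, where $\norm{\bu}_\ast:=\max_{i\in[m],\,j\in[n]}|\bu^\top\phi_j(\bx_i)|$ is the pseudonorm induced by the $mn$ patch-evaluation vectors.

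The technical core is then a Maurey-type $L_\infty$ covering bound for linear classes with Euclidean weight constraint,
\[
\log\Ncal\bigl(\{\bw:\norm{\bw}\leq B\},~\epsilon,~\norm{\cdot}_\ast\bigr)~\leq~c\cdot\frac{B^2b_x^2}{\epsilon^2}\cdot\log(mn),
\]
which exploits only that each $\phi_j(\bx_i)$ has norm at most $b_x$ and that there are at most $mn$ such vectors. It can be proved by the empirical method: after projecting onto the span of the patch-evaluation vectors, any $\bw$ with $\norm{\bw}\leq B$ can be approximated to accuracy $\epsilon$ in $\norm{\cdot}_\ast$ by a $k$-sparse signed combination of the $mn$ normalized vectors $\pm\phi_j(\bx_i)/\norm{\phi_j(\bx_i)}$ with $k=\Ocal((Bb_x/\epsilon)^2)$, and the number of such sparse combinations is at most $(2mn)^k$. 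Composing with the Lipschitz reduction of the previous paragraph then yields $\log\Ncal(\Hcal|_S,\epsilon,\ell_\infty)\leq c(LBb_x)^2\log(mn)/\epsilon^2$.

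Since the empirical $L_2$ distance is dominated by $L_\infty$, the same inequality controls $L_2$ covering numbers, and Dudley's entropy integral with diameter $\Ocal(LBb_x)$ yields
\[
\Rcal_m(\Hcal)~\leq~\inf_{\alpha>0}\left\{4\alpha+\frac{12}{\sqrt{m}}\int_\alpha^{\Ocal(LBb_x)}\sqrt{\log\Ncal(\Hcal|_S,\epsilon,L_2)}\,d\epsilon\right\}.
\]
The integrand is of order $LBb_x\sqrt{\log(mn)}/\epsilon$, so the integral contributes a factor of $\log(LBb_x/\alpha)$; taking $\alpha=LBb_x/m$ yields $\Rcal_m(\Hcal)=\tilde{\Ocal}(LBb_x\sqrt{\log(mn)/m})$, from which the stated condition on $m$ follows by rearrangement. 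The main obstacle is the Maurey-type $L_\infty$ covering bound, since this is what replaces the filter dimension $n'$ by the mild factor $\log(mn)$ and is responsible for the $\log(mn)$ dependence appearing in the theorem; the Lipschitz reduction and the Dudley step are essentially routine, though $\alpha$ must be chosen carefully to recover exactly the polylogarithmic factors $\log^2(m)\log(mn)$ in the statement.
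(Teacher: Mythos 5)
Your proposal follows essentially the same route as the paper: the key observation that $\norm{\bw}\leq\norm{W}\leq B$, the $\ell_\infty$-Lipschitz reduction from the pooled network to an $L_\infty$ cover of the linear filter class evaluated on all $mn$ patch vectors $\phi_j(\bx_i)$, the $\Ocal((Bb_x/\epsilon)^2\log(mn))$ $L_\infty$-covering bound for Euclidean-ball-constrained linear predictors, and a chaining step to convert covering numbers into a Rademacher complexity bound. The only differences are presentational --- the paper cites \citet{zhang2002covering} for the linear covering bound rather than resketching a Maurey-type construction (whose exact form you state somewhat loosely), and it uses the discretized chaining sum of \citet{daniely2019generalization} in place of the continuous Dudley integral --- and both variants yield the stated $\log^2(m)\log(mn)$ factors.
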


For the lower bound, we focus for simplicity on the case where $\rho(\bz)=\max_j z_j$ is a max-pooling layer, and where $\sigma$ is the ReLU function (which satisfies the conditions of \thmref{thm:CNN_upper_bound} with $L=1$). However, we emphasize that unlike the lower bound we proved in \secref{sec:frobenius}, the construction does not rely on the non-smoothness of $\sigma$, and in fact can easily be verified to apply (up to constants) for any $\sigma$ satisfying $\sigma(0)=0$ and $\sigma(\epsilon) \geq c\cdot \epsilon$ (where $c>0$ is a constant).

\begin{theorem} \label{thm:lower bound cnn}
For any $B,n,b_x,\epsilon>0$, there is $d,\Phi$ such that the following hold: The class $\Hcal_{B,n,d}^{\sigma,\rho,\Phi}$, with $\sigma$ being the ReLU function and $\rho$ being the max function, can shatter
\[
	 \frac{1}{4} \cdot 
	  \left( \frac{B b_x}{\epsilon} \right)^2 \cdot \log(n)
\]
points from $\{ \bx \in \reals^d: \norm{\bx} \leq b_x \}$ with margin $\epsilon$.

Moreover, this claim holds already where $\Phi$ corresponds to a convolutional layer with a constant stride $1$, in the following sense:
If we view the input $\bx \in \reals^d$ as a vectorization of a tensor of order $p=\Ocal(\log(n))$, then $\Phi$ corresponds to all patches of certain fixed dimensions $s_1 \times \ldots \times s_p$ in the tensor. 
\end{theorem}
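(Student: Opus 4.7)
The plan is to construct explicitly, for every $B,n,b_x,\epsilon>0$, a set of $m=\lfloor(Bb_x/\epsilon)^2\log_2(n)/4\rfloor$ inputs together with a convolutional patch family $\Phi$ of the promised tensor form, and to show that for every labeling $\by\in\{0,1\}^m$ there is a filter $\bw$ with $\|\bw\|\le B$ such that the max-pool ReLU network shatters the inputs with margin $\epsilon$. Writing $T=\lfloor\log_2 n\rfloor$ and $r=\lfloor(Bb_x/(2\epsilon))^2\rfloor$, so $m\le rT$, the inputs will be arranged in $T$ ``levels'' of $r$ points each. A rough counting heuristic justifies this target: for fixed inputs, the map $\bw\mapsto (f_\bw(\bx_1),\ldots,f_\bw(\bx_m))$ is piecewise-linear in $\bw\in\reals^r$ with at most $\Ocal((nm)^r)$ cells, so realizing all $2^m$ sign patterns forces $m=\Ocal(r\log n)$; the construction must saturate this bound.

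Concretely, I would view the input as an order-$(T+1)$ tensor: the first $T$ axes are the convolutional directions (each of length $2$, patch length $1$, yielding $n=2^T$ patch positions indexed by $j=(j_1,\ldots,j_T)\in\{0,1\}^T$), while the last axis has length $r$ (patch length $r$). Each patch then exposes a length-$r$ fiber and the shared filter is $\bw\in\reals^r$. The $m$ points are indexed by $(t,\ell)\in[T]\times[r]$, and $\bx_{t,\ell}$ is built by writing $\alpha\cdot\chi(j,t)\cdot\be_\ell$ into the $j$-th fiber for a carefully chosen sign pattern $\chi(j,t)\in\{-1,0,+1\}$, with $\alpha$ scaled so that $\|\bx_{t,\ell}\|\le b_x$. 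For each labeling $\by$, the filter $\bw(\by)$ would be built as a signed combination of unit-norm ``label-encoding'' directions in $\reals^r$, designed so that for every input $(t,\ell)$ the max over patches is attained at a specific patch whose inner product with $\bw$ equals $\pm\epsilon$ plus a common offset $s$, according to $y_{t,\ell}$.

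Verification then reduces to three standard items: (i) $\|\bw(\by)\|\le B$ uniformly in $\by$, via a signed-sum-of-unit-vectors argument calibrated so that the squared norm matches $B^2$; (ii) $\|\bx_{t,\ell}\|\le b_x$, by counting the $\le n$ nonzero fibers of each $\bx_{t,\ell}$ and taking $\alpha=\Theta(b_x/\sqrt n)$; (iii) the max-pool output lies on the correct side of $s$ with margin $\epsilon$, by showing that under $\bw(\by)$ the intended patch strictly dominates the others. The common offset $s$ is absorbed via the bias-reduction argument of Remark~\ref{remark:bias}.

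The main obstacle, where the proof's creativity sits, is the design of $\chi(j,t)$ together with the label-encoding directions. Naive symmetric choices such as $\chi(j,t)=2j_t-1$ collapse under $[\,\cdot\,]_+\circ\max_j$ to $|w_\ell|$, erasing the level index $t$ and giving no gain over a single linear predictor of dimension $r$. Equally, a ``disjoint-block'' design in which each level $t$ owns an orthogonal slice of $\bw$ forces $\bw$ to spread its norm across $T$ blocks and, by Cauchy--Schwarz, still shatters only $\Ocal((Bb_x/\epsilon)^2)$ points---no $\log n$ gain. A successful design must therefore interleave the level and within-level encodings so that max-pooling extracts $\log n$ bits of ``label information'' per input \emph{without} costing the shared filter its norm budget. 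I would attempt this either via (a) an explicit Hadamard- or code-based sign pattern across the $n$ patches, pairing each labeling with a single well-chosen direction in $\bw$-space, or (b) a probabilistic construction with random fiber patterns, using concentration to show that a compatible $\bw$ of norm $\le B$ exists for every $\by$ with positive probability. Establishing this compatibility simultaneously across all $2^m$ labelings under the shared-filter, norm-$B$ constraint is the central technical content.
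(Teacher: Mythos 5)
Your write-up is an outline, not a proof: you identify ``the design of $\chi(j,t)$ together with the label-encoding directions'' as ``the central technical content'' and then leave it to a hypothetical Hadamard/code or probabilistic construction without carrying either out. That gap is not cosmetic, and in fact the specific framework you set up cannot close it.

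Concretely: with length-$2$ convolutional axes and patch length $1$, the filter is a vector $\bw\in\reals^r$ and each patch of $\bx_{t,\ell}$ is the fiber $\alpha\,\chi(j,t)\,\be_\ell$. The pre-pooling activation at position $j$ is therefore $[\,\alpha\,\chi(j,t)\,w_\ell\,]_+$, and the max over $j$ depends on $(\bw,t,\ell)$ only through $w_\ell$ and the \emph{set} of signs $\{\chi(j,t):j\}\subseteq\{-1,0,+1\}$. That set has at most four distinct possibilities, so $\max_j[\,\alpha\chi(j,t)w_\ell\,]_+$ is one of $\alpha|w_\ell|$, $\alpha[w_\ell]_+$, $\alpha[-w_\ell]_+$, or $0$ --- it forgets almost all of $t$. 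No choice of $\chi$ can produce more than $\Ocal(r)$ shattered points in this setup; the $\log n$ factor is unreachable. This is the very collapse you flag as an obstacle, but it is a defect of your input/patch/filter geometry, not merely of the ``naive'' $\chi$: the filter has only $r$ coordinates, so it simply does not have room to encode $\log n$ extra bits per group.

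The paper's construction sidesteps this by making the filter itself high-dimensional and sparse. For $B=b_x=1$, $\epsilon=1/2$ it uses length-$3$ axes with patch size $2\times\cdots\times 2$, so the filter is a tensor with $2^{m'}=n$ entries; the key move is to put a \emph{single} nonzero $1$ in the filter at position $\mathbf{1}+\by$, thereby encoding all $m'=\log n$ label bits in the \emph{location} of that nonzero. Each input $\bx^i$ has a single nonzero at $(2,\ldots,2,3,2,\ldots,2)$ (with the $3$ in slot $i$), and as the filter slides over the $n$ base positions, its nonzero visits exactly the positions $\{(j+\by):j\in[2]^{m'}\}$; this set hits $\bx^i$'s nonzero iff $y_i=1$, which is precisely why max-pooling returns $y_i$. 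The spectral-norm bound $\|W\|\le 1$ then holds \emph{because} the filter is a single indicator, so each hidden unit reads a distinct tensor coordinate and $W$ is a coordinate-projection. For general $B,b_x,\epsilon$ the paper appends an extra axis of length $L=(B b_x/2\epsilon)^2$, partitions the $m$ points into $L$ groups, gives the filter one nonzero of magnitude $2\epsilon/b_x$ per $L$-slice, and shows $\|W\|\le B$ by Cauchy--Schwarz across the $L$ disjoint slices. Your counting heuristic correctly identifies the right order $m=\Theta(r\log n)$ as the target, but the encoding that realizes it lives in a filter of dimension $n\cdot L$, not $r$; fixing your approach would require abandoning the length-$2$/patch-$1$ convolutional axes in favor of something like the paper's length-$3$/patch-$2$ design, at which point you essentially reproduce the paper's argument.
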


The proof of the theorem is rather technical, but can be informally described as follows (where we focus just on where the dependence on $p=\Ocal(\log(n))$ comes from): We consider each input $\bx$ as a tensor of size $3\times 3\times\ldots \times 3$ (with entries indexed by a vector in $\{1,2,3\}^p)$, and the patches are all sub-tensors of dimensions $2\times 2\times \ldots \times 2$. We construct inputs $\bx_1,\ldots,\bx_p$, where each $\bx_i$ contains zeros, and a single $1$ value at  coordinate $(2,2,\ldots,2,3,2,\ldots,2)$ (with a $3$ at position $i$, and $2$ elsewhere). Given a vector $(y_1,\ldots,y_p)\in \{0,1\}^p$ of target values, we construct the convolutional filter $\bw$ (a $p$-th order tensor of dimensions $2\times 2 \times \ldots \times 2$) to have zeros, and a single $1$ value at coordinate $(y_1+1,\ldots,y_p+1)$. Thus, we ``encode'' the full set of target values in $\bw$, and a simple calculation reveals that given $\bx_i$, the network will output $1$ if $y_i=1$, and $0$ otherwise. Thus, we can shatter $p=\Ocal(\log(n))$ points. An extension of this idea allows us to also incorporate the right dependence on the other problem parameters. 

\section{Conclusions and Open Questions}\label{sec:conclusions}

In this paper, we studied sample complexity upper and lower bounds for one-hidden layer neural networks, based on bounding the norms of the weight matrices. We showed that in general, bounding the spectral norm cannot lead to size-independent guarantees, whereas bounding the Frobenius norm does. However, the constructions also pointed out where the lower bounds can be circumvented, and where a spectral norm control suffices for width-independent guarantees: First, when the activations are sufficiently smooth, and second, for certain types of convolutional networks. 

Our work raises many open questions for future research. For example, how does having a fixed input dimension $d$ affect the sample complexity of neural networks? Our lower bound in \thmref{thm:lowerbound_refined_frobenius} indicates small $d$ might reduce the sample complexity, but currently we do not have good upper bounds that actually establish that (at least without depending on the network width as well). Alternatively, it could also be that \thmref{thm:lowerbound_refined_frobenius} can be strengthened. In a related vein, our lower bound for convolutional networks (\thmref{thm:lower bound cnn}) requires a relatively high dimension, at least on the order of the network width. Can we get smaller bounds if the dimension is constrained?

In a different direction, we showed that spectral norm control does not lead to width-free guarantees with non-smooth activations, whereas such guarantees are possible with very smooth activations. What about other activations? Can we characterize when can we get such guarantees for a given activation function? Or at least, can we improve the dependence on the norm bound for sufficiently smooth non-polynomial activations?

As to convolutional networks, we studied two particular architectures employing weight-sharing: One with a linear output layer, and one with a fixed Lipschitz pooling layer mapping to $\reals$. Even for one-hidden-layer networks, this leaves open the question of characterizing the width-independent sample complexity of networks $\bx\mapsto \bu^\top \rho\circ \sigma(W\bx)$, where $W$ implements weight-sharing and $\rho$ is a pooling operator mapping to $\reals^{p}$ with $p>1$ (\citet{ledent2021norm} provide upper bounds in this setting, but they are not size-independent and we conjecture that they can be improved). Moreover, we still do not know whether parameters such as the amount of overlap in the patches (see \thmref{thm:convupbound}) are indeed necessary. 

All our bounds are in terms of the parameter matrix norm, $\norm{W}$ or $\norm{W}_F$. Some existing bounds, such as in \citet{bartlett2017spectrally}, depend instead on the distance from some fixed data-independent matrix $W_0$ (e.g., the initialization point), a quantity which can be much smaller. We chose to ignore this issue in our paper for simplicity, but it would be useful to generalize our bounds to incorporate this.

Beyond these, perhaps the most tantalizing open question is whether our results can be extended to deeper networks, and what types of bounds we might expect. Even if we treat the depth as a constant, existing bounds for deeper networks are either not width-independent (e.g., \citet{neyshabur2018pac,daniely2019generalization}), utilize norms much stronger than even the Frobenius norm (e.g., \citet{anthony1999neural,bartlett2017spectrally}), or involve products of Frobenius norms, which is quite restrictive \citep{neyshabur2015norm,golowich2018size}. Based on our results, we know that a bound depending purely on the spectral norms is impossible in general, but conjecture that the existing upper bounds are all relatively loose. A similar question can be asked for more specific architectures such as convolutional networks.

\subsubsection*{Acknowledgements}

This research is supported in part by European Research Council (ERC) grant 754705, and NSF-BSF award 1718970. We thank Roey Magen for spotting some bugs in the proof of  \thmref{thm:frobupbound} in a previous version of this paper.

\bibliographystyle{plainnat}
\bibliography{bib}

\appendix

\section{Proofs}\label{app:proofs}
	
\subsection{Proof of \thmref{thm:lowerbound}}

We will assume without loss of generality that the condition $\inf_{\delta\in (0,1)} \left|\frac{\sigma(\delta)+\sigma(-\delta)}{\delta}\right|\geq \alpha$ stated in the theorem holds without an absolute value, namely
\begin{equation}\label{eq:sigmareq}
	\inf_{\delta\in (0,1)} \frac{\sigma(\delta)+\sigma(-\delta)}{\delta}~\geq ~\alpha~.
\end{equation}
To see why, note that if $\inf_{\delta\in (0,1)} \left|\frac{\sigma(\delta)+\sigma(-\delta)}{\delta}\right|\geq \alpha\geq 0$, then $\frac{\sigma(\delta)+\sigma(-\delta)}{\delta}$ can never change sign as a function of $\delta$ (otherwise it will be $0$ for some $\delta$). Hence, the condition implies that either $\frac{\sigma(\delta)+\sigma(-\delta)}{\delta}~\geq ~\alpha$ for all $\delta\in (0,1)$, or that $-\frac{\sigma(\delta)+\sigma(-\delta)}{\delta}~\geq ~\alpha$ for all $\delta\in (0,1)$. We simply choose to treat the first case, as the second case can be treated with a completely identical analysis, only flipping some of the signs.

Fix some sufficiently large dimension $d$ and integer $m\leq d$ to be chosen later. Choose $\bx_1,\ldots,\bx_m$ to be some $m$ orthogonal  vectors of norm $b_x$ in $\reals^d$.
Let $X$ be the $d\times m$ matrix whose $i$-th column is $\bx_i$. 
Given this input set, it is enough to show that there is some number $s$, such that for any $\by\in \{0,1\}^m$, we can find a predictor (namely, $\bu,W$ depending on $\by$) in our class, such that $\norm{\bu}\leq b$, $\norm{W}\leq B$, and 
\begin{equation}\label{eq:shattering}
	\forall i~, \bu^\top \sigma(W\bx_i)~\text{is}~\begin{cases} \leq s-\epsilon& y_i=0\\
		\geq s+\epsilon& y_i=1\end{cases}~.
\end{equation}
We will do so as follows: We let
\[
\bu=\frac{b}{\sqrt{n}}\mathbf{1}~~~\text{and}~~~
W = \frac{\delta}{b_x^2} V\text{diag}(\by)X^\top~,
\]
Where $\delta\in (0,1)$ is a certain scaling factor and $V$ is a $\pm 1$-valued matrix of size $n\times m$, both to be chosen later. In particular, we will assume that $V$ is approximately balanced, in the sense that for any column $i\in [n]$ of $V$, if $p_i$ is the portion of $+1$ entries in the column, then
\begin{equation}\label{eq:Vreq1}
	\max_i \left|\frac{1}{2}-p_i\right|~\leq~\frac{\alpha}{8}~.
\end{equation}

For any $i\in [m]$, since $\bx_1,\ldots,\bx_m$ are orthogonal and of norm $b_x$, we have
\begin{align*}
	\bu^\top \sigma(W\bx_i)~&=~ \bu^\top \sigma\left(\frac{\delta}{b_x^2} V\text{diag}(\by) X^\top \bx_i\right)~=~\bu^\top \sigma(\delta y_i \bv_i)~=~ \frac{b}{\sqrt{n}}\sum_{j=1}^{n} \sigma(\delta y_i V_{j,i})
\end{align*}
where $\bv_i$ is the $i$-th column of $V$, and $V_{j,i}$ is the entry of $V$ in the $j$-th row and $i$-th column. Then we have the following:
\begin{itemize}
	\item If $y_i=0$, this equals $b\sqrt{n}\sigma(0)=0$.
	\item If $y_i=1$, this equals $b\sqrt{n}\left(p_{i}\sigma(\delta)+(1-p_{i})\sigma(-\delta)\right)$, where $p_{i}\in [\frac{1}{2}-\frac{\alpha}{8},\frac{1}{2}+\frac{\alpha}{8}]$ is the portion of entries in the $i$-th column of $V$ with value $+1$. Rewriting it and using \eqref{eq:sigmareq}, \eqref{eq:Vreq1} and the fact that $\sigma(\cdot)$ is $1$-Lipschitz on $[-1,+1]$, we get the expression
	\[
	b\sqrt{n}\left(\frac{\sigma(\delta)+\sigma(-\delta)}{2}-\left(\frac{1}{2}-p_i\right)\left(\sigma(\delta)-\sigma(-\delta)\right)\right)~\geq~  b\sqrt{n}\left(\frac{\delta\alpha}{2}-\frac{\alpha}{8}\cdot 2\delta\right)~=~ \frac{b\sqrt{n}\delta\alpha}{4}~.
	\]
\end{itemize}
Recalling \eqref{eq:shattering}, we get that by fixing $s=\frac{\sqrt{n}\delta \alpha}{8}$, we can shatter the dataset as long as 
\begin{equation}\label{eq:deltareq1}
	\frac{b\sqrt{n}\delta\alpha}{8}\geq \epsilon~~~~\Rightarrow~~~ \delta\geq \frac{8\epsilon}{\alpha b\sqrt{n}}~.
\end{equation}

Leaving this condition for a moment, we now turn to specify how $\delta,V$ is chosen, so as to satisfy the condition $\norm{W}=\norm{\frac{\delta}{b_x^2} V\text{diag}(\by)X^\top}\leq B$. To that end, we let $V$ be any $\pm 1$-valued $n\times m$ matrix which satisfies \eqref{eq:Vreq1} as well as $\norm{V}\leq c(\sqrt{n}+\sqrt{m})$, where $c\geq 1$ is some universal constant. Such a matrix necessarily exists assuming $m$ is sufficiently larger than $\frac{1}{\alpha^2}$\footnote{This follows from the probabilistic method: If we pick the entries of $V$ uniformly at random, then both conditions will hold with some arbitrarily large probability (assuming $m$ is sufficiently larger than $1/\alpha^2$, see for example \citet{seginer2000expected}), hence the required matrix will result with some positive probability.}. It then follows that $\norm{W}\leq \frac{\delta}{b_x^2} \norm{V}\cdot \norm{\text{diag}(\by)}\cdot \norm{X}\leq \frac{\delta}{b_x^2}\cdot c(\sqrt{n}+\sqrt{m})\cdot b_x = \frac{\delta}{b_x}\cdot c(\sqrt{n}+\sqrt{m})$. Therefore, by assuming
\[
\delta\leq \frac{Bb_x}{c(\sqrt{n}+\sqrt{m})},
\]
we ensure that $\norm{W}\leq B$.

Collecting the conditions on $\delta$ (namely, that it is in $(0,1)$, satisfies \eqref{eq:deltareq1}, as well as the displayed equation above), we get that there is an appropriate choice of $\delta$ and we can shatter our $m$ points, as long as $m$ is sufficiently larger than $1/\alpha^2$ and that
\[
1~>~ \frac{B b_x}{c(\sqrt{n}+\sqrt{m})}~\geq~ \frac{8\epsilon}{\alpha b\sqrt{n}}~.
\]
The first inequality is satisfied if (say) we can choose $m\geq (Bb_x/c)^2$ (which we will indeed do in the sequel). As to the second inequality, it is certainly satisfied if $m\geq n$, as well as
\[
\frac{Bb_x }{2c\sqrt{m}}~\geq~ \frac{8\epsilon}{\alpha b\sqrt{n}}~~~\Longrightarrow~~~
m\leq \left(\frac{\alpha}{16c}\right)^2\cdot \frac{(bB b_x)^2 n}{\epsilon^2}~.
\]
Thus, we can shatter any number $m$ of points up to this upper bound. Picking $m$ on this order (assuming it is sufficiently larger than $1/\alpha^2$, $B^2$ or $n$), assuming that the dimension $d$ is larger than $m$, and renaming the universal constants, the result follows.

\subsection{Proof of \thmref{thm:frobupbound}}

To simplify notation, we rewrite $\sup_{\bu,W:\norm{\bu}\leq b,\norm{W}_F\leq B}$ as simply $\sup_{\bu,W}$. Also, we let $\bw_j$ denote the $j$-th row of the matrix $W$.

Fix some set of inputs $\bx_1,\ldots,\bx_m$ with norm at most $b_x$. The Rademacher complexity equals
\begin{align*}
	\E_{\bepsilon} \sup_{\bu,W}&\frac{1}{m}\sum_{i=1}^{m}\epsilon_i \bu^\top\sigma(W\bx_i)~=~
	\E_{\bepsilon} \sup_{\bu,W}\frac{1}{m}\bu^\top\left(\sum_{i=1}^{m}\epsilon_i \sigma(W\bx_i)\right)\\
	&=~\frac{b}{m}\cdot\E_{\bepsilon} \sup_{W}\left\|\sum_{i=1}^{m}\epsilon_i \sigma(W\bx_i)\right\|
	~=~\frac{b}{m}\cdot\E_{\bepsilon} \sup_{W}\sqrt{\sum_{j=1}^{n}\left(\sum_{i=1}^{m}\epsilon_i \sigma( \bw_j^\top\bx_i)\right)^2}~.
\end{align*}
Each matrix in the set $\{W\in \reals^{d\times n}:\norm{W}_F\leq B\}$ is composed of rows, whose sum of squared norms is at most $B^2$. Thus, the set can be equivalently defined as the set of $d\times n$ matrices, where each row $j$ equals $v_j \bw_j$ for some $v_j>0$, $\norm{\bw}_j\leq 1$, and $\norm{(v_1,\ldots,v_n)}^2=\norm{\bv}^2\leq B^2$. Noting that each $v_j$ is positive, we can upper bound the expression in the displayed equation above as follows:
\begin{align}
	&\frac{b}{m}\cdot\E_{\bepsilon} \sup_{\bv,\{\bw_j\}}\sqrt{\sum_{j=1}^{n}\left(\sum_{i=1}^{m}\epsilon_i \sigma(v_j\bw_j^\top\bx_i)\right)^2}\notag\\
	&=~
	\frac{b}{m}\cdot\E_{\bepsilon} \sup_{\bv,\{\bw_j\}}\sqrt{\sum_{j=1}^{n}v_j^2\left(\sum_{i=1}^{m} \frac{\epsilon_i}{v_j}\sigma(v_j\bw_j^\top\bx_i)\right)^2}\notag\\
	&\leq~
	\frac{b}{m}\cdot\E_{\bepsilon} \sup_{\bv,\bv',\{\bw_j\}}\sqrt{\sum_{j=1}^{n}{v'}_j^2\left(\sum_{i=1}^{m} \frac{\epsilon_i}{v_j}\sigma(v_j\bw_j^\top\bx_i)\right)^2}~,\label{eq:frobupbound1}
\end{align}
where $\bv'=(v'_1,\ldots,v'_n)$ satisfies $\norm{\bv'}^2=\sum_{j=1}^{n}{v'}_j^2\leq B^2$ (note that $\bv$ must also satisfy this constraint). Considering this constraint in \eqref{eq:frobupbound1}, we see that for any choice of $\bepsilon,\bv$ and $\bw_1,\ldots,\bw_n$, the supremum over $\bv'$ is clearly attained by letting ${v'}_{j^*} = B$ for some $j^*$ for which $\left(\sum_{i=1}^{m} \frac{\epsilon_i}{v_j}\sigma(v_j\bw_j^\top\bx_i)\right)^2$ is maximized, and ${v'}_{j}=0$ for all $j\neq j*$. Plugging this observation back into \eqref{eq:frobupbound1} and writing the supremum constraints explicitly, we can upper bound the displayed equation by
\begin{align}
	&\frac{bB}{m}\cdot\E_{\bepsilon} \sup_{\bv:\min_j v_j>0,\norm{\bv}\leq B}~~\sup_{\bw_1,\ldots\bw_n:\max_j\norm{\bw_j}\leq 1}\max_j\left|\sum_{i=1}^{m} \frac{\epsilon_i}{v_j}\sigma(v_j\bw_j^\top\bx_i)\right|\notag\\
	&~~~~=~
	\frac{bB}{m}\cdot\E_{\bepsilon} \sup_{v\in (0,B],\bw:\norm{\bw}\leq 1}\left|\sum_{i=1}^{m} \frac{\epsilon_i}{v}\sigma(v\bw^\top\bx_i)\right|\notag\\
	&~~~=~
	\frac{bB}{m}\cdot\E_{\bepsilon} \sup_{v\in (0,B],\bw:\norm{\bw}\leq 1}\left|\sum_{i=1}^{m}\epsilon_i \psi_v\left(\bw^\top\bx_i\right)\right|~,
	\label{eq:frobupbound2}
\end{align}
where $\psi_{v}(z):=\frac{\sigma(vz)}{v}$ for any $z\in \reals$. Since $\sigma(\cdot)$ is $L$-Lipschitz, it follows that $\psi_{\bv}(\cdot)$ is also $L$-Lipschitz regardless of $v$, since for any $z,z'\in \reals$,
\[
|\psi_v(z)-\psi_v(z')|~=~\frac{|\sigma(vz)-\sigma(vz')|}{v}~\leq~ \frac{L|vz-vz'|}{v}~=~ L|z-z'|~.
\]
Thus, the supremum over $v$ in \eqref{eq:frobupbound2} corresponds to a supremum over a class of $L$-Lipschitz functions which all equal $0$ at the origin (since $\psi_v(0)=\frac{\sigma(0)}{v}=0$ by assumption). As a result, we can upper bound \eqref{eq:frobupbound2} by
\[
\frac{bB}{m}\cdot\E_{\bepsilon} \sup_{\psi\in \Psi_L,\bw:\norm{\bw}\leq 1}\left|\sum_{i=1}^{m}\epsilon_i \psi\left(\bw^\top\bx_i\right)\right|~,
\]
where $\Psi_L$ is the class of \emph{all} $L$-Lipschitz functions which equal $0$ at the origin. 

To continue, it will be convenient to get rid of the absolute value in the displayed expression above. This can be done by noting that the expression equals
\begin{align}
&\frac{bB}{m}\cdot\E_{\bepsilon}\sup_{\psi\in \Psi_L,\bw:\norm{\bw}\leq 1}\max\left\{\sum_{i=1}^{m}\epsilon_i \psi\left(\bw^\top\bx_i\right)~,~-\sum_{i=1}^{m}\epsilon_i \psi\left(\bw^\top\bx_i\right)\right\}\notag\\
&\stackrel{(*)}{\leq}~\frac{bB}{m}\cdot\E_{\bepsilon}\left[\sup_{\psi\in \Psi_L,\bw:\norm{\bw}\leq 1}\sum_{i=1}^{m}\epsilon_i \psi\left(\bw^\top\bx_i\right)+\sup_{\psi\in \Psi_L,\bw:\norm{\bw}\leq 1}-\sum_{i=1}^{m}\epsilon_i \psi\left(\bw^\top\bx_i\right)\right]\notag\\
&\stackrel{(**)}{=}~
\frac{2bB}{m}\cdot\E_{\bepsilon}\sup_{\psi\in \Psi_L,\bw:\norm{\bw}\leq 1}\sum_{i=1}^{m}\epsilon_i \psi\left(\bw^\top\bx_i\right)~,\label{eq:frobupbound3}
\end{align}
where $(*)$ follows from the fact that $\max\{a,b\}\leq a+b$ for non-negative $a,b$ and the observation that the supremum is always non-negative (it is only larger, say, than the specific choice of $\psi$ being the zero function), and $(**)$ is by symmetry of the function class $\Psi_L$ (if $\psi\in \Psi_L$, then $-\psi\in \Psi_L$ as well).

Considering \eqref{eq:frobupbound3}, this is $2bB$ times the Rademacher complexity of the function class $\{\bx\mapsto \psi(\bw^\top\bx):\psi\in\Psi_L,\norm{\bw}\leq 1\}$. In other words, this class is a composition of all linear functions of norm at most $1$, and all univariate $L$-Lipschitz functions crossing the origin. Fortunately, the Rademacher complexity of such composed classes was analyzed in \citet{golowich2017size} for a different purpose. In particular, noting that $\bw^\top \bx_i$ is bounded in $[-b_x,b_x]$, and applying Theorem 4 from that paper, we get that \eqref{eq:frobupbound3} is upper bounded by
\begin{equation}
2bB\cdot cL\left(\frac{b_x}{\sqrt{m}}+\log^{3/2}(m)\cdot \Rcal_m(\Hcal)\right)\label{eq:frobupbound4}
\end{equation}
for some universal constant $c>0$, where $\Hcal=\{\bx\mapsto \bw^\top \bx:\norm{\bw}\leq 1\}$, and  $\Rcal_m(\Hcal)$ is the Rademacher complexity of $\Hcal$. 

To complete the proof, we need to employ a standard upper bound on  $\hat{\Rcal}_m(\Hcal)$ (see \citet{bartlett2002rademacher,shalev2014understanding}), which we derive below for completeness:
\begin{align*}
	\hat{\Rcal}_m(\Hcal)~&=~\E_{\bepsilon}\sup_{h\in\Hcal}\frac{1}{m}\sum_{i=1}^{m}\epsilon_i h(\bx_i)~=~\frac{1}{m}\E_{\bepsilon}\sup_{\bw:\norm{\bw}\leq 1}\sum_{i=1}^{m}\epsilon_i \bw^\top \bx_i\\
	&=~\frac{1}{m}\E_{\bepsilon} \sup_{\bw:\norm{\bw}\leq 1}\bw^\top\left(\sum_{i=1}^{m} \epsilon_i \bx_i\right)
	~\stackrel{(*)}{=}~
	\frac{1}{m}\E_{\bepsilon}\left\|\sum_{i=1}^{m}\epsilon_i \bx_i\right\|\\
	&\stackrel{(**)}{\leq}~ \frac{1}{m}\sqrt{\E_{\bepsilon}\left\|\sum_{i=1}^{m}\epsilon_i \bx_i\right\|^2}~=~
	\frac{1}{m}\sqrt{\E_{\bepsilon}\sum_{i,i'=1}^{m}\epsilon_i \epsilon_{i'} \bx_i^\top \bx_{i'}}\\
	&=~ \frac{1}{m}\sqrt{\sum_{i=1}^{m}\norm{\bx_i}^2}~\leq~ \frac{1}{m}\sqrt{mb_x^2}~=~\frac{b_x}{\sqrt{m}}~,
\end{align*}
where $(*)$ is by the Cauchy-Schwarz inequality, and $(**)$ is by Jensen's inequality. Plugging this back into \eqref{eq:frobupbound4}, we get the following upper bound:
\[
2bB\cdot cL\left(\frac{b_x}{\sqrt{m}}+\log^{3/2}(m)\cdot \frac{b_x}{\sqrt{m}}\right)~=~ 2cbBb_x L\cdot \frac{1+\log^{3/2}(m)}{\sqrt{m}}~.
\]
Upper bounding this by $\epsilon$, solving for $m$ and simplifying a bit, the result follows.

\subsection{Proof of \thmref{thm:lowerbound_refined_frobenius}}

We fix a number of inputs $m$ to be chosen later. 
We let $X$ be the $d\times m$ matrix whose $i$-th column is $\bx_i$. We choose X to be any matrix such that the following conditions hold for some universal constant $c>0$:
\begin{itemize}
	\item Every entry of $X$ is in $\{\pm \frac{b_x}{\sqrt{d}}\}$ (hence $\forall i,~\norm{\bx_i}=1$)
	\item $\max_{i'\neq i}|\bx_i^\top \bx_{i'}|\leq c b_x^2\sqrt{\frac{\log(d)}{d}}$
	\item $\norm{X}\leq c b_x\left(1+\sqrt{\frac{m}{d}}\right)$.
\end{itemize}
The existence of such a matrix follows from the probabilistic method: If we simply choose each entry of $X$ independently and uniformly from $\{\pm \frac{1}{\sqrt{d}}\}$, then the first condition automatically holds; The second condition holds with high probability by a standard concentration of measure argument and a union bound; And the third condition holds with arbitrarily high constant probability (by Markov's inequality and the fact that $\E[\norm{\frac{\sqrt{d}}{b_x}\cdot X}]\leq c(\sqrt{d}+\sqrt{m})$, see for example \citet{seginer2000expected}). Thus, by a union bound, a random matrix satisfies all of the above with some positive probability, hence such a matrix $X$ exists. 

Given this input set, it is enough to show that for any $\by\in \{0,1\}^m$, we can find a predictor (namely, $\bu,W$ depending on $\by$) in our class, such that $\norm{\bu}\leq b$, $\norm{W}\leq B$, and 
\begin{equation}\label{eq:shattering_refined}
	\forall i~, \bu^\top \sigma(W\bx_i)~\text{is}~\begin{cases} \leq 0 & y_i=0\\
		\geq 2\epsilon& y_i=1\end{cases}~.
\end{equation}
We will do so as follows: Letting $a\geq 0, p\in [0,1]$ be some parameters to be chosen later, we let
\[
\bu=\frac{b}{\sqrt{n}}\mathbf{1}~~~\text{and}~~~
W = \frac{1}{b_x^2}\cdot V\text{diag}(\by)X^\top~,
\]
Where $V\in \reals^{n\times m}$ is a random matrix with i.i.d. entries chosen as follows:
\[
V_{k,i} ~=~ \begin{cases} 0 & \text{w.p.}~ 1-p\\ a& \text{w.p.}~ \frac{p}{2}\\-a&\text{w.p.}~ \frac{p}{2}~.\end{cases}
\]
Note that the condition $\norm{\bu}\leq b$ follows directly from the definition of $\bu$. We will show that there is a way to choose the parameters $a,p$ such that the following holds: For any $\by\in \{0,1\}^m$, with high probability over the choice of $V$, \eqref{eq:shattering_refined} holds as well as $\norm{W}\leq B$. This implies that for any $\by$, there exists some fixed choice of $V$ (and hence $W$) such that $\norm{W}\leq B$ as well as \eqref{eq:shattering_refined} holds, implying the theorem statement. 

We break this argument into two lemmas:
\begin{lemma}\label{lem:refined_shatter}
	There exists a universal constant $c'>0$ such that the following holds: For any $\epsilon\geq 0$, $\delta\in (0,\exp(-1))$ and $\by\in \{0,1\}^m$, if we assume
	\[
	\beta =  c' a \sqrt{\frac{\log(d)}{d}}\log\left(\frac{m}{\delta}\right)\left(\sqrt{pm}+1\right)
	\]
	as well as $a\geq 4\beta$ and $bap\sqrt{n}\geq 8\epsilon$, then \eqref{eq:shattering_refined} holds with probability at least $1-\delta-m\exp(-pn/16)$ over the choice of $V$. 
\end{lemma}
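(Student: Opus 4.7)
The plan is to decompose each entry $(W\bx_i)_k$ into a ``signal'' part $y_iV_{k,i}$ plus an off-diagonal ``noise'' part $N_{k,i}$, concentrate the noise uniformly, and then dispatch the two target inequalities in \eqref{eq:shattering_refined} by a direct case analysis on $y_i$. Using $\norm{\bx_i}^2=b_x^2$, one has
\[
(W\bx_i)_k ~=~ \frac{1}{b_x^2}\sum_{j=1}^{m} V_{k,j}\,y_j\,\bx_j^\top\bx_i ~=~ y_i V_{k,i} + N_{k,i},
\qquad N_{k,i}~:=~\frac{1}{b_x^2}\sum_{j\neq i} V_{k,j}\,y_j\,\bx_j^\top\bx_i.
\]
The goal is to show $\max_{k,i}|N_{k,i}|\leq \beta$ on a high-probability event, so that the threshold $\beta$ in $\sigma(z)=[z-\beta]_+$ cancels any off-diagonal interference.

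Next I would control the noise via Bernstein's inequality. Each summand of $N_{k,i}$ is independent and mean-zero; by the incoherence of the columns of $X$, it is bounded in magnitude by $c\,a\sqrt{\log(d)/d}$, and since $V_{k,j}\neq 0$ only with probability $p$, its second moment is at most $c^2 p\,a^2 \log(d)/d$. Bernstein then yields a tail of the form
\[
\Pr\bigl[|N_{k,i}|>t\bigr] ~\leq~ 2\exp\!\left(-c''\min\!\left\{\frac{t^2}{p a^2 \log(d)/d},\,\frac{t}{a\sqrt{\log(d)/d}}\right\}\right).
\]
Setting $t=\beta$ with $\beta$ as defined in the lemma and union-bounding over the $mn$ pairs $(k,i)$ shows that on an event $\Acal$ of probability at least $1-\delta$ one has $\max_{k,i}|N_{k,i}|\leq \beta$, provided the universal constant $c'$ absorbs the Bernstein constants.

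Condition on $\Acal$. If $y_i=0$, then $(W\bx_i)_k=N_{k,i}\leq \beta$, so $\sigma((W\bx_i)_k)=[N_{k,i}-\beta]_+=0$ for every $k$, giving $\bu^\top\sigma(W\bx_i)=0$ as required. If $y_i=1$, I would split by the value of $V_{k,i}$: when $V_{k,i}\leq 0$, $(W\bx_i)_k\leq\beta$ and the entry is killed; when $V_{k,i}=a$, the assumption $a\geq 4\beta$ gives $\sigma((W\bx_i)_k)\geq a-2\beta\geq a/2$. Hence
\[
\bu^\top\sigma(W\bx_i) ~\geq~ \frac{b}{\sqrt n}\cdot\bigl|\{k:V_{k,i}=a\}\bigr|\cdot\frac{a}{2}.
\]
Since $|\{k:V_{k,i}=a\}|$ is $\mathrm{Binomial}(n,p/2)$, a Chernoff bound gives $|\{k:V_{k,i}=a\}|\geq pn/4$ with probability at least $1-\exp(-pn/16)$; a union bound over $i\in[m]$ contributes the $m\exp(-pn/16)$ term in the failure probability. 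On the joint good event, $\bu^\top\sigma(W\bx_i)\geq bpa\sqrt n/8\geq \epsilon$, and a mild tightening of the Chernoff constant (e.g.\ $|\{k:V_{k,i}=a\}|\geq pn/2 - o(pn)$) yields the $\geq 2\epsilon$ needed for \eqref{eq:shattering_refined}, the extra factor being absorbed into the constant $c'$ (or, if preferred, by replacing $8$ by $16$ in the hypothesis).

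The delicate step is the noise concentration: one must simultaneously exploit the variance shrinkage coming from the sparsity parameter $p$ (responsible for the $\sqrt{pm}$ factor rather than $\sqrt m$) and the magnitude shrinkage coming from the incoherence $|\bx_j^\top\bx_i|\lesssim b_x^2\sqrt{\log(d)/d}$ (responsible for the linear-in-$\log(m/\delta)$ term), and then pack both contributions into a single $\beta$ that is still smaller than $a/4$. Once $\beta$ is calibrated in this way, the remaining case analysis and Chernoff bound are routine.
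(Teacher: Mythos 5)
Your proposal reproduces the paper's proof almost line for line: the same decomposition $(W\bx_i)_k = y_i V_{k,i} + N_{k,i}$, the same Bernstein concentration of the off-diagonal noise to calibrate the threshold $\beta$ (with the same two regimes giving the $\sqrt{pm}$ and $\log(m/\delta)$ factors), the same case split on $y_i$ using $a\geq 4\beta$, and the same multiplicative Chernoff bound on $|\{k:V_{k,i}=a\}|$ producing the $m\exp(-pn/16)$ failure term. You are in fact slightly more careful on two small points: you correctly union-bound over all $mn$ pairs $(k,i)$ (the paper's displayed union bound is only over $i$, though the argument needs it for all $k$ as well, which would change $\log(m/\delta)$ to $\log(mn/\delta)$ -- harmless inside the $\tilde\Omega$), and you explicitly flag the constant-factor gap in the final step (the computation gives $bap\sqrt{n}/8$ rather than the $bap\sqrt{n}/4$ the paper asserts, which the paper did not note); your proposed fix of tightening the Chernoff constant or replacing $8$ by $16$ in the hypothesis resolves it.
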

\begin{proof}
	Let $\bw_k$ be the $k$-th row of $W$. Fixing some $i\in [m]$, we have
	\begin{align}
		\bu^\top \sigma(W\bx_i)~&=~ \bu^\top [W\bx_i-\beta]_+ ~=~ \frac{b}{\sqrt{n}}\sum_{k=1}^{n}[\bw_k^\top \bx_i-\beta]_+
		~=~\frac{b}{\sqrt{n}}\sum_{k=1}^{n}\left[\sum_{i'=1}^{m}\frac{1}{b_x^2}V_{k,i'}y_{i'} \bx_{i'}^\top \bx_i-\beta\right]_+\notag\\
		&=~ \frac{b}{\sqrt{n}}\sum_{k=1}^{n}\left[V_{k,i}y_i+\sum_{i'\neq i}\frac{1}{b_x^2}V_{k,i'}y_{i'}\bx_{i'}^\top \bx_i-\beta\right]_+~.
		\label{eq:shatter_refined}
	\end{align}
	Recalling the assumptions on $X$ and the random choice of $V$, note that $\sum_{i'\neq i}\frac{1}{b_x^2}V_{k,i'}y_{i'}\bx_{i'}^\top \bx_i$ is the sum of $m-1$ independent random variables, each with mean $0$, absolute value at most $|\frac{a}{b_x^2} y_{i'}\bx_{i'^\top}\bx_i|\leq ac\sqrt{\frac{\log(d)}{d}}$, and standard deviation at most $\sqrt{p}\cdot ac\sqrt{\frac{\log(d)}{d}}$. Thus, by Bernstein's inequality, for any $\delta\in (0,\exp(-1))$, it holds with probability at least $1-\delta$ that
	\begin{align*}
		\left|\sum_{i'\neq i}\frac{1}{b_x^2}V_{k,i'}y_{i'}\bx_{i'}^\top \bx_i\right|~&\leq~ c'\left(\sqrt{p}\cdot a\sqrt{\frac{\log(d)}{d}}\cdot\sqrt{(m-1)\log\left(\frac{1}{\delta}\right)}+a\sqrt{\frac{\log(d)}{d}}\cdot\log\left(\frac{1}{\delta}\right)\right)\\
		&\leq~ c' a \sqrt{\frac{\log(d)}{d}}\log\left(\frac{1}{\delta}\right)\left(\sqrt{pm}+1\right)~,
	\end{align*}
	where $c'>0$ is some universal constant. Applying a union bound over all $i\in [m]$, we get that with probability at least $1-\delta$,
	\[
	\max_{i\in [m]}\left|\sum_{i'\neq i}\frac{1}{b_x^2}V_{k,i'}y_{i'}\bx_{i'}^\top \bx_i\right|~\leq~ c'a \sqrt{\frac{\log(d)}{d}}\log\left(\frac{m}{\delta}\right)\left(\sqrt{pm}+1\right)~.
	\]
	Recalling that we choose $\beta$ to equal this upper bound, and plugging back into \eqref{eq:shatter_refined}, we get that with probability at least $1-\delta$, 
	\[
	\forall i\in [m],~~\bu^\top \sigma(W\bx_i)~~\text{is}~~ \begin{cases} \leq \frac{b}{\sqrt{n}}\sum_{k=1}^{n}[V_{k,i} y_i]_+=0& \text{if}~y_i=0\\
		\geq \frac{b}{\sqrt{n}}\sum_{k=1}^{n}[V_{k,i} y_i-2\beta]_+~=~ \frac{b}{\sqrt{n}}\sum_{k=1}^{n}[V_{k,i}-2\beta]_+& \text{if}~y_i=1\end{cases}~.
	\]
	Moreover, by the assumption $a\geq 4\beta$, we have
	\[
	\frac{b}{\sqrt{n}}\sum_{k=1}^{n}[V_{k,i}-2\beta]_+~\geq~\frac{b}{\sqrt{n}}\sum_{k:V_{k,i}=a}\left[a-\frac{a}{2}\right]_+~\geq~ \frac{ba}{2\sqrt{n}}\sum_{k:V_{k,i}=a}1~.
	\]
	Note that $\E_{V}[\sum_{k:V_{k,i}=a}1]=\frac{pn}{2}$. Thus, by a standard multiplicative Chernoff bound and a union bound, $\sum_{k:V_{k,i}=a}1 \geq \frac{pn}{4}$ simultaneously for all $i\in [m]$, with probability at least $1-m\exp(-pn/16)$. Combining with the above using a union bound, we get that with probability at least $1-\delta-m\exp(-pn/16)$ over the choice of $V$, 
	\[
	\forall i\in [m],~~\bu^\top \sigma(W\bx_i)~~\text{is}~~ \begin{cases} \leq 0& \text{if}~y_i=0\\
		\geq \frac{bap\sqrt{n}}{4}& \text{if}~y_i=1\end{cases}~.
	\]
	Since we assume $\frac{bap\sqrt{n}}{4}\geq 2\epsilon$, the result follows. 
\end{proof}

\begin{lemma}\label{lem:frobnormbound_refined}
	For any $\by\in \{0,1\}^m$, with probability at least $\frac{1}{2}$ over the random choice of $V$, the matrix $W$ satisfies
	\[
	\norm{W}_F~\leq~ \frac{a\sqrt{2nmp}}{b_x}~.
	\]
\end{lemma}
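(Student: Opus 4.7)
The plan is to bound $\norm{W}_F$ in expectation (square) and then apply Markov's inequality. The key observation is that $V$ has i.i.d. mean-zero entries with variance $\E[V_{k,i}^2] = a^2 p$, so $\norm{W}_F^2$ is easy to compute in expectation once we recall that the columns of $X$ have norm $b_x$.

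First I would write out $\norm{W}_F^2 = \sum_{k,j} W_{k,j}^2$, where the $(k,j)$ entry of $W = \frac{1}{b_x^2} V \text{diag}(\by) X^\top$ is
\[
W_{k,j} ~=~ \frac{1}{b_x^2}\sum_{i=1}^{m} V_{k,i}\, y_i\, X_{j,i}~.
\]
Since the $V_{k,i}$ are independent with mean $0$ and variance $a^2 p$, and $y_i,X_{j,i}$ are fixed, expanding the square and taking expectation gives
\[
\E\!\left[W_{k,j}^2\right] ~=~ \frac{a^2 p}{b_x^4}\sum_{i=1}^{m} y_i^2\, X_{j,i}^2~.
\]

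Next I would sum over $k$ and $j$. Summing over $j$ and swapping the order yields $\sum_j X_{j,i}^2 = \norm{\bx_i}^2 = b_x^2$, and then summing over $i$ using $y_i\in\{0,1\}$ (so $\sum_i y_i^2 \leq m$) and over the $n$ rows gives
\[
\E\!\left[\norm{W}_F^2\right] ~=~ \frac{n a^2 p}{b_x^4}\sum_{i=1}^{m} y_i^2\, b_x^2 ~\leq~ \frac{n a^2 p\, m}{b_x^2}~.
\]

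Finally, by Markov's inequality applied to the non-negative random variable $\norm{W}_F^2$,
\[
\Pr\!\left[\norm{W}_F^2 \,>\, \frac{2 n a^2 p m}{b_x^2}\right] ~\leq~ \frac{1}{2}~,
\]
so with probability at least $\tfrac{1}{2}$ we get $\norm{W}_F \leq a\sqrt{2nmp}/b_x$, as desired. There is no real obstacle here: the calculation is routine second-moment bookkeeping, exploiting the independence of the entries of $V$ and the fact that the columns of $X$ are unit-normalized to length $b_x$; the only thing to be careful about is that $\text{diag}(\by)$ merely zeroes out some columns of $V\text{diag}(\by) X^\top$, which is handled automatically by the $y_i^2 \leq 1$ bound.
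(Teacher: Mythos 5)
Your proposal is correct and follows essentially the same route as the paper: compute $\E[\|W\|_F^2]$ by expanding each entry of $W$ as a linear combination of the independent mean-zero entries of $V$, obtain the bound $\E[\|W\|_F^2]\leq nmpa^2/b_x^2$, and then apply Markov's inequality. The only cosmetic difference is that you sum $X_{j,i}^2$ over the dimension index to use $\|\bx_i\|^2=b_x^2$, whereas the paper bounds each entry directly via $X_{i,j}^2=b_x^2/d$; for the specific construction of $X$ in the theorem these are identical.
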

\begin{proof}
	By definition of $W,V$ and $X$, we have
	\begin{align*}
		\E[\norm{W}_F^2]~&=~\sum_{k=1}^{n}\sum_{i=1}^{d}\E[W_{k,i}^2]~=~ \sum_{k=1}^{n}\sum_{i=1}^{d}\E\left[\left(\sum_{j=1}^{m}\frac{1}{b_x^2}V_{k,j}y_j X_{i,j}\right)^2\right]\\
		&=~ \frac{1}{b_x^4}\cdot \sum_{k=1}^{n}\sum_{i=1}^{d}\E\left[\sum_{j,j'=1}^{m} V_{k,j}V_{k,j'}y_j y_{j'} X_{i,j} X_{i,j'}\right]\\
		&=~\frac{1}{b_x^4}\cdot\sum_{k=1}^{n}\sum_{i=1}^{d}\sum_{j=1}^{m}\E\left[V_{k,j}^2 y_j^2 X_{i,j}^2\right]~\leq~ \frac{1}{b_x^4}\cdot \frac{b_x^2}{d}\cdot\sum_{k=1}^{n}\sum_{i=1}^{d}\sum_{j=1}^{m}\E[V_{k,j}^2]\\
		&=~ \frac{1}{b_x^2 d}\cdot ndm\cdot pa^2~=~ \frac{nmpa^2}{b_x^2}~.
	\end{align*}
	By Markov's inequality, it follows that with probability at least $\frac{1}{2}$, $\norm{W}_F^2\leq 2\cdot \frac{nmpa^2}{b_x^2}$, from which the result follows. 
\end{proof}

Combining \lemref{lem:refined_shatter} and \lemref{lem:frobnormbound_refined}, and choosing $\delta=1/4$, we get that with some positive probability over the choice of $V$, both the shattering condition in \eqref{eq:shattering_refined} holds, as well as $\norm{W}_F\leq B$, if the following combination of conditions are met (for suitable universal constant $c_1>0$):
\[
m\exp\left(-\frac{pn}{16}\right)< \frac{1}{4}~~,~~
a\geq c_1 a\sqrt{\frac{\log(d)}{d}}\log(4m)(\sqrt{pm}+1)~~,~~ bap\sqrt{n}\geq 8\epsilon~~,~~
a\sqrt{2nmp}~\leq~ Bb_x ~.
\]

We now wish to choose the free parameters $p,a$, to ensure that all these conditions are met (hence we indeed manage to shatter the dataset), while allowing the size $m$ of the shattered set to be as large as possible. We begin by noting that the first condition is satisfied if $p> c_2 \frac{\log(m)}{n}$, and the second condition is satisfied if $d\geq c_3$ and $p\leq c_4\frac{d}{\log(d)\log^2(4m)m}$ (for suitable universal constants $c_2,c_3,c_4>0$). Thus, it is enough to require
\begin{equation}\label{eq:moreconds}
d\geq c_3~~,~~c_2\frac{\log(m)}{n}~<~p~\leq~c_4\frac{d}{\log(d)\log^2(4m)m}~~,~~ bap\sqrt{n}\geq 8\epsilon~~,~~
a\sqrt{2nmp}\leq Bb_x~.
\end{equation}
Let us pick in particular 
\[
p~=~c_4\frac{d}{\log(d)\log^2(4m) m}
\]
(which is valid if it is in $[0,1]$ and if $c_2\frac{\log(m)}{n}\leq c_4\frac{d}{\log(d)\log^2(4m) m}$, or equivalently $m\log(m)\log^2(4m)\leq \frac{c_4 n d}{c_2 \log(d)})$ and
\[
a~=~\frac{8\epsilon}{bp\sqrt{n}}~=~ \frac{8\epsilon \log(d)\log^2(4m)m}{c_4 b d\sqrt{n}}
\]
(which automatically satisfied the third condition in \eqref{eq:moreconds}). Plugging into \eqref{eq:moreconds}, the required conditions hold if we assume
\begin{align*}
	&d\geq c_3~~,~~\frac{c_4 d}{\log(d)\log^2(4m) m}\leq 1~~,~~m\log^3(4m)\leq \frac{c_5 n d}{\log(d)}~~,~~
	c_6 \frac{\epsilon \sqrt{\log(d)}\log(4m) m}{b\sqrt{d}}\leq Bb_x
\end{align*}
for appropriate universal constants $c_5,c_6>0$. The first two conditions are satisfied if we require $m\geq c_7 d \geq c_8$ for suitable universal constants $c_7,c_8>0$. Thus, it is enough to require the set of conditions
\[
m\geq c_6 d \geq c_7~~,~~m\log^3(4m)\leq \frac{c_5 n d}{\log(d)}~~,~~ m\log(4m)\leq \frac{bBb_x\sqrt{d}}{c_6\epsilon \sqrt{\log(d)}}~.
\]
All these conditions are satisfied if we assume $d\geq c_7/c_6$, pick
\begin{equation}\label{eq:malmostfinal}
	m = \tilde{\Theta}\left(\min\left\{nd, \frac{bB b_x}{\epsilon}\sqrt{d}\right\}\right)
\end{equation}
(with the $\tilde{\Theta}$ hiding constants and factors polylogarithmic in $d,n,b,B,b_x,\frac{1}{\epsilon})$), and assume that the parameters are such that this expression is sufficiently larger than $d$, and that $d$ is larger than some universal constant.

It only remains to track what value of $\beta$ we have chosen (as a function of the problem parameters). Combining \lemref{lem:refined_shatter}, the choice of $a,p$ from earlier, as well as \eqref{eq:malmostfinal}, it follows that
\[
\beta ~=~ \tilde{\Theta}\left(\frac{a}{\sqrt{d}}(1+\sqrt{pm})\right)~=~ \tilde{\Theta}\left(\frac{\epsilon m}{bd\sqrt{dn}}(1+\sqrt{d})\right)~=~\tilde{\Theta}\left(\frac{\epsilon m}{bd\sqrt{n}}\right)
~=~ \tilde{\Theta}\left(\min\left\{\frac{\epsilon\sqrt{n}}{b}~,~\frac{Bb_x}{\sqrt{dn}}\right\}\right)~,
\]
which is at most $\tilde{\Ocal}(B b_x/\sqrt{dn})$.

\subsection{Proof of Corollary \ref{cor:lowerbound_refined_frobenius}}

\thmref{thm:lowerbound_refined_frobenius} implies that a certain dataset $\{\bx_i\}_{i=1}^{m}$ of points in $\reals^{d}$ of norm at most $b_x$ (where $m$ is the lower bound stated in the theorem) can be shattered with margin $\epsilon$, using networks in $\Fcal^{\sigma}_{b,B,n,d}$ of the form $\bx\mapsto \bu^\top \sigma(W\bx)$, where $\sigma=[z-\beta]_+$ for some $\beta\in \left[0,\tilde{\Ocal}(\frac{Bb_x}{\sqrt{dn}})\right]$. Our key observation is the following: Any network $\bx\mapsto \bu^\top \sigma(W\bx)$ can be equivalently written as $\tilde{\bx}\mapsto \bu^\top [\tilde{W}\tilde{\bx}]_+$, where $\tilde{\bx}=(\bx,b_x)$, and $\tilde{W} = [W~,~-\frac{\beta}{b_x}\cdot \mathbf{1}]$ (namely, we add to $W$ another column with every entry being equal to $-\frac{\beta}{b_x}$. Note that if $\norm{\bx}\leq b_x$, then $\norm{\tilde{\bx}}\leq \sqrt{2}b_x$, and $\norm{\tilde{W}}\leq \norm{W}+\norm{-\frac{\beta}{b_x}\cdot \mathbf{1}}\leq B+\frac{\beta}{b_x}\sqrt{n} \leq 2B$ under the corollary's conditions. Thus, if we can shatter a set of points $\{\bx_i\}_{i=1}^{m}$ in the unit ball in $\reals^d$ using networks from $\Fcal^{\sigma}_{b,B,n,d}$, we can also shatter $\{\tilde{\bx}_i\}_{i=1}^{m}$ in $\reals^{d+1}$ (with norm $\leq \sqrt{2}b_x$) using networks from $\Fcal^{[\cdot]_+}_{b,2B,n,d+1}$. Rescaling $b_x,B,d$ appropriately, we get the same shattering number lower bound for $\Fcal^{[\cdot]_+}_{b,B,n,d}$ and inputs with norm $\leq b_x$ up to small numerical constants which get absorbed into the $\tilde{\Omega}(\cdot)$ notation.

\subsection{Proofs of \thmref{thm:smooth_upper} and \thmref{thm:smooth_upper_deep}}

In what follows, given a vector $\bu_i$, we let $u_{i,j}$ denote its $j$-th entry. 

The proofs rely on the following two key technical lemmas:

\begin{lemma}\label{lem:reduction}
	Let $W$ be a matrix such that $\norm{W}\leq 1$, with row vectors $\bw_1,\bw_2,\ldots$ Then the following holds for any set of vectors $\{\bu_i\}$ with the same dimensionality as $\bw_1$, and any scalars $\{z_{i,\ell}\},\{z_{i}\}$indexed by $i,\ell$:
	\[
	\sum_{\ell}\left(\sum_i (\bw_\ell^\top \bu_i)z_{i,\ell}\right)^2~\leq~ \sum_{\ell,r}\left(\sum_{i}u_{i,r}z_{i,\ell}\right)^2
	\]
	and
	\[
	\sum_{\ell}\left(\sum_i (\bw_{\ell}^\top \bu_i)z_i\right)^2 ~\leq~ \sum_{r}\left(\sum_i u_{i,r}z_i\right)^2~,
	\]
	where the sum $r$ is over all all coordinates of each $\bu_i$.
\end{lemma}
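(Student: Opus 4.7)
The plan is to rewrite each inner sum by pulling out $\bw_\ell$ and interpreting the resulting linear combination of the $\bu_i$'s as a single vector, then invoke Cauchy--Schwarz (for the first inequality, row-by-row) or the operator norm bound $\|W\bv\| \le \|W\|\|\bv\|$ (for the second inequality).

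\medskip
\noindent\textbf{Second inequality (simpler, do first).} For each $\ell$,
\[
\sum_i (\bw_\ell^\top \bu_i) z_i ~=~ \bw_\ell^\top \Bigl(\sum_i z_i \bu_i\Bigr) ~=~ \bw_\ell^\top \bv,
\]
where $\bv := \sum_i z_i \bu_i$. Hence $\sum_\ell \bigl(\sum_i (\bw_\ell^\top \bu_i) z_i\bigr)^2 = \sum_\ell (\bw_\ell^\top \bv)^2 = \|W\bv\|^2$, and the operator norm bound $\|W\bv\|^2 \le \|W\|^2 \|\bv\|^2 \le \|\bv\|^2$ gives $\sum_r v_r^2 = \sum_r \bigl(\sum_i u_{i,r} z_i\bigr)^2$, which is the right-hand side.

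\medskip
\noindent\textbf{First inequality.} Here $z_{i,\ell}$ depends on $\ell$, so we cannot pull the sum over $\ell$ inside a single matrix-vector product. Instead, fix $\ell$ and define $\bv_\ell := \sum_i z_{i,\ell} \bu_i$, so that $\sum_i (\bw_\ell^\top \bu_i) z_{i,\ell} = \bw_\ell^\top \bv_\ell$. Then by Cauchy--Schwarz,
\[
\Bigl(\bw_\ell^\top \bv_\ell\Bigr)^2 ~\le~ \|\bw_\ell\|^2 \, \|\bv_\ell\|^2.
\]
The key fact is that each row satisfies $\|\bw_\ell\| = \|W^\top \be_\ell\| \le \|W\| \le 1$, since the spectral norm of $W$ equals that of $W^\top$. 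Summing over $\ell$ yields
\[
\sum_\ell (\bw_\ell^\top \bv_\ell)^2 ~\le~ \sum_\ell \|\bv_\ell\|^2 ~=~ \sum_{\ell,r} v_{\ell,r}^2 ~=~ \sum_{\ell,r}\Bigl(\sum_i u_{i,r} z_{i,\ell}\Bigr)^2,
\]
which is exactly the claimed right-hand side.

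\medskip
\noindent\textbf{Comments.} There is no real obstacle; the only subtle point is recognizing that the spectral norm bound on $W$ implies a uniform bound of $1$ on the Euclidean norm of \emph{each row}, which is what makes the row-by-row Cauchy--Schwarz step sharp enough. The second inequality is strictly easier because the multiplier $z_i$ does not depend on $\ell$, so one can collapse the sum inside a single $W\bv$ and use the operator norm globally; the first inequality loses a bit by applying Cauchy--Schwarz separately in each row, which is why $\sum_{\ell,r}$ rather than $\sum_r$ appears on the right-hand side.
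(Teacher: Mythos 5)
Your proof is correct and takes essentially the same approach as the paper: rewrite each inner sum as $\bw_\ell^\top \bv_\ell$ with $\bv_\ell = \sum_i z_{i,\ell}\bu_i$ (resp.\ $\bv = \sum_i z_i\bu_i$) and invoke $\norm{W}\leq 1$. The only cosmetic difference is in the first inequality, where the paper pads the sum with the nonnegative cross-terms $(\bw_{\ell'}^\top\bv_\ell)^2$ to form $\sum_\ell\norm{W\bv_\ell}^2$ and then applies the operator-norm bound, whereas you apply Cauchy--Schwarz row-by-row using $\norm{\bw_\ell}\le\norm{W}$---two equivalent one-line arguments.
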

\begin{proof}
	Starting with the first inequality, the left hand side equals
	\[
	\sum_\ell\left(\bw_\ell^\top\left(\sum_i \bu_i z_{i,\ell}\right)\right)^2~\leq~ \sum_{\ell,\ell'}\left(\bw_{\ell'}^\top\left(\sum_i \bu_i z_{i,\ell}\right)\right)^2~=~ \sum_\ell \left\|W^\top \left(\sum_i \bu_i z_{i,\ell}\right)\right\|^2~.
	\]
	By Cauchy-Schwartz and the assumption $\norm{W}\leq 1$, this is at most $\sum_\ell \left\|\sum_i \bu_i z_{i,\ell}\right\|^2~=~ \sum_{\ell,r} \left(\sum_i u_{i,r} z_{i,\ell}\right)^2$. As to the second inequality, the left hand side equals
	\[
	\sum_{\ell}\left(\bw_{\ell}^\top\left(\sum_i \bu_i z_{i}\right)\right)^2 ~=~ \left\|W^\top \left(\sum_i \bu_i z_{i}\right)\right\|^2~\leq~ \left\|\sum_i \bu_i z_{i}\right\|^2 = \sum_{r}\left(\sum_i u_{i,r}z_i\right)^2
	\]
	where we again used Cauchy Schwartz and the assumption $\norm{W}\leq 1$. 
\end{proof}

\begin{lemma}\label{lem:layerredux}
	Given a vector $\bu\in \reals^{d_{in}}$, a matrix $W\in \reals^{d_{out}\times d_{in}}$ with row vectors $\bw_1,\bw_2,\ldots$ such that $\norm{W}\leq B$, and a positive integer $k$, define 
	\[
	f(\bu) = (W \bu)^{\circ k}~,
	\]
	where $^{\circ k}$ denotes taking the $k$-th power element-wise. Then for any positive integer $r$, any vectors $\bu_1,\bu_2,\ldots$ in $\reals^{d_{in}}$ and any scalars $\epsilon_1,\epsilon_2,\ldots$, it holds that
	\[
	\sum_{\ell_1,\ldots,\ell_r=1}^{d_{out}}\left(\sum_i \epsilon_i f(\bu_i)_{\ell_1}\cdots f(\bu_i)_{\ell_r}\right)^2
	~\leq~
	B^{2rk}\cdot \sum_{\ell_1,\ldots,\ell_{rk}=1}^{d_{in}}\left(\sum_i \epsilon_i u_{i,\ell_1}\cdots u_{i,\ell_{rk}}\right)^2~.
	\] 
\end{lemma}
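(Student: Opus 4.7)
\noindent\textbf{Proof proposal for \lemref{lem:layerredux}.}

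The plan is a two-step reduction: first extract the spectral-norm factor $B^{2rk}$ by scaling, then iteratively ``peel off'' each factor $(\bw_{\ell_j}^\top \bu_i)$ via repeated applications of \lemref{lem:reduction}. First I would write $W = B \tilde W$ with $\norm{\tilde W}\leq 1$, so that $f(\bu_i)_{\ell} = B^k (\tilde\bw_\ell^\top \bu_i)^k$; pulling $B^{rk}$ out of each term inside the inner sum yields the overall $B^{2rk}$ prefactor, and it suffices to prove the bound under $\norm{\tilde W}\leq 1$ with $B=1$.

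The central sub-claim I would establish is that, for any scalars $\{a_{i,\ell_2,\ldots,\ell_r}\}$ that are independent of $\ell_1$ and any $\norm{\tilde W}\leq 1$,
$$\sum_{\ell_1,\ldots,\ell_r} \Bigl(\sum_i (\tilde\bw_{\ell_1}^\top \bu_i)^k \cdot a_{i,\ell_2,\ldots,\ell_r}\Bigr)^2 \leq \sum_{s_1,\ldots,s_k}\sum_{\ell_2,\ldots,\ell_r}\Bigl(\sum_i u_{i,s_1}\cdots u_{i,s_k}\cdot a_{i,\ell_2,\ldots,\ell_r}\Bigr)^2.$$
I would prove this by $k$ successive applications of \lemref{lem:reduction} to the index $\ell_1$. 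At each step $t=1,\ldots,k-1$ I factor one remaining copy of $(\tilde\bw_{\ell_1}^\top \bu_i)$ out of $(\tilde\bw_{\ell_1}^\top \bu_i)^{k-t+1}$, absorb the remainder $(\tilde\bw_{\ell_1}^\top \bu_i)^{k-t}$ together with the growing product of $u$'s and the fixed $a$ into a scalar $z_{i,\ell_1}$, and apply the \emph{first} inequality of \lemref{lem:reduction} to introduce a new coordinate index $s_t$. At the final step $t=k$ the remaining bundle $u_{i,s_1}\cdots u_{i,s_{k-1}}\cdot a_{i,\ell_2,\ldots,\ell_r}$ no longer depends on $\ell_1$, so I would instead invoke the \emph{second} inequality of \lemref{lem:reduction} (with $z_i$ independent of the summed index); this is the crucial step that eliminates $\ell_1$ from the summation rather than leaving an unwanted factor of $d_{out}$, while introducing the final coordinate index $s_k$.

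With the sub-claim in hand, the full lemma will follow by applying it $r$ times to peel all the $\tilde\bw_{\ell_j}$ factors in sequence. In the first application I take $a_{i,\ell_2,\ldots,\ell_r} := \epsilon_i \prod_{j=2}^{r}(\tilde\bw_{\ell_j}^\top \bu_i)^k$; the bound replaces the $\ell_1$-indexed $k$-th power by $u_{i,s_1}\cdots u_{i,s_k}$ and adds $k$ new summation indices. In the next application I repeat the argument for $\ell_2$ with the updated $a_{i,\ell_3,\ldots,\ell_r} := \epsilon_i u_{i,s_1}\cdots u_{i,s_k}\prod_{j=3}^r (\tilde\bw_{\ell_j}^\top \bu_i)^k$, and so on. After $r$ iterations all the $\ell_j$ indices are gone and each factor $(\tilde\bw_{\ell_j}^\top \bu_i)^k$ has been converted into $k$ coordinate factors, yielding $\sum_{s_1,\ldots,s_{rk}}(\sum_i \epsilon_i u_{i,s_1}\cdots u_{i,s_{rk}})^2$. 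Restoring the $B^{2rk}$ prefactor from the scaling step gives exactly the bound claimed by \lemref{lem:layerredux}.

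The main obstacle will be the bookkeeping around which form of \lemref{lem:reduction} to use at each step: naively applying only the first inequality would leave each $\ell_j$ as a free summation index and produce spurious factors of $d_{out}$. Deferring the second inequality to precisely the last peel within each group of $k$ is what allows $\ell_j$ to disappear without penalty, and is the single point at which the structure ``power $k$ with the same row $\tilde\bw_{\ell_j}$'' is used in an essential way.
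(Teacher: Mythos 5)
Your proposal is correct and follows essentially the same route as the paper's proof: normalize to $\norm{W}\leq 1$, then peel off the $rk$ factors one at a time using the first inequality of \lemref{lem:reduction} for the first $k-1$ peels within each group and the second inequality for the $k$-th, which is precisely what eliminates each $\ell_j$. Packaging the $k$-step peel as a standalone sub-claim and then iterating it $r$ times is a slightly cleaner presentation of the same argument, but the substance is identical.
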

\begin{proof}
	It is enough to prove the result for $W$ such that $\norm{W}=1$ (and therefore $B=1$): For any other $W$, apply the result on $\tilde{f}(\bu) := (\frac{W}{\norm{W}}\bu)^{\circ k} = \frac{1}{\norm{W}^k} f(\bu)$, and rescale accordingly. 
	
	The left hand side equals
	\begin{equation}\label{eq:redux1}
		\sum_{\ell_1\ldots \ell_r=1}^{d_{out}}\left(\sum_i \epsilon_i (\bw_{\ell_1}^\top \bu_i)^{\circ k}\cdots (\bw_{\ell_r}^{\top}\bu_i)^{\circ k}\right)^2
	\end{equation}
	Note that the term inside the square involves the product of $rk$ terms. We now simplify them one-by-one using \lemref{lem:reduction}: To start, we note that the above can be written as
	\[
	\sum_{\ell_2\ldots \ell_r=1}^{d_{out}}\sum_{\ell_1=1}^{d_{out}}\left(\sum_i (\bw_{\ell_1}^\top \bu_i)\cdot \epsilon_i (\bw_{\ell_1}^\top\bu_i)^{\circ k-1}(\bw_{\ell_2}^\top \bu_i)^{\circ k}\cdots (\bw_{\ell_r}^\top \bu_i)^{\circ k}\right)^2
	\]
	Denoting $\epsilon_i (\bw_{\ell_1}^\top\bu_i)^{\circ k-1}(\bw_{\ell_2}^\top \bu_i)^{\circ k}\cdots (\bw_{\ell_r}^\top \bu_i)^{\circ k}$ as $z_{i,\ell_1}$ and plugging the first inequality in \lemref{lem:reduction}, the above is at most
	\[
	\sum_{\ell_2\ldots \ell_r=1}^{d_{out}}\sum_{\ell_1=1}^{d_{out}}\sum_{\ell'_1=1}^{d_{in}}
	\left(\sum_i u_{i,\ell'_1}\epsilon_i (\bw_{\ell_1}^\top\bu_i)^{\circ k-1}(\bw_{\ell_2}^\top \bu_i)^{\circ k}\cdots (\bw_{\ell_r}^\top \bu_i)^{\circ k}\right)^2
	\]
	Again pulling out one of the product terms in front, we can rewrite this as
	\[
	\sum_{\ell_2\ldots \ell_r=1}^{d_{out}}\sum_{\ell'_1=1}^{d_{in}}\sum_{\ell_1=1}^{d_{out}}
	\left(\sum_i(\bw_{\ell_1}^\top\bu_i)\cdot u_{i,\ell'_1}\epsilon_i (\bw_{\ell_1}^\top\bu_i)^{\circ k-2}(\bw_{\ell_2}^\top \bu_i)^{\circ k}\cdots (\bw_{\ell_r}^\top \bu_i)^{\circ k}\right)^2~.
	\]
	Again using the first inequality in \lemref{lem:reduction}, this is at most
	\[
	\sum_{\ell_2\ldots \ell_r=1}^{d_{out}}\sum_{\ell'_1,\ell''_1=1}^{d_{in}}\sum_{\ell_1=1}^{d_{out}}
	\left(\sum_i u_{i,\ell''_1} u_{i,\ell'_1}\epsilon_i (\bw_{\ell_1}^\top\bu_i)^{\circ k-2}(\bw_{\ell_2}^\top \bu_i)^{\circ k}\cdots (\bw_{\ell_r}^\top \bu_i)^{\circ k}\right)^2~.
	\]
	Repeating this to get rid of all but the last $(\bw_{\ell_1}^\top \bu_i)$ term, we get the upper bound
	\[
	\sum_{\ell_2\ldots \ell_r=1}^{d_{out}}\sum_{\ell_1^1\ldots \ell_1^{k-1}=1}^{d_{in}}\sum_{\ell_1=1}^{d_{out}}
	\left(\sum_i u_{i,\ell_1^1}\cdots u_{i,\ell_1^{k-1}}\epsilon_i  (\bw_{\ell_1}^\top\bu_i)(\bw_{\ell_2}^\top \bu_i)^{\circ k}\cdots (\bw_{\ell_r}^\top \bu_i)^{\circ k}\right)^2~.
	\]
	Again pulling the last $(\bw_{\ell_1}^\top \bu_i)$ term in front, and applying now the second inequality in \lemref{lem:reduction} (as the remaining terms in the product no longer depend on $\ell_1$), we get the upper bound
	\[
	\sum_{\ell_2\ldots \ell_r=1}^{d_{out}}\sum_{\ell_1^1\ldots \ell_1^{k}=1}^{d_{in}}
	\left(\sum_i u_{i,\ell_1^1}\cdots u_{i,\ell_1^{k}}\epsilon_i  (\bw_{\ell_2}^\top \bu_i)^{\circ k}\cdots (\bw_{\ell_r}^\top \bu_i)^{\circ k}\right)^2~.
	\]
	Recalling that this is an upper bound on \eqref{eq:redux1}, and applying the same procedure now on the $(\bw_{\ell_2}^\top \bu_i),(\bw_{\ell_3}^\top \bu_i)\ldots$ terms, we get overall an upper bound of the form
	\[
	\sum_{\ell_1^1\ldots \ell_1^k=1}^{d_{in}}\cdots \sum_{\ell_r^1\cdots \ell_r^k=1}^{d_{in}}
	\left(\sum_i u_{i,\ell_1^1}\cdots u_{i,\ell_r^k}\epsilon_i \right)^2~. 
	\]
	Re-labeling the $rk$ indices as $\ell_1,\ldots,\ell_{rk}$, the result follows.
\end{proof}

\subsubsection{Proof of \thmref{thm:smooth_upper}}
Fixing a dataset $\bx_1,\ldots,\bx_m$ and applying Cauchy-Schwartz, the Rademacher complexity is
\[
\E_{\bepsilon}\sup_{\bu,W}\frac{1}{m}\sum_{i=1}^{m}\epsilon_i \bu^\top \sigma(W\bx_i)~\leq~ \E_{\bepsilon}\sup_{W} \frac{b}{m}\left\|\sum_{i=1}^{m}\epsilon_i \sigma(W\bx_i)\right\|~.
\]
Recalling that $\sigma(z)=\sum_{j=1}^{\infty}a_j z^j$, by the triangle inequality, we have that the above is at most
\[
\E_{\bepsilon}\sup_{W} \frac{b}{m}\sum_{j=1}^{\infty}|a_j|\left\|\sum_{i=1}^{m}\epsilon_i (W\bx_i)^j\right\|~\leq~
\frac{b}{m}\sum_{j=1}^{\infty}|a_j|\E_{\bepsilon}\sup_W \left\|\sum_{i=1}^{m}\epsilon_i(W\bx_i)^j\right\|
\]
where $(\cdot)^j$ is applied element-wise. Recalling that the supremum is over matrices of spectral norm at most $B$, and using Jensen's inequality, the above can be equivalently written as
\begin{equation}
	\frac{b}{m}\sum_{j=1}^{\infty}|a_j|B^j\cdot\E_{\bepsilon}\sup_{W:\norm{W}\leq 1} \left\|\sum_{i=1}^{m}\epsilon_i(W\bx_i)^j\right\|~\leq~
	\frac{b}{m}\sum_{j=1}^{\infty}|a_j|B^j\sqrt{\E_{\bepsilon}\sup_{W:\norm{W}\leq 1} \left\|\sum_{i=1}^{m}\epsilon_i(W\bx_i)^j\right\|^2}~.\label{eq:zeroj}
\end{equation}
Using \lemref{lem:layerredux}, we have that for any $W:\norm{W}\leq 1$,
\[
\left\|\sum_{i=1}^{m}\epsilon_i(W\bx_i)^j\right\|^2
~=~ \sum_{\ell}\left(\sum_i \epsilon_i (W \bx_i)_{\ell}^j\right)^2
~\leq~\sum_{\ell_1,\ldots,\ell_j=1}^{d}\left(\sum_{i=1}^{m}\epsilon_i x_{i,\ell_1}\cdots x_{i,\ell_j}\right)^2~.
\]
Thus, 
\begin{align*}
	&\E_{\bepsilon}\sup_{W:\norm{W}\leq 1} \left\|\sum_{i=1}^{m}\epsilon_i(W\bx_i)^j\right\|^2
	~\leq~ \E_{\bepsilon} \sum_{\ell_1,\ldots,\ell_j=1}^{d}\left(\sum_{i=1}^{m}\epsilon_i x_{i,\ell_1}\cdots x_{i,\ell_j}\right)^2\\
	&=~ \E_{\bepsilon}\sum_{i,i'=1}^{m}\sum_{\ell_1,\ldots,\ell_j=1}^{d} \epsilon_i \epsilon_{i'}x_{i,\ell_1}x_{i',\ell_1}\cdots x_{i,\ell_j}x_{i',\ell_j}\\
	&\stackrel{(*)}{=}~\sum_{i=1}^{m}\sum_{\ell_1,\ldots,\ell_j=1}^{d}x_{i,\ell_1}^2\cdots x_{i,\ell_j}^2\\
	&=~\sum_{i=1}^{m}\left(\sum_{\ell_1=1}^{d}x_{i,\ell_1}^2\right)\cdots\left(\sum_{\ell_j=1}^{d}x_{i,\ell_j}^2\right)\\
	&=~\sum_{i=1}^{m}\norm{\bx_i}^{2j}~\leq~ \sum_{i=1}^{m} b_x^{2j} ~=~ m\cdot b_x^{2j}~,
\end{align*}
where in $(*)$ we used the fact that each $\epsilon_i$ is independent and uniformly distributed on $\pm 1$. Plugging this bound back into \eqref{eq:zeroj}, we get that the Rademacher complexity is at most
\[
\frac{b}{m}\sum_{j=1}^{\infty}|a_j| (Bb_x)^j \sqrt{m}~=~ \frac{b\cdot \tilde{\sigma}(B b_x)}{\sqrt{m}}~.
\]
Upper bounding this by $\epsilon$ and solving for $m$, the result follows.

\subsection{Proof of \thmref{thm:smooth_upper_deep}}

For simplicity, we use $\sup_{\bu,W^1,\ldots,W^{L}}$ as short for $\sup_{\bu:\norm{\bu}\leq b, W^1,\ldots,W^{L}:\max_j \norm{W^j}\leq B}$. 
The Rademacher complexity equals
\begin{align}
	&\E_{\bepsilon} \sup_{\bu,W^1,\ldots,W^{L}} \frac{1}{m}\sum_{i=1}^{m}\epsilon_i f_{L+1}(\bx_i)
	~=~
	\E_{\bepsilon} \sup_{\bu,W^1,\ldots,W^{L}} \frac{1}{m}\sum_{i=1}^{m}\epsilon_i \bu^\top f_{L}(\bx_i)\notag\\
	&\leq~
	\E_{\bepsilon} \sup_{\bu,W^1,\ldots,W^{L}} \bu^\top\left(\frac{1}{m}\sum_{i=1}^{m}\epsilon_i f_{L}(\bx_i)\right)
	~\leq~
	\frac{b}{m}\cdot \E_{\bepsilon} \sup_{\bu,W^1,\ldots,W^{L}} \left\|\sum_{i=1}^{m}\epsilon_i  f_{L}(\bx_i)\right\|\notag\\
	&\leq~ \frac{b}{m}\sqrt{ \E_{\bepsilon} \sup_{\bu,W^1,\ldots,W^{L}} \left\|\sum_{i=1}^{m}\epsilon_i  f_{L}(\bx_i)\right\|^2}~=~
	\frac{b}{m}\sqrt{ \E_{\bepsilon} \sup_{\bu,W^1,\ldots,W^{L}} \sum_{\ell}\left(\sum_{i=1}^{m}\epsilon_i  (f_{L}(\bx_i))_{\ell}\right)^2}~,\label{eq:reduxL}
\end{align}
where we used Cauchy-Schwartz and the assumption $\norm{\bu}\leq b$, and $\ell$ ranges over the indices of $f_{L}(\bx_i)$. Recalling that $f_{j+1}(\bx) = (W^{j+1} f_j(\bx))^{\circ k}$ and repeatedly applying \lemref{lem:layerredux}, we have
\begin{align*}
	&\sum_{\ell} \left(\sum_{i=1}^{m}\epsilon_i (f_{L}(\bx_i))_{\ell}\right)^2~\leq~\sum_{\ell} B^{2k}\sum_{\ell_1\ldots\ell_k}\left(\sum_{i=1}^{m}\epsilon_i (f_{L-1}(\bx_i))_{\ell_1}\cdots (f_{L-1}(\bx_i))_{\ell_k}\right)^2\\
	&\leq~
	B^{2k+2k^2}\sum_{\ell_1\ldots \ell_{k^2}}\left(\sum_{i=1}^{m}\epsilon_i (f_{L-2}(\bx_i))_{\ell_1}\cdots (f_{L-2}(\bx_i))_{\ell_k}\right)^2\\
	&\leq~\cdots~\leq~
	B^{2k+2k^2+\ldots 2k^{L}}\sum_{\ell_1\ldots \ell_{k^{L}}}\left(\sum_{i=1}^{m}\epsilon_i (f_0(\bx_i))_{\ell_1}\cdots (f_0(\bx_i))_{\ell_{k^{L}}}\right)^2\\
	&=~
	B^{2k+2k^2+\ldots 2k^{L}}\sum_{\ell_1\ldots \ell_{k^{L}}}\left(\sum_{i=1}^{m}\epsilon_i (f_0(\bx_i))_{\ell_1}\cdots (f_0(\bx_i))_{\ell_{k^{L}}}\right)^2\\
	&=~
	B^{2k+2k^2+\ldots 2k^{L}}\sum_{\ell_1\ldots \ell_{k^{L}}}\left(\sum_{i=1}^{m}\epsilon_i x_{i,\ell_1}\cdots x_{i,\ell_{k^{L}}}\right)^2\\
\end{align*}
Therefore, recalling that $\epsilon_1\ldots \epsilon_m$ are i.i.d. and uniform on $\{-1,+1\}$, we have
\begin{align*}
	&\E_{\bepsilon} \sup_{\bu,W^0,\ldots,W^{L-1}} \sum_{\ell}\left(\sum_{i=1}^{m}\epsilon_i  (f_{L}(\bx_i))_{\ell}\right)^2
	~\leq~ 
	B^{2k+2k^2+\ldots 2k^{L}}\E_{\bepsilon}\sum_{\ell_1\ldots \ell_{k^{L}}}\left(\sum_{i=1}^{m}\epsilon_i x_{i,\ell_1}\cdots x_{i,\ell_{k^{L}}}\right)^2\\
	&=~B^{2k+2k^2+\ldots 2k^{L}}\E_{\bepsilon}\sum_{\ell_1\ldots \ell_{k^{L}}}\sum_{i,i'=1}^{m}\epsilon_i\epsilon_{i'} x_{i,\ell_1}x_{i',\ell_1}\cdots x_{i,\ell_{k^{L}}}x_{i',\ell_{k^{L}}}\\
	&=~ B^{2k+2k^2+\ldots 2k^{L}}\sum_{\ell_1\ldots \ell_{k^{L}}}\sum_{i=1}^{m} x_{i,\ell_1}^2\cdots x_{i,\ell_{k^{L}}}^2\\
	&=~B^{2k+2k^2+\ldots 2k^{L}}\sum_{i=1}^{m}\left(\sum_{\ell_1} x_{i,\ell_1}^2\right)\cdots \left(\sum_{\ell_{k^{L}}}x_{i,\ell_{k^{L}}}^2\right)~\leq~B^{2k+2k^2+\ldots 2k^{L}}\cdot m\cdot b_x^{2k^{L}}~,
\end{align*}
where in the last step we used the assumption that $\norm{\bx_i}^2\leq b_x^2$ for all $i$. Plugging this back into \eqref{eq:reduxL}, and solving for the number of inputs $m$ required to make the expression less than $\epsilon$, the result follows.

\subsection{Proof of \thmref{thm:convupbound}}

We will need the following lemma, based on a contraction result from \citet{ledoux1991probability}:
\begin{lemma}\label{lem:ledoux}
    Let $\Tcal$ be a set of vectors in $\reals^m$ which contains the origin. 
    If $\epsilon_1,\ldots,\epsilon_m$ are i.i.d. Rademacher random variables, and $\sigma$ is an $L$-Lipschitz function on $\reals$ with $\sigma(0)=0$, then
    \[
    \E_{\bepsilon}\left[\sup_{t\in \Tcal}\left(\sum_{i=1}^{m}\epsilon_i \sigma(t_i)\right)^2\right]~\leq~ 2L^2\cdot \E_{\bepsilon}\left[ \left(\sup_{t\in \Tcal}\sum_{i=1}^{m}\epsilon_i t_i\right)^2\right]~.
    \]
\end{lemma}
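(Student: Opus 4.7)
The plan is a two-step reduction: first, use the fact that the origin is in $\Tcal$ to decompose the squared supremum via a sign-symmetry argument; second, apply the one-sided moment version of the Ledoux--Talagrand contraction principle to peel off $\sigma$.

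Writing $A:=\sup_{t\in\Tcal}\sum_i\epsilon_i\sigma(t_i)$ and $B:=\sup_{t\in\Tcal}\sum_i\epsilon_i(-\sigma(t_i))$, I would first observe that both $A$ and $B$ are nonnegative for every realization of $\bepsilon$, since $0\in\Tcal$ and $\sigma(0)=0$ imply that the value $0$ is always attainable in each supremum. Using $|y|=\max(y,-y)$ pointwise,
$$\sup_{t\in\Tcal}\left(\sum_i\epsilon_i\sigma(t_i)\right)^{\!2}=\left(\sup_{t\in\Tcal}\left|\sum_i\epsilon_i\sigma(t_i)\right|\right)^{\!2}=\max(A,B)^2\leq A^2+B^2.$$
By the Rademacher symmetry $\bepsilon\overset{d}{=}-\bepsilon$, one has $B\overset{d}{=}A$, so $\E B^2=\E A^2$; taking expectations then gives $\E_{\bepsilon}[\sup_t(\sum_i\epsilon_i\sigma(t_i))^2]\leq 2\,\E_{\bepsilon}[A^2]$.

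For the second step I would invoke the one-sided Ledoux--Talagrand contraction from \citet{ledoux1991probability} (the version without absolute values): for any convex nondecreasing $F:\reals\to\reals_+$ and any $1$-Lipschitz $\phi_i$ with $\phi_i(0)=0$,
$$\E F\!\left(\sup_{t\in\Tcal}\sum_i\epsilon_i\phi_i(t_i)\right)\leq \E F\!\left(\sup_{t\in\Tcal}\sum_i\epsilon_i t_i\right).$$
I would apply this with $F(x)=[x]_+^2$ (which is convex and nondecreasing on all of $\reals$) and $\phi_i=\sigma/L$, and use that $0\in\Tcal$ makes both inner suprema nonnegative so that the $[\cdot]_+$ is vacuous; this yields $\E[A^2]\leq L^2\,\E_{\bepsilon}[(\sup_t\sum_i\epsilon_i t_i)^2]$. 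Combining with the bound from the previous paragraph produces exactly the claimed factor $2L^2$.

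The main subtlety is pinning down the right form of the contraction. The most commonly cited version of Ledoux--Talagrand uses absolute values on both sides, and if I simply plugged $F(x)=x^2$ into it I would get the suboptimal factor of $4L^2$ together with an absolute value on the right-hand side, neither of which matches the statement. The sharper factor $2L^2$, and the absence of absolute values in the right-hand side, are both obtained by combining the sign decomposition in the first paragraph with the one-sided contraction, exploiting the assumption $0\in\Tcal$ twice: once to force $A,B\geq 0$, and once to discard the $[\cdot]_+$ after the contraction. Once the appropriate one-sided contraction statement is cited, the remaining manipulations are elementary.
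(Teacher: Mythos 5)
Your proposal is correct and follows essentially the same route as the paper's proof: the sign-decomposition $\max(A,B)^2\leq A^2+B^2$ with Rademacher symmetry yields the factor $2$, and the one-sided Ledoux--Talagrand contraction (the paper cites eq.~(4.20) of that book) applied with $F(x)=[x]_+^2$ and $\sigma/L$ gives the factor $L^2$, both using $0\in\Tcal$ and $\sigma(0)=0$ to make the suprema nonnegative. The only cosmetic difference is that the paper discharges the $[\cdot]_+$ on the right-hand side via $[z]_+^2\leq z^2$ rather than by noting that $\sup_{t\in\Tcal}\sum_i\epsilon_i t_i\geq 0$, but these are interchangeable here.
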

\begin{proof}
    For any realization of $\bepsilon$, $\sup_{t\in \Tcal}|\sum_{i=1}^{m}\epsilon_i \sigma(t_i)|$ equals either $\sup_{t\in \Tcal}\sum_{i=1}^{m}\epsilon_i \sigma(t_i)$ or $\sup_{t\in \Tcal}-\sum_{i=1}^{m}\epsilon_i \sigma(t_i)$. Thus, the left hand side in the lemma can be upper bounded as follows:
    \[
        \E\left[\left(\sup_{t\in \Tcal} \left|\sum_{i=1}^{m}\epsilon_i \sigma(t_i)\right|\right)^2\right]~\leq~\E\left[\left(\sup_{t\in \Tcal} \sum_{i=1}^{m}\epsilon_i \sigma(t_i)\right)^2+
        \left(\sup_{t\in \Tcal}-\sum_{i=1}^{m}\epsilon_i \sigma(t_i)\right)^2\right]~.
    \]
    Noting that $\E_{\bepsilon}[(\sup_{t\in \Tcal}\sum_i \epsilon_i \sigma(t_i))^2]$ equals $\E_{\bepsilon}[(\sup_{t\in \Tcal}-\sum_i \epsilon_i \sigma(t_i))^2]$ by symmetry of the $\epsilon_i$ random variables, the expression above equals
    \[
        2\cdot \E\left[\left(\sup_{t\in \Tcal}\sum_{i=1}^{m}\epsilon_i \sigma(t_i)\right)^2\right]
        ~\stackrel{(*)}{=}~ 2\cdot \E\left[\left[\sup_{t\in \Tcal} \sum_{i=1}^{m}\epsilon_i \sigma(t_i)\right]_+^2\right]
        ~=~2L^2\cdot \E\left[\left[\sup_{t\in \Tcal}\sum_{i=1}^{m}\epsilon_i \frac{1}{L}\sigma(t_i)\right]_+^2\right]~,
    \]
    where $(*)$ follows from the fact that the supremum is always non-negative, since $\sigma(0)=0$ and $\Tcal$ contains the origin. We now utilize equation (4.20) in \cite{ledoux1991probability}, which implies that $\E_{\bepsilon} g(\sup_{t\in \Tcal}\sum_i \epsilon_i \phi(t_i))\leq \E_{\bepsilon} g(\sup_{t\in \Tcal}\sum_i \epsilon_i t_i)$ for any $1$-Lipschitz $\phi$ satisfying $\phi(0)=0$, and any convex increasing function $g$. Plugging into the above, and using the fact that $[z]_+^2\leq z^2$ for all $z$, the lemma follows.
\end{proof}

We now turn to prove the theorem. The Rademacher complexity times $m$ equals
\[    
\E_{\bepsilon}\left[\sup_{W,\bu} \sum_{i=1}^{m}\epsilon_i \bu^\top \sigma(W\bx_i)\right]~,
\]
where for notational convenience we drop the conditions on $W,\bu,\bw$ in the supremum. Using the Cauchy-Schwartz and Jensen's inequalities, this in turn can be upper bounded as follows:
\begin{align*}
    \E_{\bepsilon}&\left[\sup_{W,\bu}\bu^\top\left(\sum_{i=1}^{m}\epsilon_i \sigma(W\bx_i)\right)\right]~\leq~ b\cdot 
    \E_{\bepsilon}\left[\sup_{W}\left\|\sum_{i=1}^{m}\epsilon_i \sigma(W\bx_i)\right\|\right]\\
    &\leq~ b\sqrt{\E_{\bepsilon}\left[\sup_{W}\left\|\sum_{i=1}^{m}\epsilon_i \sigma(W\bx_i)\right\|^2\right]}~=~ b\sqrt{\E_{\bepsilon}\left[\sup_W \sum_{j=1}^{n}\left(\sum_{i=1}^{m}\epsilon_i \sigma(\bw^\top \phi_j(\bx_i))\right)^2\right]}\\
    &\leq~ b\sqrt{\sum_{j=1}^{n}\E_{\bepsilon}\left[\sup_W \left(\sum_{i=1}^{m}\epsilon_i \sigma(\bw^\top \phi_j(\bx_i))\right)^2\right]}~.
\end{align*}
Recall that the supremum is over all matrices $W$ which conform to the patches, and has spectral norm at most $B$. By definition, every row of this matrix has a subset of entries, which correspond to the convolutional filter vector $\bw$. Thus, we must have $\norm{\bw}\leq B$, since the norm $\bw$ equals the norm of any row of $W$, and the norm of a row of $W$ is a lower bound on the spectral norm. Thus, we can upper bound the expression above by taking the supremum over \emph{all} vectors $\bw$ such that $\norm{\bw}\leq B$ (and not just those that the corresponding matrix has spectral norm $\leq B$). Thus, we get the upper bound
\[
b\sqrt{\sum_{j=1}^{n}\E_{\bepsilon}\left[\sup_{\bw:\norm{\bw}\leq B} \left(\sum_{i=1}^{m}\epsilon_i \sigma(\bw^\top \phi_j(\bx_i))\right)^2\right]},
\]
which by \lemref{lem:ledoux} and Cauchy-Shwartz, is at most
\begin{align*}
bL&\sqrt{2\sum_{j=1}^{n}\E_{\bepsilon}\left[\sup_{\bw:\norm{\bw}\leq B} \left(\sum_{i=1}^{m}\epsilon_i \bw^\top \phi_j(\bx_i))\right)^2\right]}~\leq~ bBL
\sqrt{2\sum_{j=1}^{n}\E_{\bepsilon}\left[\left\|\sum_{i=1}^{m}\epsilon_i \phi_j(\bx_i))\right\|^2\right]}\\
&=~ bBL
\sqrt{2\sum_{j=1}^{n}\E_{\bepsilon}\left[\sum_{i,i'=1}^{m}\epsilon_i \epsilon_i' \phi_j(\bx_i)^\top \phi_j(\bx_{i'})\right]}
~=~
bBL
\sqrt{2\sum_{j=1}^{n}\sum_{i=1}^{m}\norm{\phi_j(\bx_i)}^2}.
\end{align*}
Recalling that $O_{\Phi}$ is the maximal number of times any single input coordinate appears across the patches, and letting $x_{i,l}$ be the $l$-th coordinate of $\bx_i$, we can upper bound the above by
\[
bBL
\sqrt{2\sum_{i=1}^{m}\sum_{l=1}^{d}x_{i,l}^2 O_{\Phi}}~=~
bBL\sqrt{2\sum_{i=1}^{m}\norm{\bx_i}^2\cdot O_{\Phi}}~\leq~ bBb_x L \sqrt{2m O_{\Phi}}. 
\]
Dividing by $m$, and solving for the number $m$ required to make the resulting expression less than $\epsilon$, the result follows.

\subsection{Proof of  \thmref{thm:CNN_upper_bound}}

The proof follows from a covering number argument. We start with some required definitions and lemmas.

\begin{definition}
	Let $\Fcal$ be a class of functions from $\Xcal$ to $\reals$.  For $1 \leq p \leq \infty$, $\epsilon>0$, and $\{\bx_1,\ldots,\bx_m\} \subseteq \Xcal$, the \emph{empirical covering number} $\Ncal_p(\Fcal,\epsilon; \bx_1,\ldots,\bx_m)$ is the minimal cardinality of a set $V \subseteq \reals^m$, such that for all $f \in \Fcal$ there is $\bv \in V$ such that 
	\[
		\left( \frac{1}{m}\sum_{i=1}^m | f(\bx_i) - v_i |^p \right)^{1/p} \leq \epsilon~.
	\]
	We define the \emph{covering number} $\Ncal_p(\Fcal, \epsilon, m) = \sup_{\bx_1,\ldots,\bx_m} \Ncal_p(\Fcal,\epsilon; \bx_1,\ldots,\bx_m)$.
\end{definition}

\begin{lemma}[\cite{zhang2002covering}]\label{lem:from_zhang}
	Let $a,b>0$, let $\Xcal = \{\bx \in \reals^d~:~\norm{\bx} \leq b\}$, and consider the class of linear predictors $\Fcal = \{f \in \reals^\Xcal:~f(\bx) = \bw^\top \bx,~\norm{\bw} \leq a \}$. Then,
	\[
		\log \Ncal_\infty(\Fcal,\epsilon,m) 
		\leq \frac{36a^2b^2}{\epsilon^2} \log\left( 2m \lceil 4ab/\epsilon +2 \rceil +1 \right)~.
	\]
\end{lemma}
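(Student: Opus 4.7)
The plan is to prove this covering number bound using Maurey's empirical method (probabilistic approximation), which is the canonical route for $\ell_2$-type linear classes and which gives $\ell_\infty$ covering numbers with the $(ab/\epsilon)^2 \log(\text{poly})$ scaling. The overall strategy is: for every $\bw$ with $\norm{\bw}\leq a$, exhibit a random ``sparse, quantized'' approximant $\hat{\bw}$ with $\E\hat{\bw}=\bw$ (or $\E[\bx_i^\top\hat{\bw}]=\bx_i^\top\bw$ for each $i$) and bounded increment range; then use Hoeffding plus a union bound over $i\in[m]$ to guarantee $\max_i|\bx_i^\top(\hat{\bw}-\bw)|\leq \epsilon$ with positive probability; then invoke the probabilistic method to deduce that a deterministic cover of the right size exists.

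First I would reduce to the finite-dimensional case. Since $\Ncal_\infty$ depends only on values at the sample points, I would project $\bw$ onto $\operatorname{span}(\bx_1,\ldots,\bx_m)$; this preserves $\norm{\bw}\leq a$ and all inner products $\bx_i^\top\bw$, so effectively the ambient dimension is at most $m$. Writing $\bw$ in terms of the data basis will also help in the counting step below.

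Next, the heart of the proof is to construct the random approximation. Fix $\bw$ with $\norm{\bw}\leq a$ and write $\bx_i^\top\bw$ as the expectation of a bounded random variable drawn from a discrete atom set $\Acal$: the atoms are (signed, scaled, and quantized) multiples of the data directions $\pm \bx_j/\norm{\bx_j}$, and the probabilities come from a sign-and-magnitude decomposition of $\bw$ in that basis; by the Cauchy–Schwarz step $|\bx_i^\top \ba| \leq ab$ for every atom $\ba\in\Acal$, so each atom has range bounded by $ab$. Drawing $k$ i.i.d.\ atoms $\ba^{(1)},\ldots,\ba^{(k)}$ and setting $\hat\bw = \frac{1}{k}\sum_t \ba^{(t)}$ gives $\E[\bx_i^\top \hat\bw]=\bx_i^\top\bw$ with each summand in $[-ab,ab]$. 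Hoeffding yields $\Pr[|\bx_i^\top(\hat\bw-\bw)|>\epsilon]\leq 2\exp(-k\epsilon^2/(2a^2b^2))$, and a union bound over $i\in[m]$ makes the joint failure probability less than $1$ whenever $k \geq \lceil 2 a^2b^2 \epsilon^{-2}\log(2m+1)\rceil$ (the factor $36$ in the statement is absorbed by the slack I will keep in the quantization step below).

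Finally I would count. By the probabilistic method, for each $\bw$ there exists a deterministic $\hat\bw$ of the above form in $\Acal^k$, so it suffices to bound $|\Acal|^k$. The atom set $\Acal$ consists of signed scalings of the $m$ data directions on a uniform grid of $\lceil 4ab/\epsilon+2\rceil$ quantization levels in $[-ab,ab]$, plus the zero atom, giving $|\Acal|\leq 2m\lceil 4ab/\epsilon+2\rceil + 1$. Therefore $\log\Ncal_\infty(\Fcal,\epsilon,m)\leq k\log|\Acal|$, which matches the stated bound after tracking the constant through the Hoeffding step. The main obstacle is the accounting: picking the quantization resolution and the magnitude grid so that (i) the truncation/rounding adds at most $\epsilon/2$ deviation (leaving $\epsilon/2$ for the Hoeffding fluctuation, hence the factor $4$ in $4ab/\epsilon$), (ii) every $\bw$ on the unit $a$-sphere admits such a quantized decomposition with atoms in $\Acal$, and (iii) the constants produce $36$ and not something larger. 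Everything else is a mechanical application of concentration and counting.
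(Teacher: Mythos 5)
The paper does not actually prove this lemma; it is imported verbatim from \citet{zhang2002covering}, so you are reconstructing an external result. Your Maurey-style plan is a natural first instinct, but it has a genuine gap at its core, and moreover cannot yield the bound as stated. The crux is the decomposition step: you assert that every $\bw$ with $\norm{\bw}\leq a$ is the expectation of an atom drawn from a finite set of signed, quantized multiples of the data directions $\pm\bx_j/\norm{\bx_j}$, with every atom satisfying $|\bx_i^\top \ba|\leq ab$. For the mixture $\E[\ba]=\bw$ to hold with atoms of the form $\ba_j=\sign(\beta_j)\,\norm{\bbeta}_1\,\bx_j/\norm{\bx_j}$ and probabilities proportional to $|\beta_j|$, you need the coefficients $\bbeta$ of $\bw$ in the (non-orthogonal, possibly ill-conditioned) system $\{\bx_j/\norm{\bx_j}\}$ to satisfy $\norm{\bbeta}_1\leq a$; otherwise the Hoeffding range is $\norm{\bbeta}_1 b$, not $ab$. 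This fails in general: take two nearly parallel unit data points with $\bx_1^\top\bx_2=1-\delta$ and $\bw=a(\bx_1-\bx_2)/\norm{\bx_1-\bx_2}$; then $\norm{\bw}=a$ but $\norm{\bbeta}_1=2a/\sqrt{2\delta}\to\infty$, so the atoms have range $\approx\norm{\bbeta}_1 b$ even though every target value $\bx_i^\top\bw$ is tiny. Cauchy--Schwarz bounds the \emph{target} $|\bx_i^\top\bw|$ by $ab$, not the atoms, and since the $\ell_2$ ball is not contained in the convex hull of any finite set of vectors of norm $O(a)$, no choice of finite dictionary makes this one-shot mixture argument go through.

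Second, even granting the decomposition, your own accounting does not recover the stated bound: Hoeffding plus a union bound over the $m$ points forces $k\geq 2a^2b^2\epsilon^{-2}\log(2m)$ draws, giving $\log\Ncal_\infty\leq \frac{2a^2b^2\log(2m)}{\epsilon^2}\log\left(2m\lceil 4ab/\epsilon+2\rceil+1\right)$, i.e., an extra $\log m$ factor sitting where the lemma has the constant $36$. (That weaker version would still suffice for \thmref{thm:CNN_upper_bound} up to polylogarithmic factors, but it is not the lemma.) The form of the bound --- a prefactor $36a^2b^2/\epsilon^2$ free of $m$ multiplying the log of an alphabet of size $2m\lceil 4ab/\epsilon+2\rceil+1$ --- is the signature of the argument actually used by \citet{zhang2002covering}, which is deterministic and greedy rather than probabilistic: the cover consists of all endpoints of at most $36a^2b^2/\epsilon^2$ perceptron-style updates $\hat\bw\leftarrow\hat\bw+\eta\,\varsigma\,\bx_j$, each step chosen from a menu of $2m\lceil 4ab/\epsilon+2\rceil+1$ options (an index $j\in[m]$, a sign $\varsigma$, a quantized step size $\eta$, or a no-op). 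Whenever some $|\bx_i^\top(\bw-\hat\bw)|>\epsilon$, the appropriate update decreases the potential $\norm{\bw-\hat\bw}^2$ by at least $\Omega(\epsilon^2/b^2)$, so the mistake bound caps the number of steps at $O(a^2b^2/\epsilon^2)$ with no union bound and hence no extra $\log m$. To salvage a probabilistic proof you would need atoms of range $ab/\sqrt{\log m}$, which your construction does not supply.
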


\begin{lemma}[E.g., \cite{daniely2019generalization}]\label{lem:from_daniely}
	Let $C>0$ and let $\Fcal$ be a class of $C$-bounded functions from $\Xcal$ to $\reals$, i.e., 
	$| f(\bx) | \leq C$ for all $f \in \Fcal$ and $\bx \in \Xcal$.
	Then, for every integer $M \geq 1$ we have
	\[
		\Rcal_m(\Fcal) \leq C 2^{-M} + \frac{6C}{\sqrt{m}} \sum_{k=1}^M 2^{-k} \sqrt{\log \Ncal_2(\Fcal, C2^{-k}, m)}~.
	\]
\end{lemma}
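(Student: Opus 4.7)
The proof proceeds by a chaining argument, using Lemma \ref{lem:from_daniely} on top of a covering number bound obtained by reducing to linear predictors over an expanded dataset. The first observation is that functions in $\Hcal_{B,n,d}^{\sigma,\rho,\Phi}$ are uniformly bounded: since $\rho(\mathbf{0})=0$, $\sigma(0)=0$, $\rho$ is $1$-Lipschitz in $\ell_\infty$, $\sigma$ is $L$-Lipschitz, and $\norm{\phi_j(\bx)} \leq b_x$, we have $|\rho\circ\sigma(W\bx)| \leq \max_j |\sigma(\bw^\top \phi_j(\bx))| \leq L\max_j |\bw^\top\phi_j(\bx)| \leq LB b_x$, where we also used that any row of a $W$ conforming to $\Phi$ is just an embedding of $\bw$, so $\norm{\bw}\leq\norm{W}\leq B$. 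Thus we can take $C=LBb_x$ in Lemma \ref{lem:from_daniely}.

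The main step is bounding $\log\Ncal_2(\Hcal_{B,n,d}^{\sigma,\rho,\Phi},\epsilon',m)$ for arbitrary $\epsilon'>0$. Given two filters $\bw,\bw'$ with associated networks $h,h'$, the composition of Lipschitz properties of $\rho$ and $\sigma$ yields $|h(\bx_i)-h'(\bx_i)| \leq L\max_j|(\bw-\bw')^\top \phi_j(\bx_i)|$. Hence, given any $\ell_\infty$-cover $V\subseteq\reals^{mn}$ (at scale $\epsilon'/L$) of the linear class $\{\bz\mapsto \bw^\top\bz: \norm{\bw}\leq B\}$ over the \emph{expanded} dataset $\{\phi_j(\bx_i)\}_{i\in[m],j\in[n]}$ of $mn$ points each of norm at most $b_x$, we can lift it to an $\ell_\infty$-cover of $\Hcal_{B,n,d}^{\sigma,\rho,\Phi}$ at scale $\epsilon'$ over $\{\bx_i\}_{i=1}^m$ by defining $\tilde h_i:=\rho(\sigma(v_{i,1}),\ldots,\sigma(v_{i,n}))$. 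Applying Lemma \ref{lem:from_zhang} with $a=B$, $b=b_x$, sample size $mn$, and scale $\epsilon'/L$, we obtain
\[
\log\Ncal_2(\Hcal_{B,n,d}^{\sigma,\rho,\Phi},\epsilon',m)~\leq~\log\Ncal_\infty(\Hcal_{B,n,d}^{\sigma,\rho,\Phi},\epsilon',m)~\leq~ \frac{36L^2B^2b_x^2}{\epsilon'^2}\log\!\left(2mn\lceil 4LBb_x/\epsilon'+2\rceil+1\right).
\]

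The final step is plugging this into Lemma \ref{lem:from_daniely} with $\epsilon'=C2^{-k}=LBb_x 2^{-k}$ and choosing $M=\lceil\log_2 m\rceil$. The key cancellation is that $\frac{36L^2B^2b_x^2}{(LBb_x2^{-k})^2}=36\cdot 4^k$, so $\sqrt{\log\Ncal_2(\Hcal,C2^{-k},m)}\leq 6\cdot 2^k \sqrt{\log(O(mn\cdot 2^k))}$. Multiplying by $2^{-k}$ turns each summand into $O(\sqrt{\log(mn\cdot 2^M)})=O(\sqrt{\log(mn)})$ (since $2^M\leq 2m$), and summing over $k\in[M]$ gives a multiplicative $M=O(\log m)$ factor. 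Thus
\[
\Rcal_m(\Hcal_{B,n,d}^{\sigma,\rho,\Phi})~\leq~ \frac{C}{m}+O\!\left(\frac{LBb_x\,\log(m)\sqrt{\log(mn)}}{\sqrt{m}}\right),
\]
and setting this bound $\leq\epsilon$ and solving for $m$ yields $m\geq c(LBb_x/\epsilon)^2\log^2(m)\log(mn)$ as claimed. The main conceptual step is the reduction in the second paragraph; everything else is essentially bookkeeping in the Dudley integral. I expect no serious obstacle, though care is needed in verifying that the $\sqrt{\log}$ factor can be pulled out of the sum (which uses $k\leq M\leq\log_2 m+1$ to uniformly bound $\log(mn\cdot 2^k)$ by $O(\log(mn))$).
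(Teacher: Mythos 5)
Your proposal does not prove the statement it was supposed to prove. The statement is \lemref{lem:from_daniely} itself --- the generic chaining (Dudley-type) bound relating the Rademacher complexity of an arbitrary $C$-bounded class $\Fcal$ to its empirical $L_2$ covering numbers at the dyadic scales $C2^{-k}$. What you have written is instead a proof of \thmref{thm:CNN_upper_bound}, the Rademacher bound for the convolutional class $\Hcal_{B,n,d}^{\sigma,\rho,\Phi}$, and in it you explicitly \emph{invoke} \lemref{lem:from_daniely} as a black box (``using Lemma~\ref{lem:from_daniely} on top of a covering number bound\ldots''). Your argument for that theorem --- reducing to linear predictors on the expanded dataset $\{\phi_j(\bx_i)\}_{i\in[m],j\in[n]}$, applying \lemref{lem:from_zhang} at scale $\epsilon'/L$, and plugging the result into the chaining bound with $M=\lceil\log\sqrt{m}\rceil$ --- does match the paper's proof of \thmref{thm:CNN_upper_bound} quite closely, but it presupposes the very lemma under discussion rather than establishing it. As a proof of the stated lemma, the submission is therefore entirely missing.

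For the record, the missing argument is the standard successive-approximation one: fix $\bx_1,\ldots,\bx_m$ and for each $k=0,1,\ldots,M$ pick an empirical $L_2$ cover $V_k$ of $\Fcal$ at scale $\epsilon_k=C2^{-k}$ with $|V_k|=\Ncal_2(\Fcal,\epsilon_k;\bx_1,\ldots,\bx_m)$, taking $V_0=\{\mathbf{0}\}$ (valid since $\Fcal$ is $C$-bounded). Writing $\hat f_k$ for the nearest element of $V_k$ to $f$, decompose $f=(f-\hat f_M)+\sum_{k=1}^M(\hat f_k-\hat f_{k-1})$. The residual term contributes at most $\epsilon_M=C2^{-M}$ to the Rademacher average by Cauchy--Schwarz; each difference class $\{\hat f_k-\hat f_{k-1}: f\in\Fcal\}$ is finite of cardinality at most $|V_k|\,|V_{k-1}|$ with empirical $L_2$ radius at most $\epsilon_k+\epsilon_{k-1}=3\epsilon_k$, so Massart's finite-class lemma bounds its contribution by $3\epsilon_k\sqrt{2\log(|V_k||V_{k-1}|)}/\sqrt{m}\leq 6C2^{-k}\sqrt{\log\Ncal_2(\Fcal,C2^{-k},m)}/\sqrt{m}$; summing over $k$ gives exactly the claimed inequality. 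None of this chaining machinery appears in your submission, so there is a genuine and complete gap with respect to the assigned statement.
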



We are now ready to prove the theorem.
For $i \in [m],~j \in [n]$ we denote $\bx'_{i,j} =  \phi_j(\bx_i) \in \reals^{n'}$.
Let $\Xcal_{n'} = \{\bx' \in \reals^{n'}: \norm{\bx'} \leq b_x\}$, and let 
\[
	\Fcal := \{f \in \reals^{\Xcal_{n'}}~:~f(\bx') = \bw^\top \bx',~\bw \in \reals^{n'},~\norm{\bw} \leq B \}~.
\]
Let $V \subseteq \reals^{mn}$ be a set 
of size at most $\Ncal_\infty(\Fcal,\epsilon / L,mn)$, such that for all $f \in \Fcal$ there is $\bv \in V$ that satisfies the following: Letting $v_{i,j} := v_{(i-1)n+j}$, we have $| f(\bx'_{i,j}) - v_{i,j} | \leq \epsilon / L$ for all $i \in [m],~j \in [n]$.

We define
\[
	U := \{\bu \in \reals^m~:~\text{there is } \bv \in V \text{ s.t. } u_i = \rho \circ \sigma (v_{i,1},\ldots,v_{i,n}) = \rho\left( \sigma(v_{i,1}), \ldots, \sigma(v_{i,n})\right) \text{ for all } i \in [m] \}~.\] 
	Note that $|U| \leq |V|$. 
	Let $h \in  \Hcal_{B,n,d}^{\sigma, \rho, \Phi}$ and suppose that the network $h$ has a filter $\bw \in \reals^{n'}$. Let $W$ be the weight matrix that corresponds to $\Phi$ and $\bw$. Thus, we have $\norm{W} \leq B$. Let $\bx \in \reals^d$ such that $\phi_1(\bx)=\frac{\bw}{\norm{\bw}}$ and $x_k=0$ for every coordinate $k$ that does not appear in $\phi_1$. That is, $\bx$ is a vector of norm $1$ such that $(W \bx)_1 = \bw^\top \phi_1(\bx) = \norm{\bw}$. Therefore, $\norm{W \bx} \geq (W \bx)_1 =  \norm{\bw}$, and thus $B \geq \norm{W} \geq \norm{\bw}$.
	Let $f$ be the function in $\Fcal$ that corresponds to $\bw$, and let $\bv \in V$ such that $| f(\bx'_{i,j}) - v_{i,j} | \leq \epsilon/L$ for all $i \in [m],~j \in [n]$. Let $\bu \in U$ that corresponds to $\bv$, namely, $u_i = \rho \circ \sigma (v_{i,1},\ldots,v_{i,n})$ for all $i \in [m]$. Note that $| h(\bx_i) - u_i | \leq \epsilon$ for all $i \in [m]$.
Indeed, we have
\[
	| h(\bx_i) - u_i | 
	= \left| \rho \circ \sigma \left(f(\bx'_{i,1}), \ldots, f(\bx'_{i,n})\right) -  \rho \circ \sigma \left( v_{i,1}, \ldots,   v_{i,n}\right) \right|
	\leq  L \cdot \max_{j \in [n]} \left| f(\bx'_{i,j}) - v_{i,j} \right|
	\leq L \cdot \frac{\epsilon}{L}
	= \epsilon~,
\]
where the first inequality follows from the $L$-Lipschitzness of $\rho \circ \sigma$ w.r.t. $\ell_\infty$.
Hence, 
\[
	\Ncal_\infty \left( \Hcal_{B,n,d}^{\sigma, \rho, \Phi}, \epsilon, m \right) 
	\leq | U | 
	\leq | V | \leq \Ncal_\infty(\Fcal, \epsilon/L, mn)~.
\]
Combining the above with \lemref{lem:from_zhang}, and using the fact that the $\Ncal_2$ covering number is at most the $\Ncal_\infty$ covering number (cf. \cite{anthony1999neural}), we get
\begin{align}\label{eq:covering_bound}
	\log \Ncal_2 \left( \Hcal_{B,n,d}^{\sigma, \rho, \Phi}, \epsilon, m \right)  
	&\leq \log \Ncal_\infty \left( \Hcal_{B,n,d}^{\sigma, \rho, \Phi}, \epsilon, m \right)  \nonumber
	\\
	&\leq  \log \Ncal_\infty(\Fcal, \epsilon/L, mn)  \nonumber
	\\
	&\leq \frac{36 b_x^2 B^2}{(\epsilon/L)^2} \log\left( 2mn \lceil 4 b_x B/(\epsilon/L) +2 \rceil +1 \right)~.
\end{align}

Note that for every $\bx \in \Xcal := \left\{\bx \in \reals^d~:~\norm{\phi_j(\bx)} \leq b_x \text{ for all } j \in [n]\right\}$ and $h \in  \Hcal_{B,n,d}^{\sigma, \rho, \Phi}$ we have $| h(\bx) | = | \rho(\sigma(\bw^\top \phi_1(\bx)), \ldots, \sigma(\bw^\top \phi_n(\bx))) | \leq L b_x B$, since $| \bw^\top \phi_j(\bx) | \leq B b_x$, the activation $\sigma$ is $L$-Lipschitz and satisfies $\sigma(0)=0$, and $\rho$ is $1$-Lipschitz w.r.t. $\ell_\infty$ and satisfies $\rho(\mathbf{0})=0$.
By \lemref{lem:from_daniely}, we conclude that 
\[
	\Rcal_m\left( \Hcal_{B,n,d}^{\sigma, \rho, \Phi} \right) 
	\leq L b_x B 2^{-M} + \frac{6 L b_x B}{\sqrt{m}} \sum_{\ell=1}^M 2^{-\ell} \sqrt{\log\Ncal_2\left(\Hcal_{B,n,d}^{\sigma, \rho, \Phi} , L b_x B 2^{-\ell}, m \right)}~,
\]
for every integer $M \geq 1$. By plugging-in $M=\lceil \log(\sqrt{m}) \rceil$ and the expression from \eqref{eq:covering_bound}, we get 
\begin{align*}
	\Rcal_m\left( \Hcal_{B,n,d}^{\sigma, \rho, \Phi} \right) 
	&\leq \frac{L b_x B}{\sqrt{m}} + \frac{6 L b_x B}{\sqrt{m}} \sum_{\ell=1}^{\lceil \log(\sqrt{m}) \rceil} 2^{-\ell} \sqrt{\frac{36 b_x^2 B^2}{(b_x B 2^{-\ell})^2} \log\left( 2mn \lceil 4 b_x B / (b_x B 2^{-\ell}) +2 \rceil +1 \right)}
	\\
	&= \frac{L b_x B}{\sqrt{m}} + \frac{36 L b_x B}{\sqrt{m}} \sum_{\ell=1}^{\lceil \log(\sqrt{m}) \rceil} \sqrt{\log\left( 2mn \lceil 4 \cdot 2^{\ell} +2 \rceil +1 \right)}
	\\
	&\leq \frac{L b_x B}{\sqrt{m}} + \frac{36 L b_x B}{\sqrt{m}} \lceil \log(\sqrt{m}) \rceil \cdot \sqrt{\log\left( 23mn \sqrt{m}\right)}~.
\end{align*}
Hence, for some universal constant $c'>0$ the above is at most
\[
	c' \cdot \frac{L b_x B \log(m) \sqrt{\log\left( mn \right)} }{\sqrt{m}}~.
\]
Requiring this to be at most $\epsilon$ and rearranging, the result follows.

\subsection{Proof of \thmref{thm:lower bound cnn}}

To help the reader track the main proof ideas, we first prove the claim for the case where $B=b_x=1$ and $\epsilon=1/2$ (in Subsection \ref{sec:lower bound cnn simple case}), and then extend the proof for arbitrary $B,b_x,\epsilon > 0$ in Subsection \ref{sec:lower bound cnn general case}.

\subsubsection{Proof for $B=b_x=1$ and $\epsilon=1/2$} \label{sec:lower bound cnn simple case}

Let $m=\log(n)$ and let $d = 3^m$. Consider $m$ points $\bx^1,\ldots,\bx^m$, where for every $i \in [m]$ the point $\bx^i \in \reals^d$ is a vectorization of an order-$m$ tensor $\hat{\bx}^i$ such that each component is indexed by $(j_1,\ldots,j_m) \in [3]^m$. We define the components $x^i_{j_1,\ldots,j_m}$ of $\hat{\bx}^i$ such that  $x^i_{j_1,\ldots,j_m}=1$ if $j_i=3$, and $j_r=2$ for all $r \neq i$, and $x^i_{j_1,\ldots,j_m}=0$ otherwise. 
Note that $\norm{\bx^i}=1$ for all $i \in [m]$. Consider patches of dimensions $2 \times \ldots \times 2$ and stride $1$. Thus, the set $\Phi$ corresponds to all the patches of dimensions $2 \times \ldots \times 2$ in the tensor. Note that there are $2^m=n$ such patches. Indeed, given an index $(j_1,\ldots,j_m) \in [2]^m$, we can define a patch which contains the indices $\left\{(j_1,\ldots,j_m) + (\Delta_1,\ldots,\Delta_m):~(\Delta_1,\ldots,\Delta_m) \in \{0,1\}^m \right\}$. We say that $(j_1,\ldots,j_m)$ is the \emph{base index} of this patch. Note that each $(j_1,\ldots,j_m) \in [2]^m$ is a base index of exactly one patch. Also, an index $(j_1,\ldots,j_m)$ which includes some $r \in [m]$ with $j_r=3$ does not induce a patch of the form $\left\{(j_1,\ldots,j_m) + (\Delta_1,\ldots,\Delta_m):~(\Delta_1,\ldots,\Delta_m) \in \{0,1\}^m \right\}$, since for $\Delta_r=1$ we get an invalid index.

We show that for any $\by \in \{0,1\}^m$ we can find a filter $\bw$, such that $\bw$ is an order-$m$ tensor of dimensions $2 \times \ldots \times 2$ and satisfies the following. Let $N_\bw$ be the neural network that consists of a convolutional layer with the patches $\Phi$ and the filter $\bw$, followed by a max-pooling layer. Then, $N_\bw(\bx^i)=y_i$ for all $i \in [m]$.  Thus, we can shatter $\bx^1,\ldots,\bx^m$ with margin $\epsilon=1/2$. Moreover, the spectral norm of the matrix $W$ that corresponds to the convolutional layer is at most $1$.


Consider the filter $\bw$ of dimensions $2 \times \ldots \times 2$ such that $w_{j_1,\ldots,j_m}=1$ if $(j_1,\ldots,j_m) = \mathbf{1} + \by$, and $w_{j_1,\ldots,j_m}=0$ otherwise.
We now show that 
$N_\bw(\bx^i)=y_i$ for all $i \in [m]$. 
Since the filter $\bw$ has a single non-zero component, then the inner product between $\bw$ and a patch of $\bx^i$ is non-zero iff the patch of $\bx^i$ has a non-zero component in the appropriate position. More precisely, for a patch with base index $(j_1,\ldots,j_m)$, the inner product between the components of $\bx^i$ in the indices of the patch and the filter $\bw$ is $1$ iff $x^i_{(j_1,\ldots,j_m) + \by} = 1$, and otherwise the inner product is $0$. Since $x^i_{q_1,\ldots,q_m} = 1$ iff $q_i=3$ and $q_r=2$ for $r \neq i$, then  $x^i_{(j_1,\ldots,j_m) + \by} = 1$ iff $j_i = 3-y_i$ and $j_r = 2-y_r$ for $r \neq i$. Now, if $y_i=0$ then there is no patch such that the base index satisfies $j_i = 3-y_i = 3$, since all base indices are in $[2]^m$, and therefore $N_\bw(\bx^i)=0$. If $y_i=1$ then the patch whose base index satisfies $j_i = 3-y_i$ and $j_r = 2-y_r$ for $r \neq i$ gives output $1$ (and all other patches give output $0$) and hence $N_\bw(\bx^i)=1$. Thus, we have $N_\bw(\bx^i)=y_i$ as required.

For example, consider the case where $m=2$. Then, the tensor $\hat{\bx}^1$ is the matrix
\[
\hat{\bx}^1 = \begin{bmatrix} 
0 & 0 & 0 \\
0 & 0 & 0 \\
0 & 1 & 0
\end{bmatrix}. 
\]
For $\by = (1,1)^\top$ we have $\bw = \begin{bmatrix} 0 & 0 \\ 0 &1 \end{bmatrix}$ and hence the patch with base index $(2,1)$ gives output $1$.  For $\by = (1,0)^\top$ we have $\bw = \begin{bmatrix} 0 & 0 \\ 1 & 0 \end{bmatrix}$ and hence the patch with base index $(2,2)$ gives output $1$. However, for  $\by = (0,1)^\top$ we have $\bw = \begin{bmatrix} 0 & 1 \\ 0 & 0 \end{bmatrix}$ and hence there is no patch that gives output $1$. Thus, in all the above cases we have $N_\bw(\bx^1)=y_1$.

It remains to show that the spectral norm of the matrix $W$ that corresponds to the convolutional layer with the filter $\bw$ is at most $1$. Thus, we show that for every input $\bx \in \reals^d$ with $\norm{\bx}=1$ the inputs to the hidden layer is a vector with norm at most $1$. We view $\bx$ as the vectorization of a tensor $\hat{\bx}$ with components $x_{j_1,\ldots,j_m}$ for $(j_1,\ldots,j_m) \in [3]^m$. Since the filter $\bw$ contains a single $1$-component and all other components are $0$, then the input to each hidden neuron is a different component of $\hat{\bx}$. More precisely, since the filter $\bw$ contains $1$ at index $\mathbf{1}+\by$ then for the patch with base index $(j_1,\ldots,j_m)$ the corresponding hidden neuron has input $x_{(j_1,\ldots,j_m)+\by}$. Note that each hidden neuron corresponds to a different base index and hence the input to each hidden neuron is a different component of $\hat{\bx}$. Therefore, the norm of the vector whose components are the inputs to the hidden neurons is at most the norm of the input $\bx$, and hence it is at most $1$.

\subsubsection{Proof for arbitrary $B,b_x,\epsilon>0$}\label{sec:lower bound cnn general case}

Let $m =  \left( \frac{b_x B}{2\epsilon} \right)^2 \cdot \log(n)$ and let $d= \left( \frac{b_x B}{2\epsilon} \right)^2 \cdot 3^{\log(n)}$. Let $m'=\log(n)$ and let $L= \left( \frac{b_x B}{2\epsilon} \right)^2$. Consider $m$ points $\bx^1,\ldots,\bx^m$, where for every $i \in [m]$ the point $\bx^i \in \reals^d$ is a vectorization of a tensor $\hat{\bx}^i$ of order $m'+1$, such that each component is indexed by $(j_1,\ldots,j_{m'},\ell) \in [3]^{m'} \times [L]$.  Consider a partition of $[m]$ into $L$ disjoint susets $S_1,\ldots,S_L$, each of size $m/L = m'$.

We define the components $x^i_{j_1,\ldots,j_{m'},\ell}$ of $\hat{\bx}^i$ as follows: Suppose that $i \in S_r := \{k_1,\ldots,k_{m'} \}$ for some $r \in L$, and that $i=k_t$, i.e., $i$ is the $t$-th element in the subset $S_r$. For every $\ell \neq r$ we define $x^i_{j_1,\ldots,j_{m'},\ell}=0$ for every $j_1,\ldots,j_{m'} \in [3]^{m'}$, namely, if $\ell$ does not correspond to the subset of $i$ then the component is $0$. For $\ell = r$ the component $x^i_{j_1,\ldots,j_{m'},\ell}$ is defined in a similar way to the tensor $\hat{\bx}^i$ from Subsection~\ref{sec:lower bound cnn simple case}, but with respect to the subset $S_r$ and at scale $b_x$. Formally, for $\ell = r$ we have $x^i_{j_1,\ldots,j_{m'},\ell}=b_x$ if $j_t=3$, and $j_k=2$ for all $k \neq t$, and $x^i_{j_1,\ldots,j_{m'},\ell}=0$ otherwise.
Note that $\norm{\bx^i}=b_x$ for all $i \in [m]$.

\begin{sloppypar}
Consider patches of dimensions $2 \times \ldots \times 2 \times L$ and stride $1$. Thus, the set $\Phi$ corresponds to all the patches of dimensions $2 \times \ldots \times 2 \times L$ in the tensor. Note that since the last dimension is $L$, then the filter can ``move" only in the first $m'$ dimensions. Also, note that there are $2^{m'}=n$ such patches. Indeed, given $(j_1,\ldots,j_{m'}) \in [2]^{m'}$, we can define a patch which contains the indices $\left\{(j_1,\ldots,j_{m'},0) + (\Delta_1,\ldots,\Delta_{m'},\Delta_{m'+1}):~(\Delta_1,\ldots,\Delta_{m'}) \in \{0,1\}^{m'},~\Delta_{m'+1} \in [L] \right\}$. We say that $(j_1,\ldots,j_{m'})$ is the \emph{base index} of this patch. Note that each $(j_1,\ldots,j_{m'}) \in [2]^{m'}$ is a base index of exactly one patch. Also, if $(j_1,\ldots,j_{m'})$ includes some $r \in [m']$ with $j_r=3$ then it does not induce a patch of the form $\left\{(j_1,\ldots,j_{m'},0) + (\Delta_1,\ldots,\Delta_{m'},\Delta_{m'+1}):~(\Delta_1,\ldots,\Delta_{m'}) \in \{0,1\}^{m'},~\Delta_{m'+1} \in [L] \right\}$, since for $\Delta_r=1$ we get an invalid index.
\end{sloppypar}

We show that for any $\by \in \{0,1\}^m$ we can find a filter $\bw$, such that $\bw$ is an order-$(m'+1)$ tensor of dimensions $2 \times \ldots \times 2 \times L$ and satisfies the following. Let $N_\bw$ be the neural network that consists of a convolutional layer with the patches $\Phi$ and the filter $\bw$, followed by a max-pooling layer. Then, for all $i \in [m]$ we have: if $y_i=0$ then $N_\bw(\bx^i)=0$, and if $y_i=1$ then $N_\bw(\bx^i)=2\epsilon$.  Thus, we can shatter $\bx^1,\ldots,\bx^m$ with margin $\epsilon$. Moreover, the spectral norm of the matrix $W$ that corresponds to the convolutional layer is at most $B$.

We now define the filter $\bw$ of dimensions $2 \times \ldots \times 2 \times L$. For every $\ell \in [L]$ we define the components $w_{j_1,\ldots,j_{m'},\ell}$ as follows. Let $\by_{S_\ell} \in \{0,1\}^{m'}$ be the restriction of $\by$ to the indices in $S_\ell$. Then, $w_{j_1,\ldots,j_{m'},\ell}=\frac{2\epsilon}{b_x}$ if 
$(j_1,\ldots,j_{m'}) = \mathbf{1} + \by_{S_\ell}$, and $w_{j_1,\ldots,j_{m'},\ell}=0$ otherwise.
We show that for all $i \in [m]$, if $y_i=0$ then $N_\bw(\bx^i)=0$, and if $y_i=1$ then $N_\bw(\bx^i)=2\epsilon$.
Suppose that $i \in S_r := \{k_1,\ldots,k_{m'} \}$ for some $r \in L$, and that $i=k_t$, i.e., $i$ is the $t$-th element in the subset $S_r$. Then, the tensor $\hat{\bx}^i$ has a non-zero component only at $x^i_{j_1,\ldots,j_{m'},r}$ with $j_t=3$, and $j_s=2$ for all $s \neq t$. Moreover, the filter $\bw$ has a non-zero component at index $(q_1,\ldots,q_{m'},r)$ iff $(q_1,\ldots,q_{m'}) = \mathbf{1} + \by_{S_r}$.
Hence, the inner product between $\bw$ and a patch of $\bx^i$ is non-zero iff the patch has a base index $(j_1,\ldots,j_{m'})$ such that $(j_1,\ldots,j_{m'}) + \by_{S_r} = (p_1,\ldots,p_{m'})$ where $p_t=3$, and $p_s=2$ for all $s \neq t$. If $y_i=0$ then the $t$-th component of $\by_{S_r}$ is $0$, and there is no patch such that the base index satisfies $j_t + (\by_{S_r})_t = j_t + 0 = p_t = 3$. Therefore, $N_\bw(\bx^i)=0$. If $y_i=1$ then the patch whose base index satisfies $j_t = 3 - (\by_{S_r})_t = 3 - 1 = 2$, and $j_s = 2 - (\by_{S_r})_s$ for $s \neq t$, gives output $\frac{2\epsilon}{b_x} \cdot b_x = 2\epsilon$ (and all other patches give output $0$).

It remains to show that the spectral norm of the matrix $W$ that corresponds to the convolutional layer with the filter $\bw$ is at most $B$. Thus, we show that for every input $\bx \in \reals^d$ with $\norm{\bx}=1$ the inputs to the hidden layer are a vector with norm at most $B$. We view $\bx$ as the vectorization of a tensor $\hat{\bx}$ with components $x_{j_1,\ldots,j_{m'},\ell}$ for $(j_1,\ldots,j_{m'},\ell) \in [3]^{m'} \times [L]$. The inner product between a patch of $\bx$ and the filter $\bw$ can be written as 
\[
	\sum_{\ell \in [L]}\frac{2\epsilon}{b_x} \cdot x_{q^{(\ell)}_1,\ldots,q^{(\ell)}_{m'},\ell}~.
\] 
Thus, for each $\ell$ there is a single index of $\hat{\bx}$ that contributes to the inner product, since for every $\ell$ the filter $\bw$ has a single non-zero component, which equals $\frac{2\epsilon}{b_x}$. By Cauchy–Schwarz, the above sum is at most 
\begin{equation} \label{eq:bound spectral}
	\frac{2\epsilon}{b_x} \cdot \sqrt{L} \cdot \sqrt{\sum_{\ell \in [L]} x^2_{q^{(\ell)}_1,\ldots,q^{(\ell)}_{m'},\ell}}
	= \frac{2\epsilon}{b_x} \cdot \frac{b_x B}{2\epsilon} \cdot \sqrt{\sum_{\ell \in [L]} x^2_{q^{(\ell)}_1,\ldots,q^{(\ell)}_{m'},\ell}}
	= B \cdot \sqrt{\sum_{\ell \in [L]} x^2_{q^{(\ell)}_1,\ldots,q^{(\ell)}_{m'},\ell}}~.
\end{equation}
Hence, the input to the hidden neuron that corresponds to the patch is bounded by the above expression. 
Moreover, since for every $\ell \in [L]$ the filter $\bw$ has a single non-zero component such that the last dimension of its index is $\ell$, then for every two patches with different base indices, the bound in the above expression includes different indices of $\hat{\bx}$. Namely, if the inner product between one patch of $\bx$ and the filter $\bw$ is $\sum_{\ell \in [L]}\frac{2\epsilon}{b_x} \cdot x_{q^{(\ell)}_1,\ldots,q^{(\ell)}_{m'},\ell}$ and the inner product between another patch of $\bx$ and the filter $\bw$ is $\sum_{\ell \in [L]}\frac{2\epsilon}{b_x} \cdot x_{p^{(\ell)}_1,\ldots,p^{(\ell)}_{m'},\ell}$, then for every $\ell$ we have $(q^{(\ell)}_1,\ldots,q^{(\ell)}_{m'}) \neq (p^{(\ell)}_1,\ldots,p^{(\ell)}_{m'})$. Since by \eqref{eq:bound spectral} the square of the input to each hidden neuron can be bounded by $B^2 \cdot \sum_{\ell \in [L]} x^2_{q^{(\ell)}_1,\ldots,q^{(\ell)}_{m'},\ell}$ for some subset $\left\{x_{q^{(\ell)}_1,\ldots,q^{(\ell)}_{m'},\ell}\right\}_{\ell \in [L]}$ of components, and since for each two hidden neurons these subsets are disjoint, then the norm of the vector of inputs to the hidden neurons can be bounded by 
\[
	\sqrt{B^2 \cdot \sum_{k \in [d]} x_k^2} 
	\leq	\sqrt{B^2 \cdot 1}
	= B~.
\]

\end{document}